\journal{Artificial Intelligence}
\begin{document}

\begin{frontmatter}

\title{Algebras of actions in an agent's representations of the world}

\author[1]{Alexander Dean}
\ead{alexander.dean@city.ac.uk}

\author[1]{Eduardo Alonso \corref{cor1}}
\ead{e.alonso@city.ac.uk}
\cortext[cor1]{Corresponding author.}

\author[1]{Esther Mondragón}
\ead{e.mondragon@city.ac.uk}

\affiliation[1]{organization={Artificial Intelligence Research Centre (CitAI), Department of Computer Science,\\ 
City St George's, University of London},
addressline={Northampton Square},
postcode={EC1V 0HB},
city={London},
country={UK}}

\begin{abstract}
    Learning efficient representations allows robust processing of data, data that can then be generalised across different tasks and domains, and it is thus paramount in various areas of Artificial Intelligence, including computer vision, natural language processing and reinforcement learning, among others. Within the context of reinforcement learning, we propose in this paper a mathematical framework to learn representations by extracting the algebra of the transformations of worlds from the perspective of an agent. As a starting point, we use our framework to reproduce representations from the symmetry-based disentangled representation learning (SBDRL) formalism proposed by \cite{Higgins2018} and prove that, although useful, they are restricted to transformations that respond to the properties of algebraic groups. We then generalise two important results of SBDRL –the equivariance condition and the disentangling definition– from only working with group-based symmetry representations to working with representations capturing the transformation properties of worlds for any algebra, using examples common in reinforcement learning and generated by an algorithm that computes their corresponding Cayley tables. Finally, we combine our generalised equivariance condition and our generalised disentangling definition to show that disentangled sub-algebras can each have their own individual equivariance conditions, which can be treated independently, using category theory. In so doing, our framework offers a rich formal tool to represent different types of symmetry transformations in reinforcement learning, extending the scope of previous proposals and providing Artificial Intelligence developers with a sound foundation to implement efficient applications.
\end{abstract}

\begin{keyword}
    Representation learning \sep Agents \sep Disentanglement \sep Symmetries \sep Algebra.
\end{keyword}

\end{frontmatter}

\section{Introduction}

Artificial intelligence (AI) has progressed significantly in recent years due to massive increases in available computational power that facilitates the development and training of data-intensive deep learning algorithms \citep{Amodei2018,thompson2020computational}. However, the best-performing learning algorithms often suffer from poor data efficiency and lack the levels of robustness and generalisation that are characteristic of nature-based intelligence \citep{flesch2022orthogonal,Bernardi2020,ito2022compositional,momennejad2020learning,lehnert2020reward,alonso2013associative,kokkola2019double}.
Contrarily, the brain appears to solve complex tasks by learning high-level, low-dimensional \textit{representations} that focus on aspects of the environment that are relevant 
\citep{Niv2019,mack2020ventromedial,jha2023extracting,op2001inferotemporal,shepard1987toward,edelman1997learning}.

In this paper, we propose a formal framework that aims to address how “good” representations can be learned within the context of reinforcement learning. In reinforcement learning, an agent learns an optimal policy by interacting with the environment, the world, by trial an error \citep{sutton2018reinforcement,li2017deep,arulkumaran2017deep,nian2020review}. As a result of the agent’s actions, the state of the environment changes, and the agent receives a numerical signal, a reward. The task of the agent is to learn the sequence of actions that maximises the expected cumulative reward by exploring the environment and exploiting the knowledge so acquired. For this process to be efficient, it is thus crucial that the agent learns “good” representations that guide it through the state-action space. 

But what makes a “good” state representation? Higgins et al (\cite{Higgins2018}) argue that the \textit{symmetries} of the world are important structures that should be present in the representation of that world. The study of symmetries shifts the centre of attention from studying objects directly to studying the transformations of those objects and using the information about these transformations to discover properties about the objects themselves \citep{Higgins2022}. The exploitation of symmetries has led to many successful deep-learning architectures. Examples include convolutional layers \citep{LeCun1995}, which utilise transitional symmetries to outperform humans in image recognition tasks \citep{Dai2021}, and graph neural networks \citep{Battaglia2018}, which utilise the group of permutations. Not only can symmetries provide a useful indicator of what an agent has learned, but incorporating symmetries into learning algorithms regularly reduces the size of the problem space, leading to greater learning efficiency and improved generalisation \citep{Higgins2022}. In fact, it has been shown that a large majority of neural network architectures can be described as stacking layers that deal with different symmetries \citep{Bronstein2021}. The main methods used to integrate symmetries into a representation are to build symmetries into the architecture of learning algorithms \citep{Baek2021,Batzner2022}, use data augmentation that encourages the model to learn symmetries \citep{Chen2020,Kohler2020}, or to adjust the model’s learning objective to encourage the representation to exhibit certain symmetries \citep{burgess2018understanding,Jaderberg2016}.

There are two main types of symmetries that are used in AI: \textit{invariant symmetries}, where a representation does not change when certain transformations are applied to it, and \textit{equivariant symmetries}, where the representation reflects the symmetries of the world. Historically, the learning of representations that are invariant to certain transformations has been a fruitful line of research \citep{Krizhevsky2012,Hu2018,Silver2016,Espeholt2018}. In building these invariant representations, the agent effectively learns to ignore them since such representations remain unaffected by the transformation. It has been suggested that this approach can lead to more narrow intelligence, where an agent becomes good at solving a small set of tasks but struggles with data efficiency and generalisation when tasked with new learning problems \citep{marcus2018deep,Cobbe2019}. Instead of ignoring certain transformations, the equivariant approach attempts to preserve symmetry transformations in the agent’s representations in such a way that they match the symmetry transformations of the world. It has been hypothesised that the equivariant approach is likely to produce representations that can be reused to solve a more diverse range of tasks because no transformations are discarded \cite{Higgins2022}. Equivariant symmetry approaches are commonly linked with \textit{disentangling representations} \citep{Bengio2013}, in which the agent’s representation is separated into subspaces that are invariant to different transformations. Disentangled representation learning, which aims to produce representations that separate the underlying structure of the world into disjoint parts, has been shown to improve the data efficiency of learning \citep{raffin2019decoupling,wang2022disentangled}.

Inspired by their use in physics, symmetry-based disentangled representations (SBDRs) were proposed as a formal mathematical definition of disentangled representations \citep{Higgins2018}. SBDRs are built on the assumption that the symmetries of a state of the world display important aspects of that world that need to be preserved in an agent’s internal representation (i.e., the symmetries that are present in the world state should also be present in the agent’s internal representation state). Higgins et al describe symmetries of the world state as \textit{“transformations that change only some properties of the underlying world state, while leaving all other properties invariant”} \cite [page 1]{Higgins2018}. For example, the \textit{y}-coordinate of an agent moving parallel to the \textit{x}-axis on a 2D Euclidean plane does not change. Symmetry-based disentangled representation learning (SBDRL) has gained traction in AI in recent years \citep{Park2022learning,Quessard2020learning,Miyato2022unsupervised,Wang2022surprising,Keurti2023homomorphism,Zhu2021commutative,Wang2021self,Pfau2020disentangling,Mercatali2022,Marchetti2023}. However, SBDRL only considers actions that form groups and so cannot take into account, for example, irreversible actions \citep{Higgins2018}. Besides, \citep{caselles2019symmetry} showed that a symmetry-based representation cannot be learned using only a training set composed of the observations of world states; instead, a training set composed of transitions between world states, as well as the observations of the world states, is required. In other words, SBDRL requires the agent to interact with the world. 

We agree with \citep{Higgins2018} that symmetry transformations are important structures to include in an agent’s representation, but take their work one step further: along with \citep{caselles2019symmetry} we posit that the relationships of transformations of the world due to the actions of the agent should be included in the agent’s representations of the world, but, unlike \citep{caselles2019symmetry}, we do not constraint such transformations to symmetries that rely on actions to form algebraic groups exclusively. In this paper, we show that an agent’s representation of a world would lose important information if only transformations of the actions of an agent that form groups are included, and demonstrate that there exist features that cause transformations that do not form group structures. We believe that considering these transformations in the agent’s representation of the world has the potential to build powerful learning mechanisms.

In short, we aim to help answer the question of which features should be present in a ‘good’ representation by, as suggested by \citep{Higgins2018}, looking at the transformation properties of worlds. However, while \citep{Higgins2018} only considered a limited type of symmetry transformations, those that are formalised as groups, we aim to go further and consider the full algebra of world transitions. We propose a mathematical framework to describe the transformations of the world, thereby formally characterising the features we expect to find in the representations of an artificial agent. More specifically, our contributions are as follows:

\begin{enumerate}
    \item We propose a general mathematical framework for describing the transformation of worlds due to the actions of an agent that can interact with the world.
    \item We derive the SBDRs proposed by \citep{Higgins2018} from our framework and, in doing so, identify the limitations of SBDRs in their current form.
    \item We use our framework to explore the structure of the transformations of worlds for classes of worlds containing features found in common reinforcement learning scenarios, and that go beyond symmetry groups and SBDRL. We also present the algorithm used to generate the algebra of the transformations of the world due to the actions of an agent.
    \item We generalise the equivariance condition and the definition of disentangling given by \citep{Higgins2018} to worlds that do not satisfy the conditions for SBDRs. This generalisation is performed using category theory.
\end{enumerate}

It should be emphasised that our framework is about learning representations of an agent interacting with their environment, independently of the reinforcement learning algorithm used in the process (say Q-learning, SARSA, PPO or any deep reinforcement learning algorithm). Also, the paper is formal in that it establishes a mathematical framework for representation learning. It responds to the need to formulate sound formalisms from which specific applications may follow. That is, our work is foundational.

The rest of the paper is structured as follows: In Section 2, we define our framework and then describe how it deals with generalised worlds, which consist of distinct world states connected by transitions that represent the dynamics of the world as a result of the actions of an agent. In Section 3, our framework is used to reproduce SBDRL. This is achieved by defining an equivalence relation that makes the actions of an agent equivalent if the actions produce the same outcome if performed while the world is in any given state. In Section 4, we apply our framework to worlds exhibiting common reinforcement learning scenarios that cannot be described fully using SBDRs and study the algebraic structures exhibited by the dynamics of these worlds. In Section 5, we generalise, using category theory, two important results of \citep{Higgins2018} –the equivariance condition and the disentangling definition– to worlds with transformations whose algebras do not fit into the SBDRL paradigm. We shall finish with conclusions and a discussion in Section \ref{sec:discussion and conclusion}.

\section{A mathematical framework for an agent in the world}\label{sec:Mathematical framework for an agent in an environment}

In this section, we introduce our general mathematical framework for formally describing the transformations of a world. First, we define world states and transitions. We then consider how agents are understood within our framework and how their actions are formalised as labelled transitions.

\subsection{Model of the world}

For the sake of simplicity, we consider a fully observable world consisting of a set of discrete states, which are distinguishable in some way, and a set of transitions between those states; these transitions convey the world dynamics (i.e., how the world can transform from one world state to another). This world can be represented by a directed multigraph, where the world states are the vertices of the graph and the transitions between states are arrows between the vertices. We will use this framework to reproduce the group action structure of the evolution of a world as an agent’s representation of it, as described by \citep{Higgins2018}. In so doing, we uncover the requirements for this group action structure to be present in the world.

\subsubsection{World states and world state transitions}

We believe that defining a world as a discrete set of world states with world state transitions between them is the most general definition of a world. Therefore, we take it as our starting point to define the algebra of the actions of an agent. 

\paragraph{Transitions}
We consider a directed multigraph $\mathscr{W} = (W, \hat{D}, s, t)$ where $W$ is a set of \textit{world states}, $\hat{D}$ is a set of \textit{minimum world state transitions}, and $s,t: \hat{D} \to W$; $s$ is called the \textit{source map} and $t$ is called the \textit{target map}.
For the remainder of the paper, we fix such a $(W, \hat{D}, s, t)$.
$\mathscr{W}$ is called a \textit{world}.

Minimum world state transitions are extended into a set $D$ of paths called \textit{world state transitions}: a path is a sequence of minimum world state transitions $d = \hat{d}_{n} \circ \hat{d}_{n-1} \circ ... \circ \hat{d}_{1}$ such that $t(\hat{d}_{i}) = s(\hat{d}_{i+1})$ for $i = 1, ..., n-1$.
We extend $s, t$ to $D$ as $s(d) = s(\hat{d}_{1})$, and $t(d) = t(\hat{d}_{n})$.
We also extend the composition operator $\circ$ to $D$ such that $d_{n} \circ d_{n-1} \circ ... \circ d_{1}$ is defined if $t({d}_{i}) = s({d}_{i+1})$ for $i = 1, ..., n-1$.
For $d \in D$ with $s(d) = w$ and $t(d) = w'$, we will often denote $d$ by $d: w \to w'$.

For the rest of the paper, we assume that, for each world state $w \in W$, there is a unique trivial world state transition $1_{w} \in \hat{D}$ with $s(1_{w}) = t(1_{w})$; the trivial transition $1_{w}$ is associated with the world being in state $w$ and then no change occurring due to the transition $1_{w}$.

\paragraph{Connected and disconnected worlds}
We now introduce connected and disconnected worlds.
Simply, a world $A$ is connected to a world $B$ if there is a transition from a world state in world $A$ to a world state in world $B$.
The concepts of connected and disconnected worlds are necessary for generality; we are only interested in the perspective of the agent and so only care about the world states and transitions that the agent can come into contact with.
Connected and disconnected worlds give us the language to describe and then disregard the parts of worlds that the agent will never explore and therefore are not relevant to the agent's representation.
For example, if an agent is in a maze and a section of the maze is inaccessible from the position that the agent is in, then that section of the maze would be disconnected from the section of the maze that the agent is in; if we want to study how the agent's representation evolves as it learns, it makes sense to disregard the disconnected section of the maze since the agent never comes into contact with it and so the disconnected section of the maze will not affect the agent's representation.

Formally, we first define a \textit{sub-world} $W'$ of a world $W$ as a subset $W' \subseteq W$ along with $D' = \{d \in D \mid s(d) \in W'\text{ and }t(d) \in W'\}$.
Note that a sub-world is a world.
A sub-world $W$ is \textit{connected to} a sub-world $W'$ if there exists a transition $d: w \to w'$ where $w \in W$ and $w' \in W'$; if no such transition exists, then $W$ is \textit{disconnected from} $W'$.
Similarly, a world state $w$ is \textit{connected to} a sub-world $W$ if there exists a transition $d: w \to w'$ where $w' \in W'$; if no such transition exists, then $w$ is \textit{disconnected from} $W'$.

\paragraph{Effect of transitions on world states}
We define $*$ as a partial function $D \times W \to W$ by $d * w = w'$ where $d: w \to w'$ and undefined otherwise.

\subsubsection{Example}

We consider a cyclical $2\times 2$ grid world, denoted by $\mathscr{W}_{c}$, containing an agent as shown in Figure \ref{fig:2x2-cyclical-grid-world-states}.
The transformations of $\mathscr{W}_{c}$ are due to an agent moving either up ($U$), down ($D$), left ($L$), right ($R$), or doing nothing ($1$).
The possible world states of $\mathscr{W}_{c}$ are shown in Figure \ref{fig:2x2-cyclical-grid-world-states}.
$\mathscr{W}_{c}$, and variations of it, is used as a running example to illustrate the concepts presented in this paper.

\begin{figure}
    \centering
    \begin{subfigure}[b]{0.45\linewidth}
        \centering
        \includegraphics[width=0.5\linewidth]{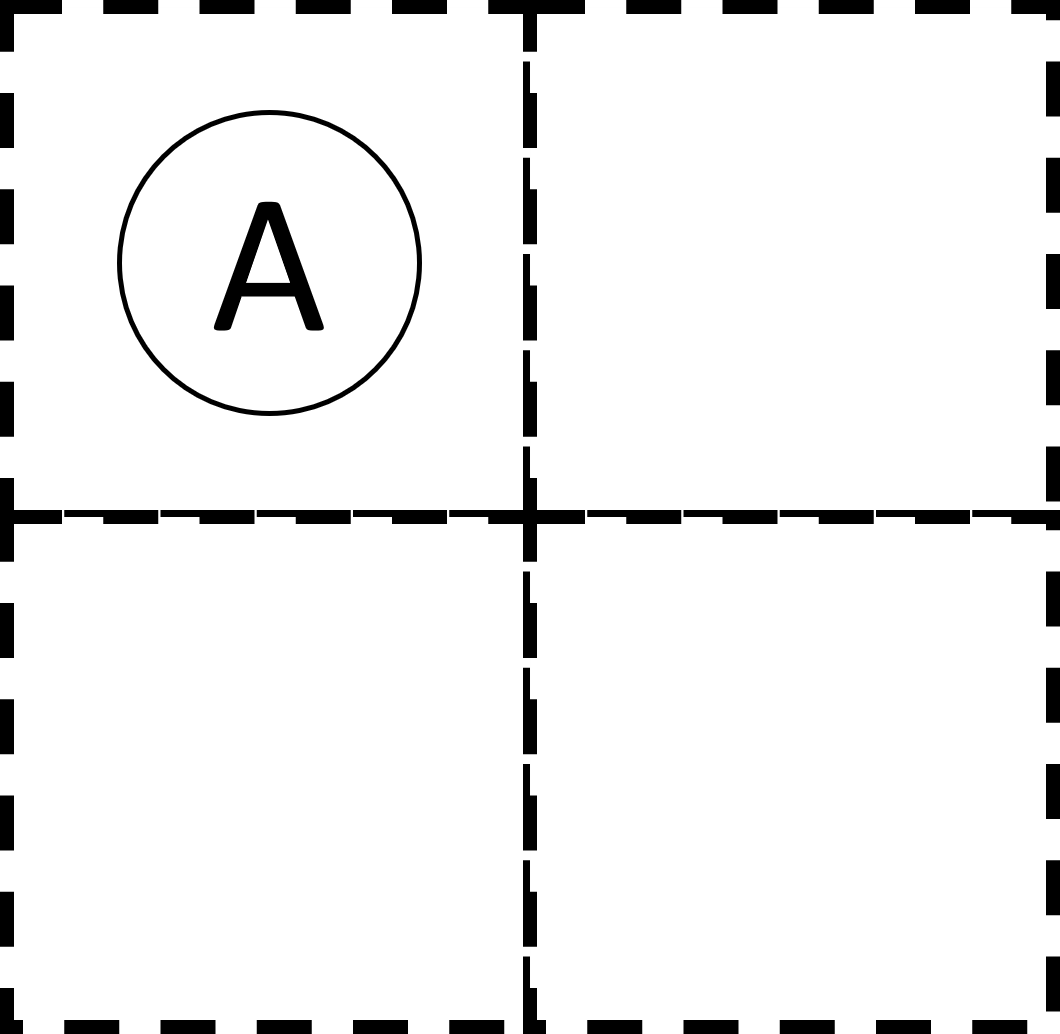}
        \caption{$w_{0}$}
        \vspace{0.25cm}
    \end{subfigure}
    \begin{subfigure}[b]{0.45\linewidth}
        \centering
        \includegraphics[width=0.5\linewidth]{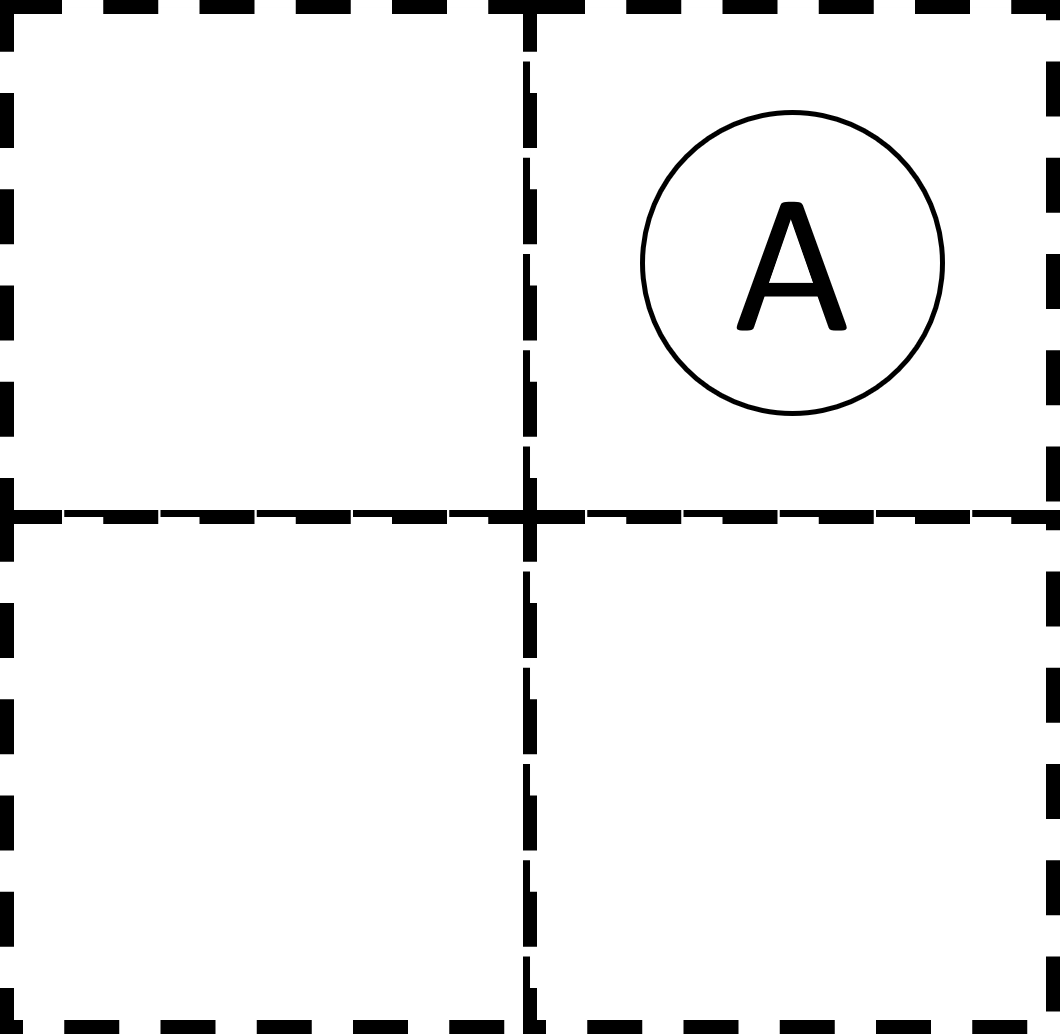}
        \caption{$w_{1}$}
        \vspace{0.25cm}
    \end{subfigure}
    \begin{subfigure}[b]{0.45\linewidth}
        \centering
        \includegraphics[width=0.5\linewidth]{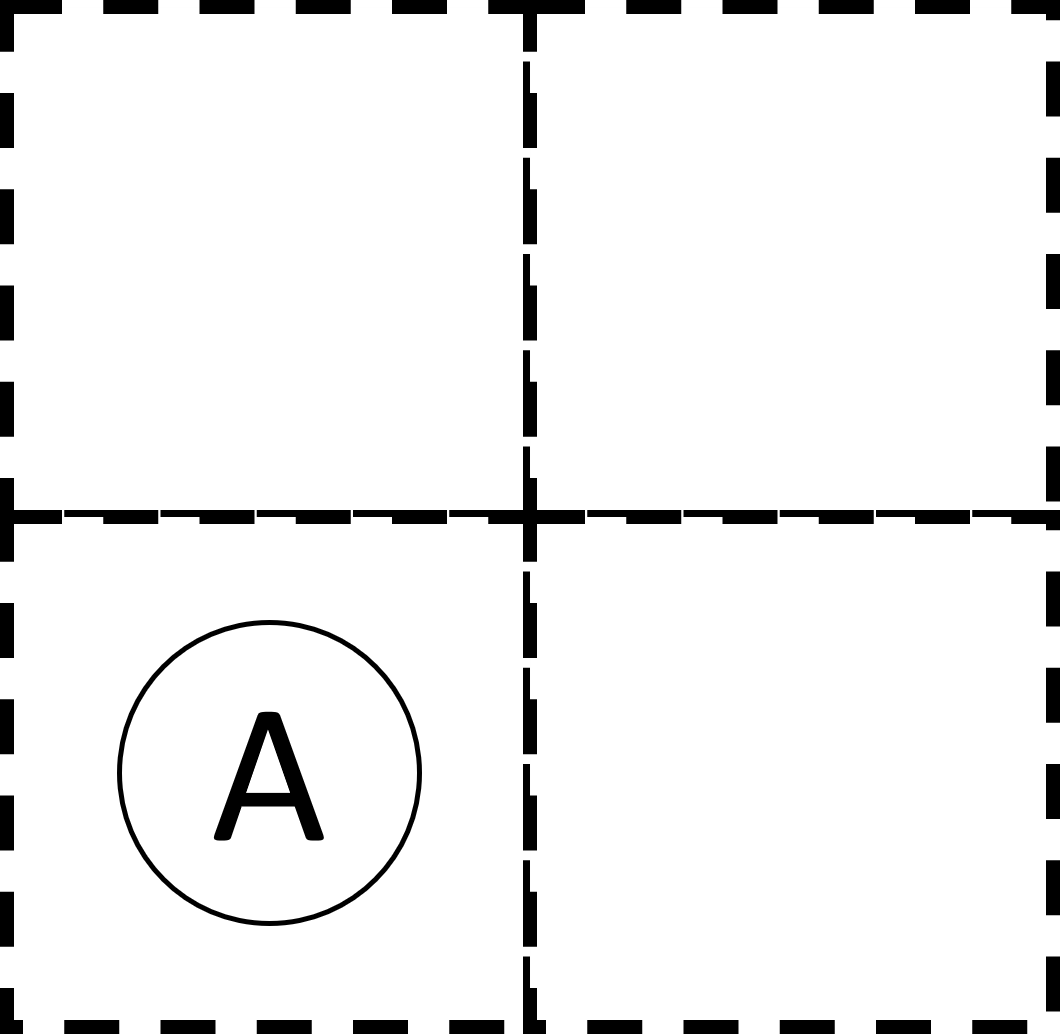}
        \caption{$w_{2}$}
    \end{subfigure}
    \begin{subfigure}[b]{0.45\linewidth}
        \centering
        \includegraphics[width=0.5\linewidth]{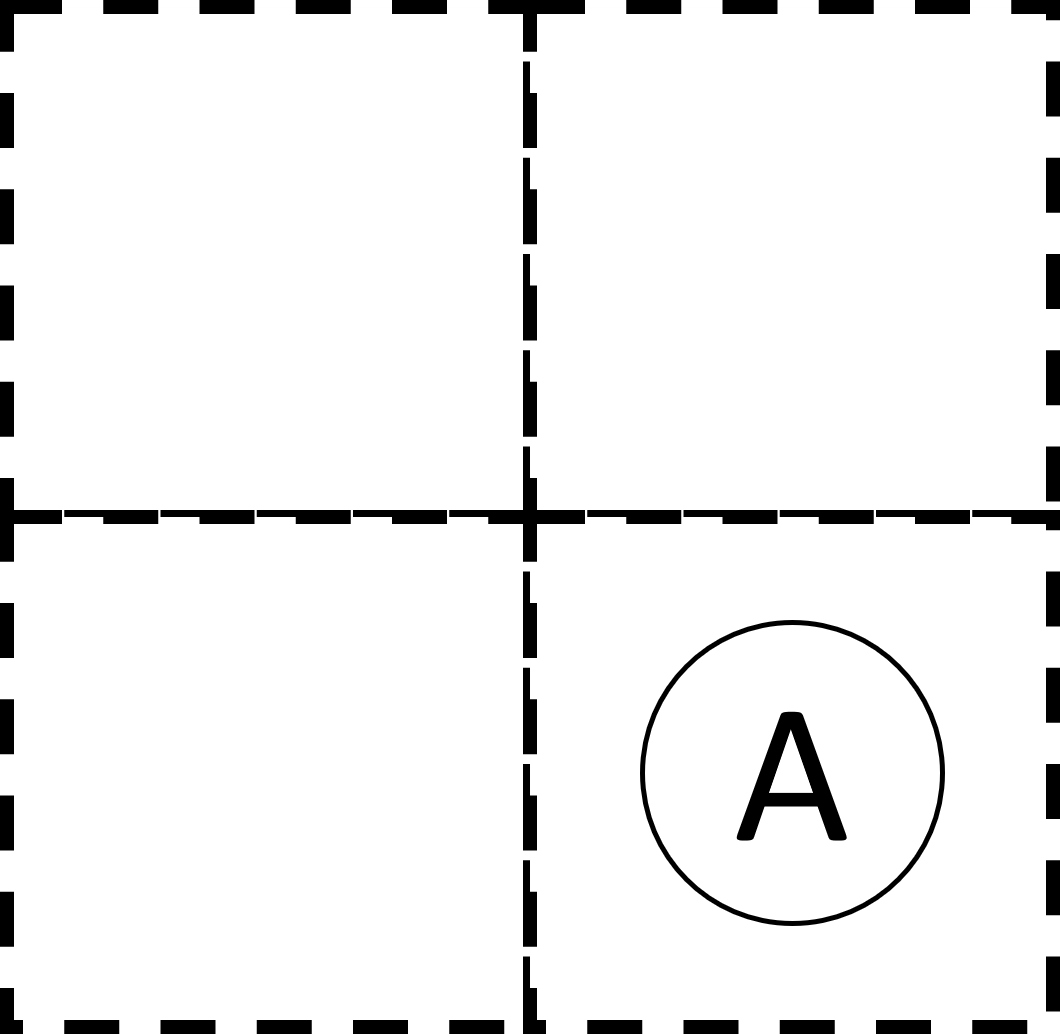}
        \caption{$w_{3}$}
    \end{subfigure}
    \caption{The world states of a cyclical $2\times 2$ grid world $W_{c}$, where changes to the world are due to an agent moving either up, down, left, or right. The position of the agent in the world is represented by the position of the circled A.}
    \label{fig:2x2-cyclical-grid-world-states}
\end{figure}

We say the world being cyclical means that if the agent performs the same action enough times, then the agent will return to its starting position; for example, for the world $\mathscr{W}_{c}$ if the agent performs the action $U$ twice when the world is in state $w_{0}$ in Figure \ref{fig:2x2-cyclical-grid-world-states} then the world will transition into the state $w_{0}$ (i.e., $U^2 * w_{0} = w_0$).
The transition due to performing each action in each state can be found in Table \ref{tab:2x2-gridworld-minimum-transitions}.

\begin{table}[H]
    \centering
    \begin{tabular}{c|c c c c c}
                &  $1$      & $U$       & $D$       & $L$       & $R$\\
         \hline
        $w_{0}$ & $w_{0}$   & $w_{2}$   & $w_{2}$   & $w_{1}$   & $w_{1}$\\
        $w_{1}$ & $w_{1}$   & $w_{3}$   & $w_{3}$   & $w_{0}$   & $w_{0}$\\
        $w_{2}$ & $w_{2}$   & $w_{0}$   & $w_{0}$   & $w_{3}$   & $w_{3}$\\
        $w_{3}$ & $w_{3}$   & $w_{1}$   & $w_{1}$   & $w_{2}$   & $w_{2}$\\
    \end{tabular}
    \caption{Each entry in this table shows the outcome state of the agent performing the action given in the column label when in the world state given by the row label.}
    \label{tab:2x2-gridworld-minimum-transitions}
\end{table}

The transitions shown in Table \ref{tab:2x2-gridworld-minimum-transitions} can be represented as the transition diagram given in Figure \ref{fig:2x2-cyclical-min-trans}.
It should be noted that, since the structure of the diagram is wholly dependent on the arrows between the world states, the positioning of the world states is an arbitrary choice.

\begin{figure}
    \centering
    \includegraphics[width=0.5\linewidth]{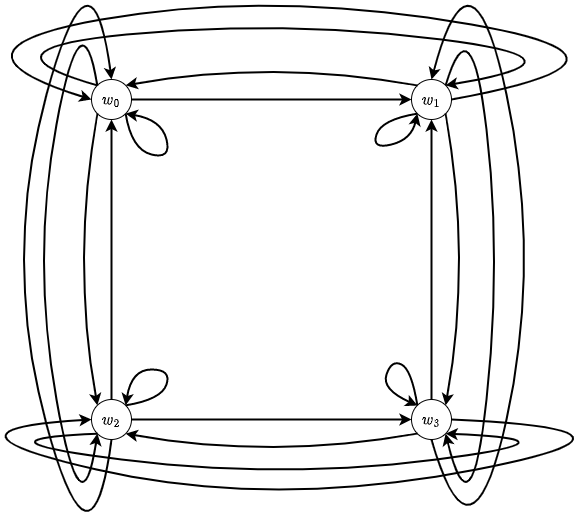}
    \caption{A transition diagram for the transitions shown in Table \ref{tab:2x2-gridworld-minimum-transitions}.}
    \label{fig:2x2-cyclical-min-trans}
\end{figure}

\subsection{Agents}

We consider worlds containing an embodied agent that is able to interact with the environment by performing actions. The end goal of the agent’s learning process is to map the useful aspects of the structure of the world to the structure of its representations; the useful aspects are those that enable the agent to complete whatever task it is programmed to achieve (e.g., in reinforcement learning, finding an optimal policy).

We use the treatment of agents adopted by \citep{Higgins2018}. The agent has an unspecified number of sensors that allow it to make observations of the state of the environment. Information about the world state that the agent is currently in is delivered to the internal state of the agent, its representation of the (state of the) world. Mathematically, the process of information propagating through the sensory states is a mapping $b : W \to O$ (the `observation process'), which produces a set of observations $o_1$ containing a single observation for each sensory state.
These observations are then used by an inference process $h : O \to Z$ to produce an internal representation.
The agent then uses some internal mechanism to select an action to perform.

It is important to note that the agent’s state representation only reflects the observations the agent makes with its sensors; in other words, the agent’s internal state is built using the information about aspects of the world state propagated through its sensory states, and not directly from the world state.

\subsubsection{Actions of an agent as labelled transitions}\label{sec:Agent actions as labelled transitions}

Consider a set $\hat{A}$ called the set of \textit{minimum actions}.
Let the set $A$ be the set of all finite sequences formed from the elements of the set $\hat{A}$; we call $A$ the set of \textit{actions}.
Consider a set $\hat{D}_{A} \subset D$, where $1_{w} \in \hat{D}_{A}$ for all $w \in W$; we call $\hat{D}_{A}$ the set of \textit{minimum action transitions}.
We consider a labelling map $\hat{l}: \hat{D}_{A} \to \hat{A}$ such that:

\begin{enumerate}
    \item Any two distinct transitions leaving the same world state are labelled with different actions.
    \begin{action_condition}\label{actcon:action-gives-single-outcome}
        For any $d,d' \in \hat{D}_{A}$ with $d\neq d'$ and $s(d)=s(d')$, $\hat{l}(d) \neq \hat{l}(d')$.
    \end{action_condition}

    \item There is an identity action that leaves any world state unchanged.
    \begin{action_condition}\label{actcon:identity-action}
        There exists an action $1 \in \hat{A}$ such that $\hat{l}(1_{w})=1$ for all $w \in W$.
        We call $1$ the \textit{identity action}.
    \end{action_condition}
    
\end{enumerate}

Given $\hat{D}_{A}$ as defined above and satisfying  action condition \ref{actcon:action-gives-single-outcome} and  action condition \ref{actcon:identity-action}, we define $Q_{D_{A}} = (W, \hat{D}_{A}, s_{A}, t_{A})$, where $s_{A}, t_{A}$ are the restrictions of $s,t$ to the set $\hat{D}_{A}$.
We now define a set $D_{A}$, the set of \textit{action transitions}, which is the set of all paths of $Q_{D_{A}}$.

We extend the map $\hat{l}$ to a map $l: D_{A} \to A$ such that if $d = \hat{d}_{n} \circ ... \circ \hat{d}_{1}$ then $l(d) = \hat{l}(\hat{d}_{n}) ... \hat{l}(\hat{d}_{1})$.
For $d \in D_{A}$ with $s(d) = w$, $t(d) = w'$ and $l(d) = a$, we will often denote $d$ by $d: w \xrightarrow{a} w'$.

If an action $a \in A$ is expressed in terms of its minimum actions as $a = \hat{a}_{n} \circ ... \circ \hat{a}_{1}$, then $a = l(d) = l(\hat{d}_{n} \circ ... \circ \hat{d}_{1}) = \hat{l}(\hat{d}_{n}) \circ ... \circ \hat{l}(\hat{d}_{1}) = \hat{a}_{n} \circ ... \circ \hat{a}_{1}$, where the $\hat{a}_{i}$ are called \textit{minimum actions}.

\begin{remark}
    For a given $w \in W$, we can label transitions in $D_{A}$ with an appropriate element of $A$ through the following: for each $d \in D_{A}$ with $s(d)=w$, express $d$ in terms of its minimum transitions in $D_{A}$ as $d = d_{n} \circ ... \circ d_{2} \circ d_{1}$; if $\hat{l}(d_{i}) = a_{i}$ then $d$ is labelled with $a_{n}...a_{2}a_{1} \in A$.
    We denote the map that performs this labelling by $l: D_{A} \to A$.
\end{remark}

Figure \ref{fig:2x2-cyclical-min-actions-standard} shows how transitions are labelled with actions in our $2 \times 2$ cyclical world example.
We only show the minimum actions for simplicity but there are actually infinite action transitions between each pair of world states; for example, the action transitions from $w_{0}$ to $w_{1}$ include those labelled by: $D \circ R$, $D \circ R \circ 1^{n}$ ($n \in \mathbb{N}$), $1^{n} \circ D \circ R$ ($n \in \mathbb{N}$), $D \circ R \circ (L \circ R)^{n}$ ($n \in \mathbb{N}$) etc...

\begin{figure}
    \centering
    \includegraphics[width=0.5\linewidth]{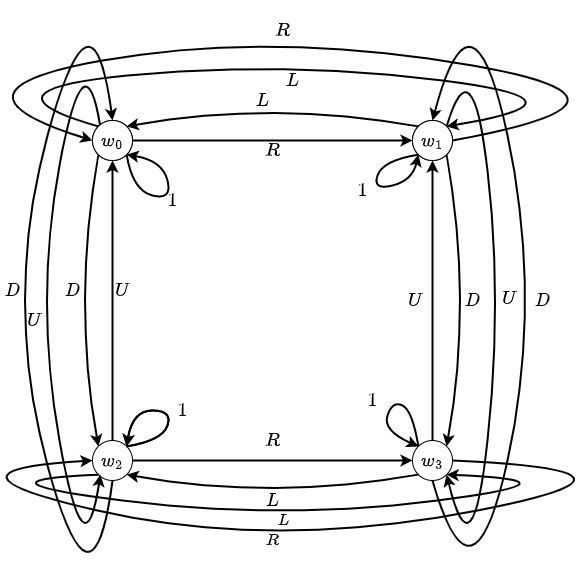}
    \caption{Labelling the transitions in Figure \ref{fig:2x2-cyclical-min-trans} with the relevant actions in $A$.}
    \label{fig:2x2-cyclical-min-actions-standard}
\end{figure}

\paragraph{Effect of actions on world states}
We define the effect of the action $a \in A$ on world state $w \in W$ as the following: if there exists $d \in D_{A}$ such that $s(d)=w$ and $l(d)=a$, then $a * w = t(d)$; if there does not exist $d \in D_{A}$ such that $s(d)=w$ and $l(d)=a$, then we say that $a * w$ is \textit{undefined}.
The effect of actions on world states is well-defined due to action condition \ref{actcon:action-gives-single-outcome}.
We can apply the minimum actions that make up an action to world states individually: if $a * w$ is defined and $a = \hat{a}_{k}...\hat{a}_{1}$ then $a * w = (\hat{a}_{k}...\hat{a}_{1}) * w = \hat{a}_{k}...\hat{a}_{2} * (\hat{a}_{1} * w)$.
Physically, the identity action $1 \in A$ corresponds to the no-op action (\textit{i.e.}, the world state does not change due to this action).

\paragraph{Actions as (partial) functions}
Consider all the transitions that are labelled by a particular action $a \in A$.
Together these transitions form a partial function $f_{a}: W \to W$ because for any $w \in W$ either $a * w$ is undefined or $a * w$ is defined and there is a unique world state $w' \in W$ for which $a * w = w'$ (due to condition 1).
$f_{a}$ is not generally surjective because for a given $w \in W$ there is not necessarily a transition $d \in D$ with $l(d) = a$ and $t(d) = w$.
$f_{a}$ is not generally injective because it is possible to have an environment where $f_{a}(w)=f_{a}(w')$ for some $w \in W$ different from $w' \in W$.
We can also reproduce these functions using the formalism given by \cite{caselles2019symmetry}, which describes the dynamics of the world in terms of a multivariate function $f: A \times W \to W$.
If we let $f: A \times W \to W$ be the dynamics of the environment then the transition caused by an action $a \in A$ on a world state $w \in W$ (where $a * w$ is defined) is given by $(a,w) \mapsto f(a,w) = a * w$.
Mathematically, we curry the function $f: A \times W \to W$ to give a collection $\{f_{a}\}$ of partial functions with a partial function $\hat{f}(a)=f_{a}: W \to W$ for each action $a \in A$ as $\textit{Curry}: (f: A \times W \to W) \to (f_{a}: W \to W)$.

\medskip
Once we have introduced our framework to formalise word states and their transitions through the agent’s intervention, we are investigating next its representational power by showing that it can generate the same transitions as SBDR, that is, those that form symmetry groups over disentangled representations, in Section \ref{sec:Reproducing SBDRL}, and other types of representations, that do not fit in the constraints imposed by SBDR, and that are standard in reinforcement learning scenarios in Section \ref{sec:Beyond SBDRs}.

\section{Reproducing SBDRL}\label{sec:Reproducing SBDRL}

 We now use the framework set out in the previous section to reproduce SBDRL and illustrate it using worlds that are similar to those given by \cite{Higgins2018} and \cite{caselles2019symmetry}. We choose to begin by reproducing symmetry-based representations because (1) symmetry-based representations describe transformations of the world that form relatively simple and well-understood algebraic structures (groups), (2) groups, and the symmetries they describe, are gaining increasing prominence in Artificial Intelligence research, (3) it shows how our framework encompasses previous work in formalising the structure of transformations of a world, and (4) it provides a more rigorous description of SBDRL, which should aid future analysis and development of the concept.

Section \ref{sec:Symmetry-based disentangled representation learning} provides a description of SBDRL, and Section \ref{sec:SBDRL through equivalence} shows how to obtain SBDRL using an equivalence relation on the actions of the agent.
Section \ref{sec:Algorithmic exploration of world structures} details the algorithmic exploration of world structures performed on example worlds and goes through a worked example.
Finally, Section \ref{sec:World conditions} shows the conditions of the world that are required for the actions of an agent to be fully described by SBDRs.

\subsection{Symmetry-based disentangled representation learning}\label{sec:Symmetry-based disentangled representation learning}

We proceed to present a more detailed description of the SBDRL formalism, how to extract representations from world states, and how to build symmetry-based and symmetry-based disentangled representations, before stating their limitations and thus, of SBDRL.

\paragraph{From world states to representation states}
The world state is an element of a set $W$ of all possible world states.
The observations of a particular world state made by the agent's sensors are elements of the set $O$ of all possible observations.
The agent's internal state representation of the world state is an element of a set $Z$ of all possible internal state representations.
There exists a composite mapping $f = h \circ b: W \to Z$ that maps world states to states of the agent's representation ($w \mapsto z$); this composite mapping is made up of the mapping of an observation process $b: W \to O$ that maps world states to observations ($w \mapsto o$) and the mapping of an inference process $h: O \to Z$ that maps observations to the agent's internal state representation ($o \mapsto z$) (see Figure \ref{fig:observation-maps}).

\begin{figure}[t]
    \centering
    \includegraphics[scale = 0.35]{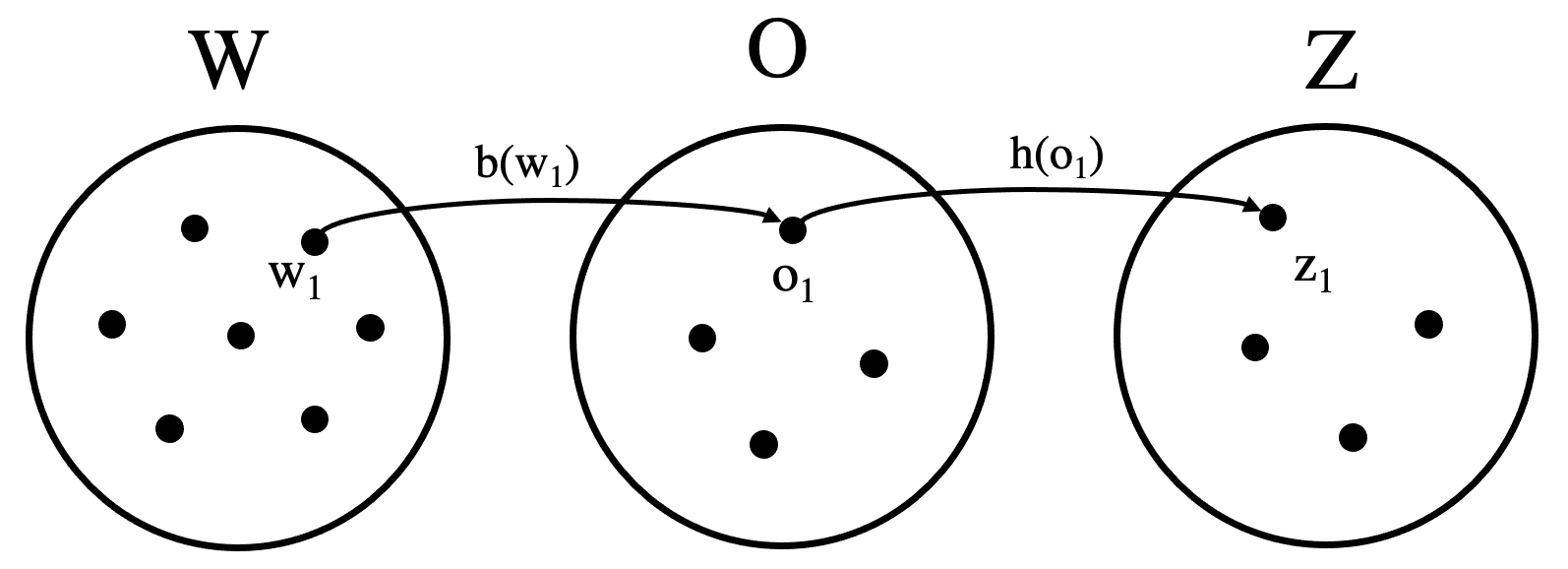}
    \caption{The composite mapping from the set $W$ of world states to the set $Z$ of state representations via the set $O$ of observations.}
    \label{fig:observation-maps}
\end{figure}

Given that the symmetries represented by SBDRL form algebraic groups, we first define the notions of groups and group action formally, as follows:

\paragraph{Groups and symmetries}
\begin{definition}[Group]
    A group $G$ is a set with a binary operation $G \times G \to G$, $(g, g') \mapsto g \circ g'$ called the \textit{composition} of group elements that satisfies the following properties:
\begin{enumerate}
    \item \textit{Closure.}
    $g \circ g'$ is defined for all $g, g' \in G$.
    \item \textit{Associative.}
    $(g \circ g') \circ g'' = g \circ (g' \circ g'')$ for all $g, g', g'' \in G$.
    \item \textit{Identity.}
    There exists a unique identity element $1 \in G$ such that $1 \circ g = g \circ 1 = g$ for all $g \in G$.
    \item \textit{Inverse.}
    For any $g \in G$, there exists $g^{-1} \in G$ such that $g \circ g^{-1} = g^{-1} \circ g = 1$.
\end{enumerate}
\end{definition}

Applying symmetries to objects is mathematically defined as a \textit{group action}.

\begin{definition}[Group action]
    Given a group $G$ and a set $X$, a group action of $G$ on $X$ is a map $G \times X \to X$, $(g,x) \mapsto g * x$ that satisfies the following properties:
\begin{enumerate}
    \item \textit{Compatibility with composition.}
    The composition of group elements and the group action are compatible: $g' \circ (g * x) = (g' \circ g) * x$ for $g,g' \in G$ and $x \in X$.
    \item \textit{Identity.}
    The group identity $1 \in G$ leaves the elements of $X$ unchanged: $1 * x = x$ for all $x \in X$.
\end{enumerate}
\end{definition}

Another important property of groups is commutation.
Two elements of a group \textit{commute} if the order they are composed does not matter: $g \circ g' = g' \circ g$.
If all elements in a group commute with each other then the group is called \textit{commutative}.
Subgroups of a group might commute with each other.

\paragraph{Symmetry-based representations}
The set $W$ of world states has a set of symmetries that are described by the group $G$.
This group $G$ acts on the set $W$ of world states via a group action $\cdot_{W}: G \times W \to W$.
For the agent's representations $z_i \in Z$ to be symmetry-based representations, a corresponding group action $\cdot_{Z}: G \times Z \to Z$ must be found so that the symmetries of the agent's representations reflect the symmetries of the world states.
The mathematical condition for this is that, for all $w \in W$ and all $g \in G$, applying the action $g \cdot_W$ to $w$ and then applying the mapping $f$ gives the same result as first applying the mapping $f$ to $w$ to give $f(w)$ and then applying the action $g \cdot_Z$ to $f(w)$.
Mathematically, this is $f(g \cdot_W w) = g \cdot_Z f(w)$.
If this condition is satisfied, then $f$ is called a \textit{group-equivariant map}.

\paragraph{Symmetry-based disentangled representations}
To go from symmetry-based representations to symmetry-based disentangled representations, suppose the group of symmetries $G$ of the set $W$ of world states decomposes as a direct product $G = G_1 \times \hdots \times G_i \times \hdots \times G_n$.
The group action $\cdot_Z : G \times Z \to Z$ and the set $Z$ are disentangled with respect to the decomposition of $G$, if there is a decomposition $Z = Z_1 \times \hdots \times Z_i \times \hdots \times Z_n$ and actions $\cdot_{Z_i}: G_i \times Z_i \to Z_i, i \in \{1, \hdots, n\}$ such that $(g_{G_1}, g_{G_2}) \cdot_Z (z_{Z_1}, z_{Z_2}) = (g_{G_1} \cdot_{Z_1} z_{Z_1}, g_{G_2} \cdot_{Z_2} z_{Z_2})$, where $g_{G_i} \in G_i$ and $z_{Z_i} \in Z_i$.
In other words, each subspace $Z_i$ is invariant to the action of all the $G_{j \neq i}$ and only affected by $G_i$.

The representations in $Z$ are symmetry-based disentangled with respect to the decomposition $G = G_1 \times \hdots \times G_i \times \hdots \times G_n$, where each $G_i$ acts on a disjoint part of $Z$, if:
\begin{enumerate}
    \item There exists a group action $\cdot_{W}: G \times W \to W$ and a corresponding group action $\cdot_{Z}: G \times Z \to Z$;
    \item The map $f : W \to Z$ is group-equivariant between the group actions on $W$ and $Z$: $g \cdot_{Z} f(w) = f(g \cdot_{W} w)$. In other words, the diagram
\[\begin{tikzcd}
	w && {g \cdot_{W} w} \\
	\\
	{f(w)} && {g \cdot_{Z} f(w) = f(g \cdot_{W} w)}
	\arrow["{g \cdot_{W}}", from=1-1, to=1-3]
	\arrow["f"', from=1-1, to=3-1]
	\arrow["f"', from=1-3, to=3-3]
	\arrow["{g \cdot_{Z}}", from=3-1, to=3-3]
\end{tikzcd}\]
    commutes.    

    \item There exists a decomposition of the representation $Z = Z_1 \times \hdots \times Z_n$ such that each subspace $Z_i$ is unaffected by the action for all $G_{j \neq i}$ and is only affected by $G_i$.
\end{enumerate}

\paragraph{Limitations of SBDRL}
Both \cite{Higgins2018} and \cite{caselles2019symmetry} suggest that these group actions can be used to describe some types of real-world actions.
However, it is important to note that they do not believe that all actions can be described by their formalism: \textit{``It is important to mention that not all actions are symmetries, for instance, the action of eating a collectible item in the environment is not part of any group of symmetries of the environment because it might be irreversible.''} \cite[page 4]{caselles2019symmetry}.

\subsection{SBDRL through equivalence}\label{sec:SBDRL through equivalence}

For the algebra of the actions of our agent to form a group, we need some sense of actions being the same so that the algebra can satisfy the group properties (\textit{e.g.}, for the identity property we need an element $1$ in the algebra $A$ such that $1a = a1 = a$ for any $a \in A$).
We define an equivalence relation on the elements of $A$ that says two actions are equivalent (our sense of the actions being the same) if they lead to the same end world state when performed in any initial world state.
This equivalence relation is based on our mathematical interpretation of the implication given by \cite{Higgins2018} that transformations of the world are the same if they have the same effect, which is used to achieve the group structure for SBDRL.
Our use of an equivalence relation was inspired by \cite{caselles2020sensory}, which uses a similar equivalence relation to equate action sequences that cause the same final observation state after each action sequence is performed from an initial observation state.
We then derive some properties of the equivalence classes created by $\sim$ that will be used to show that the actions of an agent form the group action described by \cite{Higgins2018} under the equivalence relations we define and for worlds satisfying certain conditions.

\medskip
\begin{definition}[Equivalence of actions under $\sim$]
    Given two actions $a, a' \in A$, we denote $a \sim a'$ if $a * w = a' * w$ for all $w \in W$.

\end{definition}
\medskip
\begin{remark}
    If $a \sim a'$, then either for each $w \in W$ (1) there exists transitions $d: w \xrightarrow{a} t(d)$ and $d': w \xrightarrow{a'} t(d)$ or (2) there exists no transitions $d: w \xrightarrow{a} t(d)$ or $d': w \xrightarrow{a'} t(d)$.
\end{remark}
\medskip
\begin{proposition}
    $\sim$ is an equivalence relation.
\end{proposition}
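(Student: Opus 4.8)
The plan is to verify directly the three defining properties of an equivalence relation — reflexivity, symmetry, and transitivity — on the set $A$ of actions, reducing each to the corresponding property of equality of the values $a * w$ as $w$ ranges over $W$. Since $\sim$ is defined by the universally quantified condition ``$a * w = a' * w$ for all $w \in W$'', and equality is itself reflexive, symmetric, and transitive at each fixed $w$, each property of $\sim$ should follow by quantifying the pointwise statement over all $w$.

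Before starting, I would fix the meaning of the equation $a * w = a' * w$, because $*$ is only a \emph{partial} function $D \times W \to W$. The reading consistent with the Remark immediately preceding the proposition is that, for each fixed $w$, either both $a * w$ and $a' * w$ are defined and equal, or both are undefined; that is, ``undefined'' is treated as a value that equals only itself. With this convention in place, the three checks are routine. For \textbf{reflexivity}, I fix $a \in A$ and $w \in W$: if $a * w$ is defined then $a * w = a * w$ trivially, and if $a * w$ is undefined then both sides of the equation are undefined, so in either case the pointwise equality holds; since $w$ was arbitrary, $a \sim a$. For \textbf{symmetry}, I assume $a \sim a'$, so $a * w = a' * w$ for every $w$; applying the symmetry of equality at each $w$ gives $a' * w = a * w$ for every $w$, whence $a' \sim a$. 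For \textbf{transitivity}, I assume $a \sim a'$ and $a' \sim a''$, so that $a * w = a' * w$ and $a' * w = a'' * w$ hold for every $w$; chaining these two equalities at each fixed $w$ yields $a * w = a'' * w$ for every $w$, hence $a \sim a''$.

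The only point requiring care — and the nearest thing to an obstacle here — is the partiality of $*$, specifically ensuring that the undefined case is handled uniformly so that transitivity cannot fail. The worry would be a situation where $a' * w$ is undefined: in that case the hypotheses force both $a * w$ and $a'' * w$ to be undefined as well, so the desired equality $a * w = a'' * w$ still holds under the convention above. Because this is guaranteed once ``undefined'' is treated as equal only to itself, no genuine difficulty arises, and I expect the proof to be short bookkeeping rather than a substantive argument.
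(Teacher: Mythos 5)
Your proof is correct and takes essentially the same route as the paper's: a direct verification of reflexivity, symmetry, and transitivity by reducing each to the corresponding property of equality of $a * w$ pointwise over $w \in W$. If anything, your treatment is more careful than the paper's — you explicitly handle the partiality of $*$ (treating ``undefined'' as equal only to itself), whereas the paper glosses over this, and the paper's reflexivity step as written merely restates the definition of $\sim$ rather than deriving $a \sim a$.
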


\begin{proof} We demonstrate the three properties of an equivalence relation, namely, reflexive, transitive, and symmetric.\hfill

    \textbf{Reflexive.}
    If $a \sim a'$ then $a * w = a' * w$ for all $w \in W$.

    \textbf{Transitive.}
    If $a \sim a'$ and $a' \sim a''$, then $a * w = a' * w$ for all $w \in W$ and $a' * w = a'' * w$ for all $w \in W$.
    Therefore, $a * w = a'' * w$ for all $w \in W$ and so $a \sim a''$.

    \textbf{Symmetric.}
    If $a \sim a'$, then $a * w = a' * w$ for all $w \in W$.
    Therefore $a' * w = a * w$ for all $w \in W$, and so $a' \sim a$.
\end{proof}

Figure \ref{fig:2x2-cyclical-min-act-equivalence} shows the effect of applying the equivalence relations to our $2 \times 2$ cyclical example world $\mathscr{W}_{c}$.

\begin{figure}
    \centering
    \includegraphics[width=0.5\linewidth]{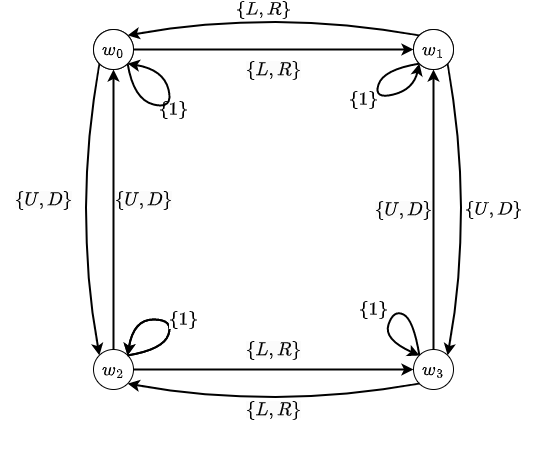}
    \caption{Action equivalence classes in $A/\sim$ for the actions show in Figure \ref{fig:2x2-cyclical-min-actions-standard}.}
    \label{fig:2x2-cyclical-min-act-equivalence}
\end{figure}

We define the canonical projection map $\pi_{A}: A \to A/\sim$ that sends actions in $A$ to their equivalence classes under $\sim$ in the set $A/\sim$.
We denote the equivalence class of $a$ by $[a]_{\sim}$.
Sometimes we will drop the $[a]_{\sim}$ in favour of $a \in A/\sim$ for ease.

\paragraph{Composition of actions}
We define the composition of elements in $A/\sim$ as $\circ: (A/\sim) \times (A/\sim) \to (A/\sim)$ such that $[a']_{\sim} \circ [a]_{\sim} = [a' \circ a]_{\sim}$ for $a,a' \in A$.
\medskip
\begin{proposition}
    $[a']_{\sim} \circ [a]_{\sim} = [a' \circ a]_{\sim}$ is well-defined for all $a, a' \in A$.
\end{proposition}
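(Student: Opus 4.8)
The plan is to show that the composition operation on $A/\sim$ does not depend on the choice of representatives. Explicitly, I need to prove that if $a \sim b$ and $a' \sim b'$, then $a' \circ a \sim b' \circ b$, since this is exactly the statement that $[a' \circ a]_{\sim}$ depends only on the equivalence classes $[a]_{\sim}$ and $[a']_{\sim}$ and not on the particular elements $a, a' \in A$ chosen to represent them. Unwinding the definition of $\sim$, the hypotheses give $a * w = b * w$ and $a' * w = b' * w$ for all $w \in W$, and the goal is to establish $(a' \circ a) * w = (b' \circ b) * w$ for all $w \in W$.

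First I would fix an arbitrary $w \in W$ and use the compatibility of the $*$ operation with composition of actions, namely the identity $(a' \circ a) * w = a' * (a * w)$ established in the paragraph on the effect of actions on world states. Applying the hypothesis $a \sim b$ gives $a * w = b * w$; call this common world state $w'$ (when it is defined). Then $(a' \circ a) * w = a' * w'$, and applying the hypothesis $a' \sim b'$ at the world state $w'$ yields $a' * w' = b' * w'$. Reassembling via the same compatibility identity in reverse, $b' * w' = b' * (b * w) = (b' \circ b) * w$. Chaining these equalities gives $(a' \circ a) * w = (b' \circ b) * w$, and since $w$ was arbitrary this holds for all $w \in W$, so $a' \circ a \sim b' \circ b$ as required.

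The main subtlety, and the step I would be most careful about, is the handling of partiality: the effect $*$ is only a partial function, so $a * w$ may be undefined for some $w$. I would lean on the remark immediately preceding the proposition, which records that for equivalent actions either both transitions exist or neither does at each world state. Thus if $a * w$ is undefined then so is $b * w$, and I must check that $(a' \circ a) * w$ and $(b' \circ b) * w$ are then both undefined as well, so that the two sides agree in the degenerate sense of both being undefined. Concretely, $(a' \circ a) * w$ is defined precisely when $a * w$ is defined and $a' * (a * w)$ is defined; the equivalences $a \sim b$ and $a' \sim b'$ guarantee that the domain conditions match on both sides, so the defined/undefined status of $(a' \circ a) * w$ and $(b' \circ b) * w$ coincides, and when defined the values agree by the computation above. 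Once partiality is addressed, the remaining chain of equalities is routine, so I expect the bookkeeping of the undefined cases to be the only genuine obstacle.
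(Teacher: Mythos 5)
Your proof is correct, and its core is the same as the paper's: show that $(a' \circ a) * w = (b' \circ b) * w$ for every $w$ by splitting the effect of the composite action into sequential application of its factors and chaining the two equivalences through the intermediate state. The difference is in how the two arguments treat definedness. The paper's proof works at the level of transitions (sources and targets of composed paths in $D_{A}$) and explicitly invokes the phrase ``since actions are unrestricted in $W$'' --- i.e.\ it assumes world condition~1, under which $a * w$ is always defined --- even though that condition is only introduced later and is not a standing hypothesis at the point where this proposition is stated. Your proof instead works with the partial operation $*$ directly and does the bookkeeping the paper skips: using the remark that equivalent actions are defined at exactly the same states, you check that $(a' \circ a) * w$ and $(b' \circ b) * w$ are either both undefined or both defined with the same value. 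That makes your argument valid for general worlds (where $*$ is genuinely partial), whereas the paper's written proof, read literally, only covers the unrestricted case; in that sense your version is the more general and, arguably, the one the surrounding text actually needs, since the quotient $A/\sim$ is later used for worlds violating world condition~1. The one step worth stating explicitly is the definedness equivalence you rely on --- that $(a' \circ a) * w$ is defined iff $a * w$ is defined and $a' * (a * w)$ is defined --- which follows from $D_{A}$ being the set of all paths of $Q_{D_{A}}$, so that a path labelled $a' \circ a$ out of $w$ exists exactly when the two constituent paths compose.
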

\begin{proof}
    We need to show that the choice of $a,a'$ doesn't matter: if $a_{1} \sim a_{2}$ and $a_{3} \sim a_{4}$ for $a_{1}, a_{2}, a_{3}, a_{4} \in A$, then $[a_{3} \circ a_{1}]_{\sim} = [a_{4} \circ a_{2}]_{\sim}$.
    $a_{1} \sim a_{2}$ means there exists $d_{i}: s(d_{1}) \xrightarrow{a_{i}} t(d_{1})$ for $i=1,2$.
    Since actions are unrestricted in $W$, for any world state and any action there is a transition with a source at that world state that is labelled by that action.
    $a_{3} \sim a_{4}$ means there exists $d_{j}: s(d_{3}) \xrightarrow{a_{j}} t(d_{3})$ for $j=3,4$, and so there exists $d_{j}: t(d_{1}) \xrightarrow{a_{j}} t(d_{3})$ for $j=3,4$.
    $\implies$ there exists $(d_{3} \circ d_{1}): s(d_{1}) \xrightarrow{a_{3} \circ a_{1}} t(d_{3})$ and $(d_{4} \circ d_{2}): s(d_{1}) \xrightarrow{a_{4} \circ a_{2}} t(d_{3})$.
    $\implies$ $s(d_{3} \circ d_{1}) = s(d_{4} \circ d_{2})$ and $t(d_{3} \circ d_{1}) = s(d_{4} \circ d_{2})$.
    $\implies$ $(a_{3} \circ a_{1}) \sim (a_{4} \circ a_{2})$.
    $\implies$ $[a_{3} \circ a_{1}]_{\sim} = [a_{4} \circ a_{2}]_{\sim}$.
\end{proof}

\paragraph{Effect of equivalent actions on world states}
We define the effect of an element of  $A/\sim$ on world states as $*: (A/\sim) \times W \to W$ such that $[a]_{\sim} * w = a * w$.
Note that this is only defined if there exists $d: w \xrightarrow{a} t(d)$ for $d \in D_{A}$.; if not, then $[a]_{\sim} * w$ is called $\textit{undefined}$.
\medskip
\begin{proposition}
    $[a]_{\sim} * w$ is well-defined for all $a \in A$ and for all $w \in W$.
\end{proposition}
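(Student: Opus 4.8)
The plan is to recognise that the content of the claim is representative-independence: the expression $[a]_{\sim} * w$ is defined via a chosen representative $a$ of the class $[a]_{\sim}$, so to show it is well-defined I must verify that any two representatives of the same class yield the same result. The key observation is that this is built directly into the definition of $\sim$, so the proof should be very short.

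First I would take two representatives $a, a' \in A$ of the same equivalence class, i.e. assume $[a]_{\sim} = [a']_{\sim}$, which is equivalent to $a \sim a'$. By the definition of $\sim$ this means exactly $a * w = a' * w$ for every $w \in W$. Hence, whenever the value is defined, $[a]_{\sim} * w = a * w = a' * w$, so the output does not depend on whether we computed it using $a$ or $a'$. This is precisely the required well-definedness.

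Second, I would check that the defined/undefined dichotomy is itself representative-independent, so that the partiality of $*$ on $A/\sim$ is consistent. This is handled by the earlier remark: if $a \sim a'$, then for each $w \in W$ either both $a$ and $a'$ admit a transition out of $w$ with the appropriate label, or neither does. Consequently $[a]_{\sim} * w$ is undefined for one representative exactly when it is undefined for the other, and when defined the two values coincide by the previous paragraph.

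I do not expect a genuine obstacle here, since the definition of $\sim$ is tailored to make the effect on world states invariant across a class; the only point requiring care is the undefined case, and this is already supplied by the remark. I would conclude by noting that, having fixed both the value in the defined case and the consistency of the undefined case, the partial function $*: (A/\sim) \times W \to W$ is well-defined.
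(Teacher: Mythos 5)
Your proof is correct and follows essentially the same route as the paper: unfold $[a]_{\sim}=[a']_{\sim}$ into $a\sim a'$ and observe that the definition of $\sim$ already forces $a*w=a'*w$ for every $w$. One small but genuine difference: the paper's own proof asserts ``since actions are unrestricted in $W$, for any world state and any action there is a transition,'' i.e.\ it silently invokes what is only later formalised as the unrestricted-actions world condition, which makes the undefined case vacuous. You instead handle partiality directly via the remark that $a\sim a'$ forces the two representatives to be defined or undefined at exactly the same states, so your argument is actually the more careful one in the general (partial) setting where the proposition is stated; nothing is missing.
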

\begin{proof}
    We need to show that $a_{1} * w = a_{2} * w$ if $[a_{1}]_{\sim} = [a_{2}]_{\sim}$ for $a_{1},a_{2} \in A$ and $w \in W$.
    If $[a_{1}]_{\sim} = [a_{2}]_{\sim}$, then $a_{1} \sim a_{2}$.
    Since actions are unrestricted in $W$, for any world state and any action there is a transition with a source at that world state labelled by that action.
    $\implies$ there exists $d_{i}: w \xrightarrow{a_{i}} t(d_{1})$ for $i=1,2$.
    $\implies$ $a_{1} * w = d_{1} * w = t(d_{1})$ and $a_{2} * w = d_{2} * w = t(d_{1})$.
\end{proof}

\paragraph{Reversible actions}
An action $a \in A$ is called \textit{reversible} in a given state $w \in W$ if $a \in A_{R w}$ where $A_{Rw} = \{a \in A \mid\textit{ there exists }a' \in A\textit{ such that }a' \circ a * w = w \}$.
An action $a \in A$ is called \textit{reversible} if it is reversible in every $w \in W$.
An action that is not reversible is called \textit{irreversible}.

\paragraph{Properties of the quotient set $A/\sim$}

\begin{proposition}\label{prp:Asim-identity}
    $(A/\sim, \circ)$ has an identity element.
\end{proposition}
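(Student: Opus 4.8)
The plan is to take the equivalence class $[1]_\sim$ of the identity action as the candidate identity element of $(A/\sim, \circ)$ and then verify the two-sided identity law $[1]_\sim \circ [a]_\sim = [a]_\sim = [a]_\sim \circ [1]_\sim$ for every $a \in A$. The key input is action condition \ref{actcon:identity-action}, which guarantees the existence of $1 \in \hat{A} \subseteq A$ with $\hat{l}(1_w) = 1$ for all $w$. Since the trivial transition $1_w$ satisfies $s(1_w) = t(1_w) = w$, this immediately yields $1 * w = t(1_w) = w$ for every $w \in W$; that is, the partial function $f_1$ acts as the identity wherever it is defined.

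First I would reduce the identity law to two statements about $\sim$, namely $1 \circ a \sim a$ and $a \circ 1 \sim a$. Using the rule for applying composed actions to world states, for each $w \in W$ I would compute $(1 \circ a) * w = 1 * (a * w) = a * w$ and $(a \circ 1) * w = a * (1 * w) = a * w$. Here the first equality in each chain uses that a composite action applies its factors successively, and the final equality uses $1 * w' = w'$. This shows the partial functions $f_{1 \circ a}$ and $f_a$ (respectively $f_{a \circ 1}$ and $f_a$) agree at every $w$, so $1 \circ a \sim a$ and $a \circ 1 \sim a$ by the definition of $\sim$.

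Then, invoking the well-definedness of $\circ$ on $A/\sim$ established in the previous proposition, I would pass to equivalence classes: $[1]_\sim \circ [a]_\sim = [1 \circ a]_\sim = [a]_\sim$ and, symmetrically, $[a]_\sim \circ [1]_\sim = [a \circ 1]_\sim = [a]_\sim$. As $a$ was arbitrary, $[1]_\sim$ is a two-sided identity and the claim follows.

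The argument is essentially routine; the only point requiring care is the treatment of partiality. I would make sure that $(1 \circ a) * w$ is defined exactly when $a * w$ is — since the path $1 \circ a$ first requires the $a$-transition out of $w$ to exist — so that $1 \circ a$ and $a$ agree as partial functions not merely where both are defined but also in their domains of definition (and likewise for $a \circ 1$). This is what makes the equivalences $1 \circ a \sim a$ and $a \circ 1 \sim a$ hold in the precise sense given by the remark following the definition of $\sim$.
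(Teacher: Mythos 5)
Your proof is correct and takes essentially the same route as the paper's: both designate $[1]_\sim$ as the candidate identity and verify $1 \circ a \sim a$ and $a \circ 1 \sim a$, the paper arguing via sources and targets of composed transitions while you argue via the induced partial functions on $W$. Your explicit attention to the domains of definition (that $(1 \circ a) * w$ is defined exactly when $a * w$ is) is a point the paper's proof leaves implicit, and is a welcome addition rather than a divergence.
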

\begin{proof}
    To show that $(A/\sim, \circ)$ has an identity element we can show that there is an element $e \in A$ which satisfies (a) $[a]_{\sim} \circ [e]_{\sim} = [a]_{\sim}$ and (b) $[e]_{\sim} \circ [a]_{\sim} = [a]_{\sim}$ for all $a \in A$.
    We will prove that the identity action $1 \in A$ satisfies the above condition.
    Consider any transition $d: s(d) \xrightarrow{a} t(d)$ labelled by any action $a \in A$.
    
    (a) There exists a transition $1_{s(d)}: s(d) \xrightarrow{1} s(d)$ due to action condition \ref{actcon:identity-action}.
    $t(1_{s(d)})=s(d)$ $\implies$ $a \circ 1$ is defined for $d$.
    $s(a \circ 1) = s(1) = s(d) = s(a)$ and $t(a \circ 1) = t(a)$.
    $\implies$ $a \circ 1 \sim a$.
    $\implies$ $[a \circ 1]_{\sim} = [a]_{\sim}$.
    $\implies$ $[a]_{\sim} \circ [1]_{\sim} = [a]_{\sim}$.
    
    (b) There exists a transition $1_{t(d)}: t(d) \xrightarrow{1} t(d)$ due to action condition \ref{actcon:identity-action}.
    $t(1_{t(d)})=t(d)$ $\implies$ $1 \circ a$ is defined for $d$.
    $s(1 \circ a)=s(a)$ and $t(1 \circ a) = t(1) = t(a)$.
    $\implies$ $1 \circ a \sim a$.
    $\implies$ $[1 \circ a]_{\sim} = [a]_{\sim}$.
    $\implies$ $[1]_{\sim} \circ [a]_{\sim} = [a]_{\sim}$.
    Therefore $1 \in A$ satisfies the conditions for $[1]_{\sim}$ being an identity element in $(A/\sim, \circ)$.
\end{proof}

\begin{proposition}\label{prp:Asim-associative}
    $\circ$ is associative with respect to $(A/\sim, \circ)$.
\end{proposition}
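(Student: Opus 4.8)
The plan is to reduce associativity in the quotient $(A/\sim, \circ)$ to associativity of composition in the underlying set of actions $A$, and then to observe that the latter is immediate because $A$ consists of finite sequences of minimum actions and $\circ$ on $A$ is concatenation of such sequences. First I would fix arbitrary classes $[a]_{\sim}, [b]_{\sim}, [c]_{\sim} \in A/\sim$ with representatives $a, b, c \in A$, and expand both sides using the definition of $\circ$ on $A/\sim$ together with the well-definedness of $\circ$ established earlier. The left-hand side becomes
\[
([a]_{\sim} \circ [b]_{\sim}) \circ [c]_{\sim} = [a \circ b]_{\sim} \circ [c]_{\sim} = [(a \circ b) \circ c]_{\sim},
\]
while the right-hand side becomes
\[
[a]_{\sim} \circ ([b]_{\sim} \circ [c]_{\sim}) = [a]_{\sim} \circ [b \circ c]_{\sim} = [a \circ (b \circ c)]_{\sim}.
\]
Hence it suffices to show $(a \circ b) \circ c \sim a \circ (b \circ c)$.

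Next I would note that, since $A$ is the set of all finite sequences of minimum actions and $\circ$ on $A$ is concatenation of such sequences, $(a \circ b) \circ c$ and $a \circ (b \circ c)$ are in fact the \emph{same} element of $A$: both equal the single sequence obtained by writing down the minimum actions of $a$, then those of $b$, then those of $c$. Concatenation of finite sequences is associative, so the bracketing does not change the element at all. By reflexivity of $\sim$ (from the proposition that $\sim$ is an equivalence relation), equal elements are related, so these two identical actions lie in the same class, giving $[(a \circ b) \circ c]_{\sim} = [a \circ (b \circ c)]_{\sim}$ and completing the argument.

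I do not expect a genuine obstacle here; the only point requiring care is keeping the total concatenation operation on $A$ distinct from the partial composition on the transitions $D_A$ and the partial effect $*$ on world states. Because the induced composition on $A/\sim$ is defined purely through representatives in $A$, and because well-definedness of $\circ$ has already been dispatched, associativity is inherited directly from the associativity of sequence concatenation on $A$, with no appeal to the effect on world states needed. Should one nevertheless prefer an argument at the level of world states, one could instead verify $((a \circ b) \circ c) * w = (a \circ (b \circ c)) * w$ for every $w \in W$ by applying the minimum actions one at a time; but this route is strictly more laborious and adds nothing, so I would avoid it.
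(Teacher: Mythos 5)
Your proof is correct and follows essentially the same route as the paper's: both reduce associativity on $A/\sim$ to associativity of $\circ$ on $A$ via the defining identity $[a']_{\sim} \circ [a]_{\sim} = [a' \circ a]_{\sim}$, the paper simply citing associativity of $(A,\circ)$ where you spell out that it holds because $A$ consists of finite sequences of minimum actions under concatenation. Your additional remark distinguishing the total concatenation on $A$ from the partial composition on $D_A$ is a sensible clarification but does not change the argument.
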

\begin{proof}
    For $\circ$ to be associative we need $[a_{1}]_{\sim} \circ ([a_{2}]_{\sim} \circ [a_{3}]_{\sim}) = ([a_{1}]_{\sim} \circ [a_{2}]_{\sim}) \circ [a_{3}]_{\sim}$ for any $a_{1},a_{2},a_{3} \in A$.
    We have $a_{1} \circ (a_{2} \circ a_{3}) = (a_{1} \circ a_{2}) \circ a_{3}$ from the associativity of $\circ$ with respect to $(A, \circ)$, and $[a']_{\sim} \circ [a]_{\sim} = [a' \circ a]_{\sim}$ for any $a,a' \in A$ by definition of $\circ$ on $A/\sim$.
    $\implies$ $[a_{1}]_{\sim} \circ ([a_{2}]_{\sim} \circ [a_{3}]_{\sim}) = [a_{1} \circ (a_{2} \circ a_{3})]_{\sim} = [(a_{1} \circ a_{2}) \circ a_{3}]_{\sim} = [(a_{1} \circ a_{2})]_{\sim} \circ [a_{3}]_{\sim} = ([a_{1}]_{\sim} \circ [a_{2}]_{\sim}) \circ [a_{3}]_{\sim}$.
\end{proof}

In summary, we have $(A, \circ, *)$, which is a set $A$ along with two operators $\circ: A \times A \to A$ and $*: A \times W \to W$, and we have $(A/\sim, \circ, *)$, which is a set $A/\sim$ along with two operators $\circ: (A/\sim) \times (A/\sim) \to (A/\sim)$ and $*: (A/\sim) \times W \to W$.
We have shown that $\circ$ is associative with respect to $(A/\sim, \circ)$, and that $(A/\sim, \circ)$ has an identity element by action condition \ref{actcon:identity-action}.

\subsection{Algorithmic exploration of world structures}\label{sec:Algorithmic exploration of world structures}

To gain an intuition of the structure of different worlds and to illustrate our theoretical work with examples, we developed an algorithm that uses an agent’s minimum actions to generate the algebraic structure of the transformations of a world. We display this structure as a generalised Cayley table (a multiplication table for the distinct elements of the algebra). 

The main algorithm generates what we call a state Cayley table (Algorithm \ref{alg:Generate state Cayley table}). The elements of this state Cayley table are the world states reached when the row action and then the column action are performed in succession from an initial world state  $w$ (\textit{i.e.}, $column\_label * (row\_label * w)$). Once the state Cayley table has been generated, we can use it to generate the action Cayley table, in which the elements of the table are the equivalent elements in the algebra if the agent performs the row action followed by the column action (Algorithm \ref{alg:Generate action Cayley table}). For the sake of readability, Algorithms \ref{alg:AddElementToStateCayleyTable} to \ref{alg:SearchForBrokenEquivalenceClasses}, which are internal loops of Algorithm \ref{alg:Generate state Cayley table}, can be found in the Appendix. The code for their implementation of all the algorithms is available at \url{github.com/awjdean/CayleyTableGeneration}. 

It is worth noting the usefulness of these algorithms: independently of their use in this paper, to show to generate both SBDRL groups (Section \ref{sec:Reproducing SBDRL}) and symmetries that fall beyond SBDRL’s scope (Section \ref{sec:Beyond SBDRs}), researchers in representation learning and reinforcement learning are provided with an automatic mechanism to generate transitions of any arbitrary structure.

\begin{algorithm}[H]
    \caption{Generate state Cayley table.}\label{alg:Generate state Cayley table}
    \begin{algorithmic}[1]
        \Require $minimum\_actions$: a list of minimum actions, $w$: initial world state.
        \State $state\_cayley\_table \gets$ an empty square matrix with dimensions $len(minimum\_actions) \times len(minimum\_actions)$, with rows and columns labelled by $minimum\_actions$.

        \For{$a$ in $minimum\_actions$}
            \State Create an equivalence class for $a$.
            \State $state\_cayley\_table \gets$ AddElementToStateCayleyTable($state\_cayley\_table$, $w$, $a$).\Comment{See Algorithm \ref{alg:AddElementToStateCayleyTable}.}
        \EndFor

        \For{$row\_label$ in $state\_cayley\_table$}
            \State $equivalents\_found \gets$ SearchForEquivalents($state\_cayley\_table$, $w$, $row\_label$). \Comment{See Algorithm \ref{alg:SearchForEquivalents}.}
            \If{$len(equivalents\_found) \neq 0$}
                \State Merge the equivalence classes of equivalent minimum actions.
                \State Delete the row and column from $state\_cayley\_table$ for the minimum actions not labelling the merged equivalence class.
            \EndIf
        \EndFor
        
    \State Initialize an empty list $candidate\_cayley\_table\_elements$.
    \State $candidate\_cayley\_table\_elements \gets$ SearchForNewCandidates($state\_cayley\_table$, $w$, $candidate\_cayley\_table\_elements$).\Comment{See Algorithm \ref{alg:SearchForNewCandidates}.}
    
    \While{$len(candidate\_cayley\_table\_elements) > 0$}
        \State $a_{C} \gets$ pop an element from $candidate\_cayley\_table\_elements$.
        \State $equivalents\_found \gets$ SearchForEquivalents($state\_cayley\_table$, $w$, $a_{C}$). \Comment{See Algorithm \ref{alg:SearchForEquivalents}.}
        \If{$len(equivalents\_found) \neq 0$}
            \State Add $a_{C}$ to the relevant equivalence class.
            \State Continue to the next iteration of the while loop.
        \Else
            \State Check if $a_{C}$ breaks any of the existing equivalence classes.

            \State $broken\_equivalence\_classes \gets$ SearchForBrokenEquivalenceClasses($state\_cayley\_table$, $w$, $a_{C}$).\Comment{See Algorithm \ref{alg:SearchForBrokenEquivalenceClasses}.}
            
            \If{$len(broken\_equivalence\_classes) \neq 0$}
                \For{each new equivalence class}
                    \State $state\_cayley\_table \gets$ AddElementToStateCayleyTable($state\_cayley\_table$, $w$, $\textit{element labelling new equivalence class}$).\Comment{See Algorithm \ref{alg:AddElementToStateCayleyTable}.}
               \EndFor
            \EndIf
            \State Create new equivalence class for $a_{C}$.
            \State $state\_cayley\_table \gets$ AddElementToStateCayleyTable($state\_cayley\_table$, $w$, $a_{C}$).\Comment{See Algorithm \ref{alg:AddElementToStateCayleyTable}.}
        \EndIf
        \State $candidate\_cayley\_table\_elements \gets$ SearchForNewCandidates($state\_cayley\_table$, $w$, $candidate\_cayley\_table\_elements$).\Comment{See Algorithm \ref{alg:SearchForNewCandidates}.}
    \EndWhile
    \State \textbf{return} $state\_cayley\_table$
    \end{algorithmic}
\end{algorithm}

\begin{algorithm}[H]
\caption{Generate action Cayley table.}\label{alg:Generate action Cayley table}
\begin{algorithmic}[1]
    \Require $state\_cayley\_table$.
    \State $action\_cayley\_table \gets$ an empty square matrix with the dimensions of $state\_cayley\_table$, with rows and columns labelled by the rows and columns of $state\_cayley\_table$.
    \For{$row\_label$ in $action\_cayley\_table$}
        \For{$column\_label$ in $action\_cayley\_table$}
            \State $a_{C} \gets column\_label \circ row\_label$.
            \State $ec\_label \gets$ label of equivalence class containing $a_{C}$.
            \State $state\_cayley\_table[row\_label][column\_label] = ec_label$.
        \EndFor
    \EndFor
    \State \textbf{return:} $state\_cayley\_table$.
\end{algorithmic}
\end{algorithm}

\paragraph{Displaying the algebra}
We display the algebra in two ways:
(1) a \textit{$w$-state Cayley table}, which shows the resulting state of applying the row element to $w$ followed by the column element (\textit{i.e.,} $w\textit{-state Cayley table value} = \textit{column label} * (\textit{row label} * w$)), and (2) an \textit{action Cayley table}, which shows the resulting element of the algebra when the column element is applied to the left of the row element (\textit{i.e.}, $\textit{action Cayley table value} = \textit{column element} \circ \textit{row element}$).

\paragraph{Algebra properties}
We also check the following properties of the algebra algorithmically: (1) the presence of identity, including the presence of left and right identity elements separately, (2) the presence of inverses, including the presence of left and right inverses for each element, (3) associativity, (4) commutativity, and (5) the order of each element in the algebra.
For our algorithm to successfully generate the algebra of a world, the world must contain a finite number of states, the agent must have a finite number of minimum actions, and all the transformations of the world must be due to the actions of the agent.

\subsubsection{Example}\label{sec:Example}

For our example world $\mathscr{W}_{c}$, the equivalence classes shown in Figure \ref{fig:2x2-cyclical-min-act-equivalence} - those labelled by $1$, $R$, and $U$ - are the only equivalence classes in $A/\sim$.
The $w$-state Cayley table in Table \ref{tab:2x2-gridworld-no-walls-state-cayley} shows the final world state reached after the following operation: $\text{table entry} = \text{column element} * (\text{row element} * w)$.

The $w$-action Cayley table in Table \ref{tab:2x2-gridworld-no-walls-action-cayley} shows the equivalent action in $A/\sim$ for the same operation as the $w$-state Cayley table: $[\text{table entry}] * w = \text{column element} * (\text{row element} * w)$ for all $w \in W$.

The choice of the equivalence class label in Table \ref{tab:2x2-gridworld-no-walls-equivalence-classes} is arbitrary; it is better to think of each equivalence class as a distinct element as shown in the Cayley table in Table \ref{tab:2x2-gridworld-no-walls-action-cayley-abstract}.

There are four elements in the action algebra, therefore, if the agent learns the relations between these four elements, and then it has complete knowledge of the transformations of our example world.

\begin{table}
    \centering
    \begin{tabular}{c|c c c c c}
        $A/\sim$    &  $1$      & $D$       & $L$       & $RU$\\
         \hline
        $1$         & $w_{0}$   & $w_{2}$   & $w_{1}$   & $w_{3}$\\
        $D$         & $w_{2}$   & $w_{0}$   & $w_{3}$   & $w_{1}$\\
        $L$         & $w_{1}$   & $w_{3}$   & $w_{0}$   & $w_{2}$\\
        $RU$        & $w_{3}$   & $w_{1}$   & $w_{2}$   & $w_{0}$\\
    \end{tabular}
    \caption{$w_{0}$ state Cayley table for $A/\sim$.}
    \label{tab:2x2-gridworld-no-walls-state-cayley}
\end{table}

\begin{table}
    \centering
    \begin{tabular}{c|c c c c c}
        $A/\sim$    &  $1$      & $D$       & $L$       & $RU$\\
        \hline
        $1$         & $1$       & $D$       & $L$       & $RU$\\
        $D$         & $D$       & $1$       & $RU$      & $L$\\
        $L$         & $L$       & $RU$      & $1$       & $D$\\
        $RU$        & $RU$      & $L$       & $D$       & $1$\\
    \end{tabular}
    \caption{Action Cayley table for $A/\sim$.}
    \label{tab:2x2-gridworld-no-walls-action-cayley}
\end{table}

\begin{table}[H]
    \centering
    \begin{tabular}{c|l}
        $\sim$ equivalence class label & $\sim$ equivalence class elements\\
        \hline
        $1$         & $1, 11, DD, LL, RURU, ...$\\
        $D$         & $D, D1, 1D, RUL, LRU, ...$\\
        $L$         & $L, L1, RUD, 1L, DRU, ...$\\
        $RU$        & $RU, RU1, LD, DL, 1RU, ...$
    \end{tabular}
    \caption{Action Cayley table equivalence classes.}
    \label{tab:2x2-gridworld-no-walls-equivalence-classes}
\end{table}

\begin{table}[H]
    \centering
    \begin{tabular}{c|c c c c c}
        $A/\sim$    &  $1$      & $2$       & $3$       & $4$\\
        \hline
        $1$         & $1$       & $2$       & $3$       & $4$\\
        $2$         & $2$       & $1$       & $4$      & $3$\\
        $3$         & $3$       & $4$      & $1$       & $2$\\
        $4$        & $4$      & $3$       & $2$       & $1$\\
    \end{tabular}
    \caption{Abstract action Cayley table for $A/\sim$.}
    \label{tab:2x2-gridworld-no-walls-action-cayley-abstract}
\end{table}

\paragraph{Properties of $A/\sim$ algebra}

\begin{table}
    \centering
    \begin{tabular}{c|c}
        \textbf{Property}   & \textbf{Present?} \\
        \hline
        Totality            & Y\\
        Identity            & Y\\
        Inverse             & Y\\
        Associative         & Y\\
        Commutative         & Y
    \end{tabular}
    \caption{Properties of the $A/\sim$ algebra.}
    \label{tab:2x2-gridworld-no-walls-algebra-properties}
\end{table}

The properties of the $A/\sim$ algebra are displayed in Table \ref{tab:2x2-gridworld-no-walls-algebra-properties} and show that $A/\sim$ is a commutative group, where the no-op action is the identity, and all elements are their own inverses.
Since the action algebra of our example world is a group, it can be described by SBDRL.
The order of each element is given by Table \ref{tab:2x2-gridworld-no-walls-element orders}.

\begin{table}
    \centering
    \begin{tabular}{c|c}
        \textbf{Element}   & \textbf{Order} \\
        \hline
        $1$     & 1\\
        $D$     & 2\\
        $L$     & 2\\
        $RU$    & 2
    \end{tabular}
    \caption{Order of elements in $A/\sim$.}
    \label{tab:2x2-gridworld-no-walls-element orders}
\end{table}

\subsection{Conditions for SBDRL to apply}\label{sec:World conditions}

To simplify the problem, we only consider worlds where the transformations of the world are only due to the actions of an agent for the remainder of this paper unless otherwise stated. Therefore, we will only consider worlds with $D = D_{A}$.

To be a group, $A/\sim$ must satisfy the properties of (1) identity, (2) associativity, (3) closure, and (4) inverse.
The identity and associative properties are satisfied by Proposition \ref{prp:Asim-identity} and Proposition \ref{prp:Asim-associative} respectively. For the closure property to be satisfied, the following condition is sufficient:

\begin{world_condition}[Unrestricted actions]\label{wldcon:unrestricted-actions}
    For any action $a \in A$ and for any world state $w \in W$, there exists a transition $d \in D_{A}$ with $d: w \xrightarrow{a} t(d)$.
    In other words, $a * w$ is defined for all $a \in A$ and all $w \in W$.

\end{world_condition}

The inverse property of a group (4) is stricter than our current definition of the reversibility of an action. For the structure $A/\sim$ to have the inverse property, each element (action) in $A/\sim$ must not only be reversible from each starting state, but additionally, the inverse of a given element in $A/\sim$ must be the same for each starting state; for example, if $a'$ is the inverse of $a$ from a state $w \in W$ ($a' \circ a * w = w$), then $a'$ must be the inverse of $a$ from all states in $W$.

\begin{world_condition}[Inverse actions]\label{wldcon:inverse-actions}
    For each $a \in A/\sim$, there exists an $a' \in A/\sim$ such that $a' \circ a * w = w$ and $a \circ a' * w = w$ for all $w \in W$.
\end{world_condition}

\begin{proposition}\label{prp:Asim-group}
    If the world satisfies world conditions \ref{wldcon:unrestricted-actions} and \ref{wldcon:inverse-actions} then $(A/\sim, \circ)$ is a group.
\end{proposition}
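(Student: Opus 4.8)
The plan is to verify the four defining axioms of a group for $(A/\sim, \circ)$ one at a time, leaning on the two previously established propositions for the axioms that need no extra hypotheses and on the two world conditions for the remaining two. Concretely, associativity is already supplied by Proposition \ref{prp:Asim-associative}, and the existence of an identity element, namely $[1]_{\sim}$, is already supplied by Proposition \ref{prp:Asim-identity}; neither of these depends on the world conditions, so they can simply be cited. What remains to establish is closure and the existence of inverses.

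For closure I would argue from world condition \ref{wldcon:unrestricted-actions}. Since $A$ is by definition the set of all finite sequences of minimum actions, the concatenation $a' \circ a$ of any two representatives again lies in $A$, so $[a']_{\sim} \circ [a]_{\sim} = [a' \circ a]_{\sim}$ is a genuine element of $A/\sim$; unrestricted actions guarantee that this composite is everywhere defined and that $\circ$ is well-defined on classes (this uniformity is exactly what the earlier well-definedness proposition for $\circ$ on $A/\sim$ invokes when it uses that actions are unrestricted in $W$). Thus $\circ$ is a total, well-defined binary operation $(A/\sim) \times (A/\sim) \to A/\sim$.

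The substantive step is the inverse axiom, where world condition \ref{wldcon:inverse-actions} does the work. Given an arbitrary class $[a]_{\sim}$, condition \ref{wldcon:inverse-actions} supplies a class $[a']_{\sim}$ with $a' \circ a * w = w$ and $a \circ a' * w = w$ for every $w \in W$. I would then translate this statement about the effect $*$ on world states into an equality of classes: using the definition of $*$ on $A/\sim$ (which agrees with $*$ on any representative) together with $1 * w = w$, the displayed equalities read $(a' \circ a) * w = 1 * w$ and $(a \circ a') * w = 1 * w$ for all $w$, which is precisely $a' \circ a \sim 1$ and $a \circ a' \sim 1$ by the definition of $\sim$. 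Hence $[a']_{\sim} \circ [a]_{\sim} = [a' \circ a]_{\sim} = [1]_{\sim}$ and $[a]_{\sim} \circ [a']_{\sim} = [1]_{\sim}$, so $[a']_{\sim}$ is a two-sided inverse of $[a]_{\sim}$ relative to the identity $[1]_{\sim}$ of Proposition \ref{prp:Asim-identity}.

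With closure, associativity, identity and inverses all in hand, $(A/\sim, \circ)$ satisfies the group axioms. The only real obstacle is the inverse step, and it is a matter of care rather than difficulty: world condition \ref{wldcon:inverse-actions} is deliberately stronger than the state-by-state reversibility defined earlier, fixing a single $a'$ that reverses $a$ for all $w$ simultaneously, and this uniformity across $W$ is exactly what is needed to promote a state-level reversal into the quotient-level equality $[a']_{\sim} \circ [a]_{\sim} = [1]_{\sim}$. I would be careful to flag this, since a naive appeal to the preceding reversibility definition—where the reversing action is allowed to depend on the starting state—would not suffice to produce a well-defined inverse in $A/\sim$.
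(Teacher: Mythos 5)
Your proof is correct and follows the same route as the paper's: associativity and identity are cited from Propositions \ref{prp:Asim-associative} and \ref{prp:Asim-identity}, totality/closure comes from world condition \ref{wldcon:unrestricted-actions}, and inverses from world condition \ref{wldcon:inverse-actions}. The paper's own proof is a four-line citation of these facts; your explicit translation of $a'\circ a * w = w$ for all $w$ into $a'\circ a \sim 1$ (and hence $[a']_{\sim}\circ[a]_{\sim}=[1]_{\sim}$) fills in a step the paper leaves implicit here but uses later in Proposition \ref{prp:WC-inverse-actions-necessary}, and your caution about the uniformity of the inverse across states is well placed.
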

\begin{proof}

    Totality is given by world condition \ref{wldcon:unrestricted-actions}.
    Associativity is given by proposition \ref{prp:Asim-associative}.
    Identity element given by proposition \ref{prp:Asim-identity}.
    Inverse element is given by world condition \ref{wldcon:inverse-actions}.
\end{proof}

\begin{proposition}\label{prp:world-conditions-sufficient}
    If the world obeys world conditions \ref{wldcon:unrestricted-actions} and \ref{wldcon:inverse-actions}, then $*: (A/\sim) \times W \to W$ is a left group action.
\end{proposition}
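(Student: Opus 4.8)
The plan is to verify directly the two axioms in the Definition of group action, once the ambient set $A/\sim$ has been equipped with its group structure. The group $(A/\sim, \circ)$ is already supplied by Proposition~\ref{prp:Asim-group} (which is where both world conditions are consumed, condition~\ref{wldcon:inverse-actions} furnishing inverses and condition~\ref{wldcon:unrestricted-actions} closure), and the earlier proposition establishing that $[a]_{\sim} * w$ depends only on the class $[a]_{\sim}$ guarantees that $*\colon (A/\sim)\times W \to W$ is well defined on equivalence classes. World condition~\ref{wldcon:unrestricted-actions} then upgrades this to a \emph{total} map: since $a * w$ is defined for every $a\in A$ and every $w\in W$, the value $[a]_{\sim} * w = a * w$ exists for all inputs. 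Thus it remains only to check the identity axiom and the compatibility-with-composition axiom.

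First I would dispatch the identity axiom. By action condition~\ref{actcon:identity-action} the identity action $1$ labels every trivial transition $1_{w}$, so $1 * w = w$ and hence $[1]_{\sim} * w = w$ for all $w\in W$; since $[1]_{\sim}$ is the group identity of $A/\sim$ by Proposition~\ref{prp:Asim-identity}, this is exactly the required condition. For the compatibility axiom I would unfold both sides using the defining equations $[a']_{\sim} \circ [a]_{\sim} = [a'\circ a]_{\sim}$ and $[a]_{\sim} * w = a * w$: the left-hand side $[a']_{\sim} * ([a]_{\sim} * w)$ becomes $a' * (a * w)$, while the right-hand side $([a']_{\sim}\circ[a]_{\sim}) * w$ becomes $(a'\circ a) * w$. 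The whole axiom therefore reduces to the representative-level identity $(a'\circ a) * w = a' * (a * w)$.

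To close this reduced identity I would argue at the level of transitions. By world condition~\ref{wldcon:unrestricted-actions} there is a transition $d\colon w \xrightarrow{a} w_{1}$ with $w_{1} = a * w$, and likewise a transition $d'\colon w_{1} \xrightarrow{a'} w_{2}$ with $w_{2} = a' * w_{1}$. Since $t(d)=w_{1}=s(d')$ the composite path $d'\circ d$ is defined and is labelled $a'\circ a$, so it is a transition $w \xrightarrow{a'\circ a} w_{2}$; hence $(a'\circ a) * w = t(d'\circ d) = w_{2} = a' * (a * w)$, as needed. The main obstacle is precisely this compatibility step: it is where the path-composition structure of the multigraph $\mathscr{W}$ does the real work, and where world condition~\ref{wldcon:unrestricted-actions} is indispensable, since without totality the intermediate state $w_{1}$ or the composite $d'\circ d$ might fail to exist and the equation would be vacuous rather than genuinely satisfied. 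Everything else is bookkeeping that simply invokes results already proved.
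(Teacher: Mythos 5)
Your proposal is correct and follows essentially the same route as the paper's proof: it invokes Proposition~\ref{prp:Asim-group} for the group structure, verifies the identity axiom via the trivial transition labelled by the identity action, and establishes compatibility by composing the two transitions supplied by world condition~\ref{wldcon:unrestricted-actions} so that the composite path carries the label $a' \circ a$. The only difference is that you make explicit the well-definedness and totality bookkeeping that the paper leaves implicit, which is a small improvement in rigour rather than a different argument.
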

\begin{proof}

    We have already established that $(A/\sim, \circ)$ is a group (proposition \ref{prp:Asim-group}).
    Therefore, to show that $*$ is a left group action we only have to prove the group action conditions of (a) identity and (b) compatibility.
    Consider an arbitrary world state $w \in W$.
    
    (a) $[1]_{\sim} * w = 1 * w = 1_{w} * w = w$.
    
    (b) We need to show that $a' * (a * w) = (a' \circ a) * w$.
    Because actions are unrestricted in $W$, for any $w \in W$, there exists the transitions $d_{1}: w \xrightarrow{a} t(d_{1})$ and $d_{2}: t(d_{1}) \xrightarrow{a} t(d_{2})$.
    $\implies$ there exists the transition $(d_{2} \circ d_{1}): w \xrightarrow{a' \circ a} t(d_{2})$.
    Therefore, $(a' \circ a) * w = t(d_{2})$.
    Using the transitions $d_{1}$ and $ d_{2}$ for the LHS of the condition, $a' * ( a * w) = a' * t(d_{1}) = t(d_{2})$.
\end{proof}

From Proposition \ref{prp:world-conditions-sufficient}, if a world satisfies world conditions \ref{wldcon:unrestricted-actions} and \ref{wldcon:inverse-actions}, then the transformations of that world can be fully described using SBDRL (\textit{i.e.}, $*: (A/\sim) \times W \to W$ is a group action).

\begin{proposition}\label{prp:unrestricted-actions-necessary}
    If $*: (A/\sim) \times W \to W$ is a group action, then world condition \ref{wldcon:unrestricted-actions} is satisfied.
\end{proposition}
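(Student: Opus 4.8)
The plan is to read off the conclusion almost directly from what it means for $*$ to be a group action. The essential observation is that a group action, as defined earlier, is a \emph{map} $(A/\sim) \times W \to W$, and a map is by definition a total function: it must assign an output to every pair in its domain. Since the effect $*$ on $A/\sim$ was originally introduced only as a partial function — with $[a]_{\sim} * w$ declared undefined whenever no transition $d: w \xrightarrow{a} t(d)$ exists in $D_{A}$ — the hypothesis that $*$ is a group action forces this partial function to in fact be everywhere defined. The whole proof therefore amounts to transferring this totality, stated at the level of equivalence classes, back to the level of individual actions $a \in A$.

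Concretely, I would fix an arbitrary action $a \in A$ and an arbitrary world state $w \in W$, and pass to the equivalence class $[a]_{\sim} \in A/\sim$. Because $*$ is assumed to be a group action, the map $*: (A/\sim) \times W \to W$ is total, so $[a]_{\sim} * w$ is a well-defined element of $W$. By the definition of the effect of an equivalence class on a world state, $[a]_{\sim} * w = a * w$, and this is defined precisely when there exists a transition $d \in D_{A}$ with $d: w \xrightarrow{a} t(d)$. Hence such a transition must exist. Since $a$ and $w$ were arbitrary, this is exactly the assertion of world condition \ref{wldcon:unrestricted-actions}.

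Equivalently, and perhaps more transparently, I would argue by contrapositive: if world condition \ref{wldcon:unrestricted-actions} fails, then there is some $a \in A$ and $w \in W$ for which no transition $d: w \xrightarrow{a} t(d)$ exists, so $a * w$, and therefore $[a]_{\sim} * w$, is undefined; then $*$ is not a total map on $(A/\sim) \times W$ and so cannot be a group action. I do not expect a genuine obstacle here: the one point requiring care is simply the interface between the earlier \emph{partial}-function definition of $*$ and the totality that is silently built into the definition of a group action. Once that is made explicit, the statement is essentially definitional, which is appropriate given that it is meant as the converse companion to Proposition \ref{prp:world-conditions-sufficient}, establishing that unrestricted actions is \emph{necessary} for a group action, not merely sufficient.
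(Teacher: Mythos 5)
Your argument is correct and is essentially the paper's own proof, which simply notes that a group action is by definition a total operation so world condition \ref{wldcon:unrestricted-actions} must hold; you merely spell out the unwinding from $[a]_{\sim} * w$ being defined to the existence of a transition $d: w \xrightarrow{a} t(d)$, which the paper leaves implicit. Your contrapositive restatement likewise matches the paper's Proposition \ref{prp:WC-unrestricted-actions-necessary}.
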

\begin{proof}
    Since a group action is a full operation by definition,  world condition \ref{wldcon:unrestricted-actions} is satisfied.
\end{proof}

\begin{proposition}\label{prp:inverse-actions-sufficient}
    If $*: (A/\sim) \times W \to W$ is a group action, then world condition \ref{wldcon:inverse-actions} is satisfied.
\end{proposition}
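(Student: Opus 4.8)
The plan is to read the group inverse straight out of the group structure that any group action presupposes, and then verify it meets world condition \ref{wldcon:inverse-actions} using only the two group-action axioms. First I would note that the hypothesis ``$*:(A/\sim)\times W \to W$ is a group action'' already forces $(A/\sim,\circ)$ to be a group; in particular every element $[a]_{\sim}$ possesses a two-sided group inverse $[a]_{\sim}^{-1}$ with $[a]_{\sim}^{-1}\circ[a]_{\sim}=[a]_{\sim}\circ[a]_{\sim}^{-1}=[1]_{\sim}$, where $[1]_{\sim}$ is the identity element exhibited in Proposition \ref{prp:Asim-identity}. This is the element I would propose as the witness $a'$ in world condition \ref{wldcon:inverse-actions}.

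Next I would fix an arbitrary $w\in W$ and check the two required equalities. Reading $a'\circ a * w$ as $(a'\circ a)*w$ — the only well-typed parenthesisation, since $\circ$ operates within $A/\sim$ while $*$ maps $(A/\sim)\times W\to W$ — I would substitute $a'=[a]_{\sim}^{-1}$ to get $(a'\circ a)*w=[1]_{\sim}*w$, and then invoke the \emph{identity} axiom of the group action to conclude $[1]_{\sim}*w=w$. The symmetric computation $(a\circ a')*w=[1]_{\sim}*w=w$ handles the other side, and since $w$ was arbitrary, both equalities hold for all $w\in W$, which is precisely world condition \ref{wldcon:inverse-actions}. Note that the \emph{compatibility} axiom is not strictly needed, although it guarantees that the alternative reading $a'*(a*w)$ agrees with $(a'\circ a)*w$, so the argument is insensitive to how one parses the statement.

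I do not expect a genuine obstacle here; the only point requiring care is conceptual bookkeeping rather than computation. The force of the proposition is that a group action \emph{upgrades} the pointwise reversibility discussed earlier (where the reversing action could in principle differ from state to state) into a single inverse valid uniformly across all of $W$. That uniformity is exactly what the identity axiom step $[1]_{\sim}*w=w$ delivers, so I would be sure to cite the group-action identity property explicitly there rather than appealing to state-by-state reversibility. Beyond these two one-line applications of the axioms, no case analysis is required.
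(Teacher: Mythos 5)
Your proposal is correct and follows essentially the same route as the paper: extract the two-sided group inverse of $[a]_{\sim}$ guaranteed by the group structure, and apply the group-action identity axiom to conclude $(a'\circ a)*w = [1]_{\sim}*w = w$ (and symmetrically) for arbitrary $w$. The only cosmetic difference is that the paper parses the expression as $a^{-1}\circ(a*w)$ and therefore routes through the compatibility axiom to reach $(a^{-1}\circ a)*w$, a step you correctly observe is avoidable under the direct parenthesisation.
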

\begin{proof}
    If $*$ is a group action, 
    then $A/\sim$ is a group.
    If $A/\sim$ is a group, then for each $a \in A/\sim$ there is an inverse element $a^{-1}$ such that (1) $a^{-1} \circ a = 1$ and (2) $a \circ a^{-1} = 1$.

    For an arbitrary state $w \in W$, $a^{-1} \circ (a * w) = a^{-1} \circ (a * w)$, therefore $a^{-1} \circ (a * w) = (a^{-1} \circ a) * w$ from the group action compatibility condition, therefore  $a^{-1} \circ (a * w) = 1 * w$ from (1), therefore (3) $a^{-1} \circ (a * w) = w$ from the group action identity condition.

    Similarly, for an arbitrary state $w \in W$, $a \circ (a^{-1} * w) = a \circ (a^{-1} * w)$, therefore $a \circ (a^{-1} * w) = (a \circ a^{-1}) * w$ from the group action compatibility condition, therefore $a \circ (a^{-1} * w) = 1 * w$ from (2), therefore (4) $a \circ (a^{-1} * w) = w$ from the group action identity condition.

    (3) and (4) together are world condition \ref{wldcon:inverse-actions}.
\end{proof}

\begin{proposition}\label{prp:WC-unrestricted-actions-necessary}
    If a world does not satisfy world condition \ref{wldcon:unrestricted-actions}, then $*$ is not a group action.
\end{proposition}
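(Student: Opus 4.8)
The plan is to observe that this statement is logically the contrapositive of the already-established Proposition \ref{prp:unrestricted-actions-necessary}. That proposition asserts ``if $*: (A/\sim) \times W \to W$ is a group action, then world condition \ref{wldcon:unrestricted-actions} is satisfied,'' which has the form $P \implies Q$ with $P$ being ``$*$ is a group action'' and $Q$ being ``world condition \ref{wldcon:unrestricted-actions} holds.'' The statement to be proved here is exactly $\neg Q \implies \neg P$. Since a conditional and its contrapositive are logically equivalent, the result follows immediately from Proposition \ref{prp:unrestricted-actions-necessary} with no further work.

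For a self-contained direct argument I would instead proceed as follows. First I would unpack the negation of world condition \ref{wldcon:unrestricted-actions}: assuming it fails means there exist some action $a \in A$ and some world state $w \in W$ for which no transition $d \in D_{A}$ with $d: w \xrightarrow{a} t(d)$ exists, i.e. $a * w$ is undefined. By the definition of the induced operation on the quotient, $[a]_{\sim} * w = a * w$, and this is undefined precisely when $a * w$ is, so $[a]_{\sim} * w$ is undefined for this pair $([a]_{\sim}, w) \in (A/\sim) \times W$. Next I would recall that, by definition, a group action of a group $G$ on a set $X$ is a \emph{map} $G \times X \to X$, and a map is total: it must assign an output to every element of its domain. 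Since $*$ fails to be defined on at least one element of $(A/\sim) \times W$, it is not a total map on that domain and therefore cannot be a group action.

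I expect essentially no obstacle here, as the claim is immediate once one recognises the contrapositive relationship. The only point warranting care is the translation between the undefinedness of $a * w$ for a representative and the undefinedness of $[a]_{\sim} * w$ for the corresponding equivalence class; this is handled cleanly by the definition of $*$ on $A/\sim$ given earlier, which sets $[a]_{\sim} * w = a * w$ and declares it undefined exactly when the representative-level operation is. Given this, the argument is a one-line appeal either to logical contraposition or to the totality requirement built into the definition of a group action.
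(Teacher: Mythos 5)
Your direct argument is essentially identical to the paper's own proof: unpack the failure of world condition \ref{wldcon:unrestricted-actions} to get a pair with $a * w$ undefined, pass to $[a]_{\sim} * w$ being undefined, and conclude that $*$ is only a partial operation and hence not a group action. Your additional observation that the statement is the contrapositive of Proposition \ref{prp:unrestricted-actions-necessary} is also valid and arguably makes the paper's separate proof redundant, but it does not change the substance.
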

\begin{proof}
    If a world does not satisfy world condition \ref{wldcon:unrestricted-actions}, then there exists some world state $w \in W$ and some action $a \in A$ such that $a * w$ is undefined.
    Therefore, for $[a] \in A/\sim$, $[a] * w$ is undefined, and so $*: (A/\sim) \times W \to W$ is a partial operation and so not a group action.
\end{proof}

\begin{proposition}\label{prp:WC-inverse-actions-necessary}
    If a world does not satisfy world condition \ref{wldcon:inverse-actions}, then $A/\sim$ is not a group and so $*$ is not a group action.
\end{proposition}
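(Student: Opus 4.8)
The plan is to prove the contrapositive: I will show that if $(A/\sim, \circ)$ is a group, then world condition \ref{wldcon:inverse-actions} must hold, whence its failure forces $A/\sim$ not to be a group. The useful preliminary observation is that, by the results already established, $(A/\sim,\circ)$ carries a well-defined associative composition (Proposition \ref{prp:Asim-associative}) with identity element $[1]_{\sim}$ (Proposition \ref{prp:Asim-identity}). Consequently, the only group axiom whose failure can prevent $A/\sim$ from being a group is the existence of inverses, so it suffices to connect the group-theoretic inverse condition in $A/\sim$ with the state-wise condition appearing in world condition \ref{wldcon:inverse-actions}.

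The heart of the argument is this translation, carried out purely through the definitions of $\circ$ and $*$ on $A/\sim$ and of $\sim$. Assuming $A/\sim$ is a group, fix an arbitrary $[a]_{\sim}$ and let $[a']_{\sim}$ be its two-sided inverse, so that $[a']_{\sim} \circ [a]_{\sim} = [1]_{\sim}$ and $[a]_{\sim} \circ [a']_{\sim} = [1]_{\sim}$. By the definition of composition on the quotient these become $[a' \circ a]_{\sim} = [1]_{\sim}$ and $[a \circ a']_{\sim} = [1]_{\sim}$, i.e. $a' \circ a \sim 1$ and $a \circ a' \sim 1$. By the definition of $\sim$, the first of these says $(a' \circ a) * w = 1 * w$ for every $w \in W$, and since $1 * w = 1_{w} * w = w$ we get $(a' \circ a) * w = w$ for every $w \in W$; the second likewise gives $(a \circ a') * w = w$ for every $w \in W$. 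These two equalities are exactly those required by world condition \ref{wldcon:inverse-actions} for $[a]_{\sim}$, witnessed by $[a']_{\sim}$. As $[a]_{\sim}$ was arbitrary, world condition \ref{wldcon:inverse-actions} holds. Taking the contrapositive yields the first claim: if world condition \ref{wldcon:inverse-actions} fails, then $A/\sim$ is not a group.

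For the remaining clause I would simply appeal to the definition of a group action, which presupposes that the acting structure is a group: since $A/\sim$ is not a group, the map $*: (A/\sim) \times W \to W$ cannot be a group action. (This step is the same passage used implicitly in Proposition \ref{prp:inverse-actions-sufficient}, now run in the opposite direction.)

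I expect the only delicate point to be the bookkeeping in the middle paragraph: the inverse condition must be translated through the definitions of $\circ$, $*$ and $\sim$ together with the effect $1 * w = w$ of the identity action, and crucially \emph{not} through the group-action compatibility law, since that law has not been established at this stage and is in fact part of what is ultimately being denied. It is worth noting that the chain $[a' \circ a]_{\sim} = [1]_{\sim} \iff a' \circ a \sim 1 \iff (a' \circ a) * w = w$ for all $w \in W$ is a genuine biconditional, so world condition \ref{wldcon:inverse-actions} is in fact equivalent to $A/\sim$ possessing inverses; this is why the contrapositive route loses no information.
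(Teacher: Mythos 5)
Your proposal is correct and is essentially the paper's own argument: the paper proves the same implication by assuming an $a$ violating world condition \ref{wldcon:inverse-actions} nonetheless has a group inverse $a''$, deducing $a''\circ a \sim 1$ and $a\circ a'' \sim 1$, and then using $1*w=w$ to reach a contradiction — exactly the translation chain at the heart of your contrapositive. The only differences are cosmetic (contradiction versus contrapositive, plus your unneeded aside that inverses are the only axiom at stake), so no further comment is required.
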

\begin{proof}
    If a world does not satisfy world condition \ref{wldcon:inverse-actions}, then there exists an $a \in A/\sim$ such that there is no $a' \in A/\sim$ for which $a' \circ a * w = w$ or $a \circ a' * w = w$ for all $w \in W$.
    \textbf{Proof by contradiction.}
    Assume $a$ has an inverse $a'' \in A/\sim$.
    Therefore, $a'' \circ a \sim 1$ and $a \circ a'' \sim 1$.
    Since $1 * w = w$ for all $w \in W$, $a'' \circ a * w = w$ and $a \circ a'' * w = w$ for all $w \in W$, which is a contradiction.
\end{proof}

Proposition \ref{prp:world-conditions-sufficient} shows that world conditions \ref{wldcon:unrestricted-actions} and \ref{wldcon:inverse-actions} are sufficient conditions for $*$ to be a group action, while propositions \ref{prp:WC-unrestricted-actions-necessary} and \ref{prp:WC-inverse-actions-necessary} show that world conditions \ref{wldcon:unrestricted-actions} and \ref{wldcon:inverse-actions} are necessary conditions for $*$ to be a group action.
Since world conditions \ref{wldcon:unrestricted-actions} and \ref{wldcon:inverse-actions} are sufficient and necessary conditions for $*$ to be a group action and therefore $A/\sim$ to be a group, these conditions give a characterisation of the worlds with transformations (due to the actions of an agent) that can be fully described using SBDRL; in other words, if the transformations of a world can be fully described using SBDRL then that world satisfies world conditions \ref{wldcon:unrestricted-actions} and \ref{wldcon:inverse-actions}, and if a world satisfies world conditions \ref{wldcon:unrestricted-actions} and \ref{wldcon:inverse-actions} then the transformations of that world can be fully described using SBDRL.

\subsection{Action-homogeneous worlds}

Given that the algebraic structure underlying symmetric representations in SBDRL are groups, the (extra) condition of homogeneity directly applies:

\begin{world_condition}[Action homogeneity]\label{wldcon:action-homogeneity}
    For every pair $(w_{1}, w_{2}) \in W^{2}$, there exists a bijective map $\sigma_{(w_{1},w_{2})}: W \to W$ such that $\sigma_{(w_{1},w_{2})}(w_{1})=w_{2}$ and such that:
    
    \begin{enumerate}
        \item for every $d \in D_{A}$ with $d: s(d) \xrightarrow{a} t(d)$, there exists a $d' \in D_{A}$ with $d': \sigma_{(w_{1}, w_{2})}(s(d)) \xrightarrow{a} \sigma_{(w_{1}, w_{2})}(t(d))$;
        
        \item for every $d \in D_{A}$ with $d: s(d) \xrightarrow{a} t(d)$, there exists a $d' \in D_{A}$ with $d': \sigma^{-1}_{(w_{1}, w_{2})}(s(d)) \xrightarrow{a} \sigma^{-1}_{(w_{1}, w_{2})}(t(d))$.
    \end{enumerate}
\end{world_condition}

World condition \ref{wldcon:action-homogeneity} means that action sequences have the same result for any initial world state.
Essentially, this means that the world looks the same from any world state with respect to the relationships of actions.
We call worlds with world condition \ref{wldcon:action-homogeneity} \textit{action-homogeneous worlds}.

\medskip
 In this section, we have presented a mathematical formalism for describing transitions due to the actions of an agent between world states and an algorithm that generates them. We have then laid out an example world and shown that, after applying an equivalence relation, the algebra of the agent's actions in this world forms a group. We have also characterised the worlds with transformations that can be fully described by SBDRL by giving world conditions that are sufficient and necessary for the algebra of the transformations of that world to be a group. In the next section, we present transformations relevant to reinforcement learning that do not form groups and are thus beyond the expressive power of SBDL but that can be formulated using our framework.

\section{Beyond SBDRs}\label{sec:Beyond SBDRs}

In the previous section, we showed under which conditions, those of forming a (symmetry) group, SBDRL representations can be expressed within our framework, as defined in Section \ref{sec:Mathematical framework for an agent in an environment}. We will now consider worlds that are accounted for using our framework but that do not satisfy either world condition \ref{wldcon:unrestricted-actions} or world condition \ref{wldcon:inverse-actions}, that is, transformations that do not form a group action and for which the assumption of action-homogeneous worlds does not necessarily hold. We have selected examples with features that are common in simple reinforcement learning scenarios to illustrate the potential of our approach in extending SBDRL to symmetries that go beyond group actions.

In order to do so, we use two methods of treating actions that are not allowed to be used in certain world states (e.g., the agent trying to move through a wall or eat a consumable in a state where the agent is not in the same location as the consumable). Method 1 (Sections \ref{sec:identity reversible action-inhomogeneous world} and \ref{sec:identity irreversible inhomogeneous actions}) lets the agent select the actions but any actions that would have been undefined in a state \textit{w}  have the same effect in \textit{w} as the agent performing the identity action 1. Method 2 (Sections \ref{sec:masked reversible action-inhomogeneous world} and \ref{sec:masked irreversible action-inhomogeneous world}) does not let the agent select these actions and so considers them as undefined; this violates world condition \ref{wldcon:unrestricted-actions}. These two treatments of actions are common in reinforcement learning. We employ the computational methods outlined in Section \ref{sec:Algorithmic exploration of world structures} to generate the action algebras of the agent in these worlds and provide evidence for our statements. 

\subsection{Worlds without inverse actions}

In this section, we consider worlds that do not necessarily satisfy world condition \ref{wldcon:inverse-actions} but do satisfy world condition \ref{wldcon:unrestricted-actions}.

\begin{proposition}\label{prp:wc1_gives_monoid_action}
    Consider a world $\mathscr{W}$ with a set $W$ of world states and containing an agent with a set $A$ of actions.
    If $\mathscr{W}$ satisfies world condition \ref{wldcon:unrestricted-actions}, then $*: (A/\sim) \times W \to W'$, where $W' \subseteq W$, is the action of a monoid $A/\sim$ on $W$.
\end{proposition}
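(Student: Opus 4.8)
The plan is to verify the two defining properties of a monoid action: first that $(A/\sim, \circ)$ is a monoid, and second that $*$ satisfies the identity and compatibility axioms. The monoid structure is almost entirely assembled from results already in hand. Associativity of $\circ$ on $A/\sim$ is Proposition \ref{prp:Asim-associative}, and the existence of an identity element $[1]_\sim$ is Proposition \ref{prp:Asim-identity}; crucially, both hold independently of world condition \ref{wldcon:inverse-actions}. Closure (totality of $\circ$) is where world condition \ref{wldcon:unrestricted-actions} enters: since every $a * w$ is defined, every pair of action transitions is composable from any world state, so $[a']_\sim \circ [a]_\sim = [a' \circ a]_\sim$ is a total operation, its well-definedness on equivalence classes being the earlier proposition on composition. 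Together these three facts show $(A/\sim, \circ)$ is a monoid.

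For the action axioms, I would follow the argument of Proposition \ref{prp:world-conditions-sufficient} essentially verbatim, observing that neither axiom uses inverses. The identity axiom is immediate: $[1]_\sim * w = 1 * w = 1_{w} * w = w$ for every $w \in W$. For compatibility, fix $w \in W$ and $[a]_\sim, [a']_\sim \in A/\sim$. World condition \ref{wldcon:unrestricted-actions} guarantees transitions $d_{1}: w \xrightarrow{a} t(d_{1})$ and $d_{2}: t(d_{1}) \xrightarrow{a'} t(d_{2})$, whose composite $d_{2} \circ d_{1}: w \xrightarrow{a' \circ a} t(d_{2})$ gives $(a' \circ a) * w = t(d_{2})$, while $a' * (a * w) = a' * t(d_{1}) = t(d_{2})$; hence $[a']_\sim * ([a]_\sim * w) = ([a']_\sim \circ [a]_\sim) * w$.

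Finally, I would note that the codomain is written $W' \subseteq W$ rather than $W$ because, in the absence of world condition \ref{wldcon:inverse-actions}, the action need not be surjective: some world states may be unreachable under the images $f_{a}$, so the actual range $W'$ may be a proper subset of $W$. This does not affect the monoid-action axioms, which only require $*$ to take values in $W$.

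The main obstacle is conceptual rather than computational: recognising that the monoid-action axioms never invoke inverse elements, so the proof of Proposition \ref{prp:world-conditions-sufficient} transfers directly once world condition \ref{wldcon:inverse-actions} is dropped, with the only genuine dependence being on world condition \ref{wldcon:unrestricted-actions} to secure closure and the existence of composable transitions. Care is needed only to confirm that the well-definedness of $\circ$ and of $*$ on equivalence classes, both established earlier using the unrestricted-actions hypothesis, indeed survive without the inverse hypothesis — which they do, since those proofs cite only world condition \ref{wldcon:unrestricted-actions}.
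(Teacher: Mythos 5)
Your proposal is correct and follows essentially the same route as the paper: the monoid structure of $(A/\sim,\circ)$ is assembled from totality (world condition \ref{wldcon:unrestricted-actions}), Proposition \ref{prp:Asim-associative}, and Proposition \ref{prp:Asim-identity}, and the action axioms are inherited from the inverse-free part of the argument for Proposition \ref{prp:world-conditions-sufficient}. If anything you are more careful than the paper's own proof, which, after establishing the monoid structure, simply asserts that $*$ being everywhere defined makes it a monoid action, whereas you explicitly re-verify the identity and compatibility axioms and explain the role of $W'\subseteq W$.
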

\begin{proof}
    (1) Totality of $A/\sim$ is given by world condition \ref{wldcon:unrestricted-actions}.
    (2) Associativity of $A/\sim$ is given by proposition \ref{prp:Asim-associative}.
    (3) Identity element of $A/\sim$ is given by proposition \ref{prp:Asim-identity}.
    Since $A/\sim$ satisfies properties (1), (2), and (3), $A/\sim$ is a monoid.
    
    $*$ is defined for any $a \in A$ and $w \in W$, therefore $*$ is a monoid action.
\end{proof}

\begin{remark}
    If all actions are reversible in $\mathscr{W}$ then $W' = W$.
    If any action is irreversible in $\mathscr{W}$ then $W' \subset W$.
\end{remark}

\subsubsection{Example 1: reversible action-inhomogeneous world}\label{sec:identity reversible action-inhomogeneous world}

To turn the $2 \times 2$ cyclical grid world $\mathscr{W}_{c}$ used previously from a reversible action-homogeneous world to a reversible action-inhomogeneous world we add a wall to the world as shown in Figure \ref{fig:2x2-cyclical-grid-world-wall-states} to give world $\mathscr{W}_{wall}$.

\begin{figure}[H]
  \centering
    \begin{subfigure}[b]{0.45\linewidth}
        \centering
        \includegraphics[width=0.5\linewidth]{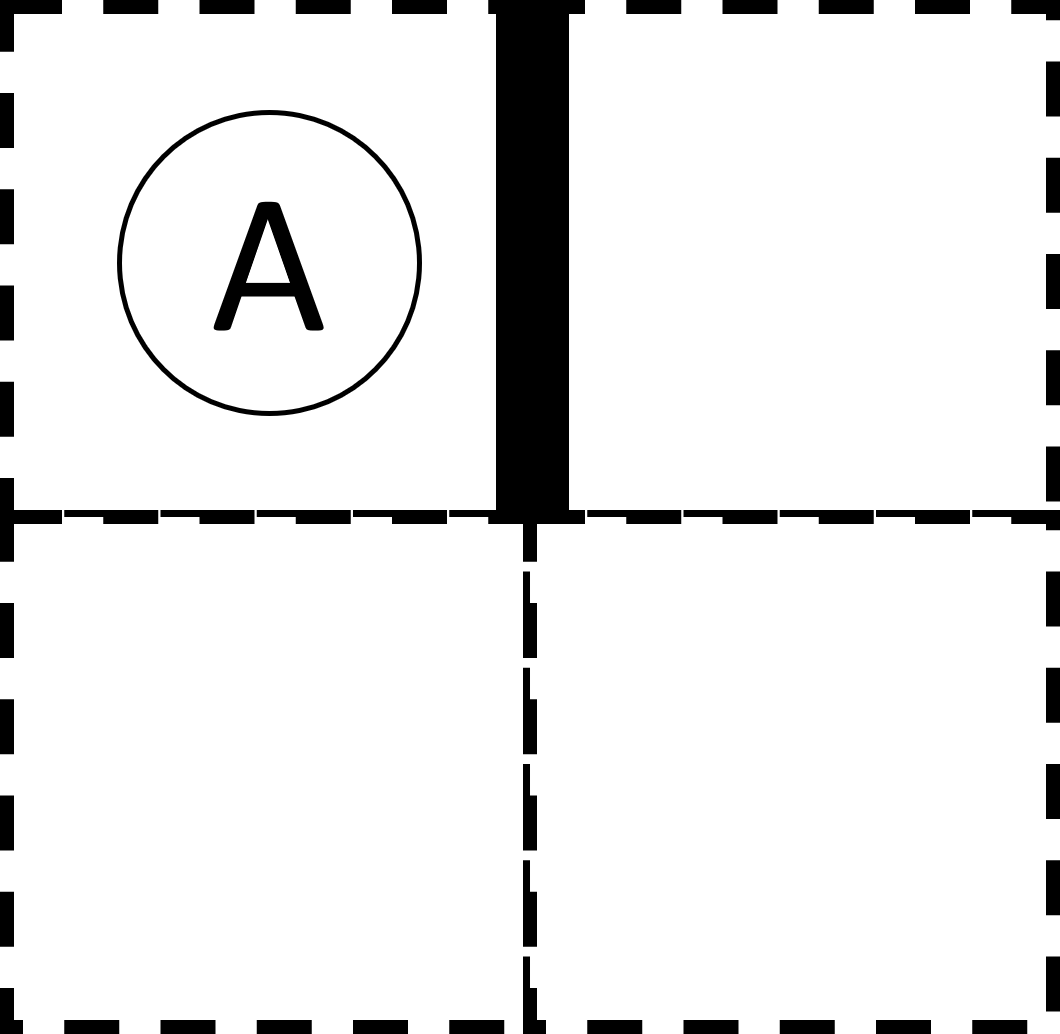}
        \caption{$w_{0}$}
        \vspace{0.25cm}
    \end{subfigure}
    \begin{subfigure}[b]{0.45\linewidth}
        \centering
        \includegraphics[width=0.5\linewidth]{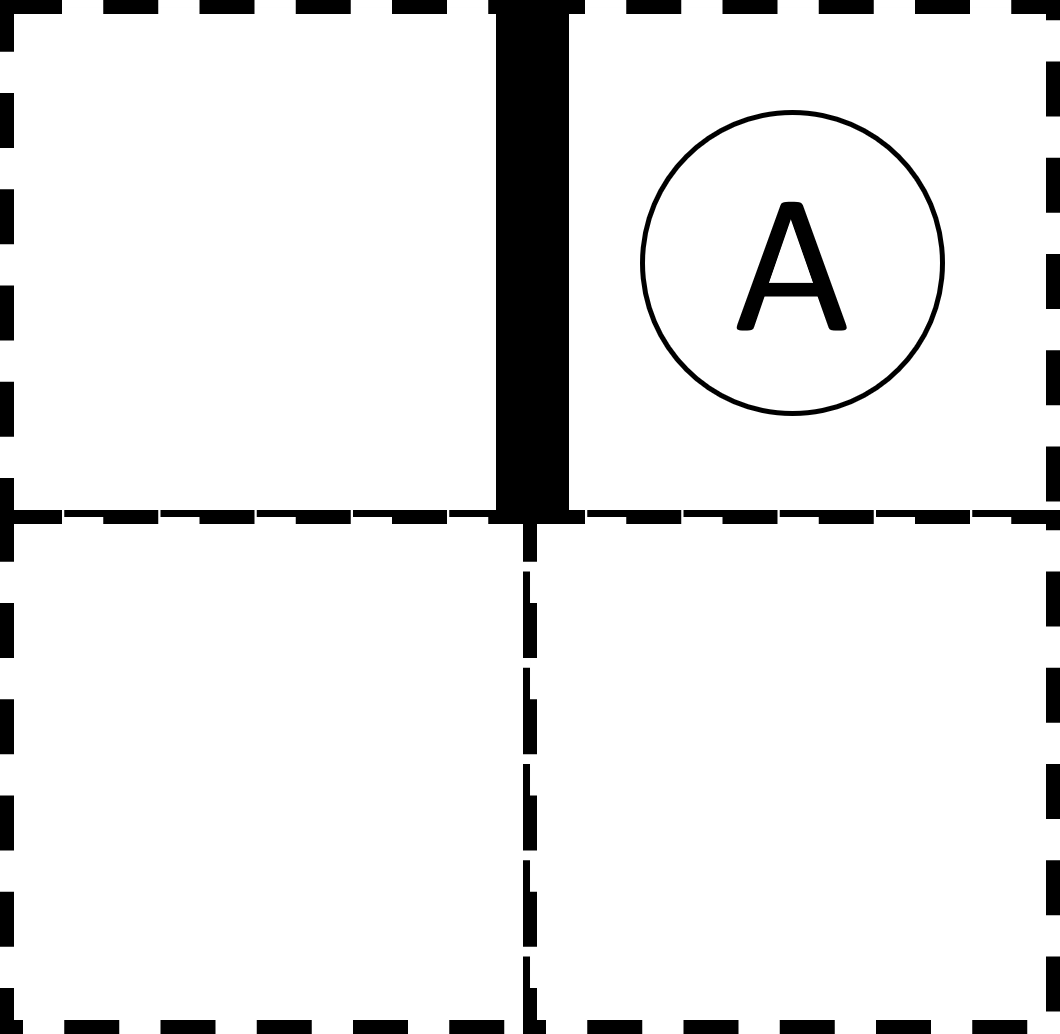}
        \caption{$w_{1}$}
        \vspace{0.25cm}
    \end{subfigure}
    \begin{subfigure}[b]{0.45\linewidth}
        \centering
        \includegraphics[width=0.5\linewidth]{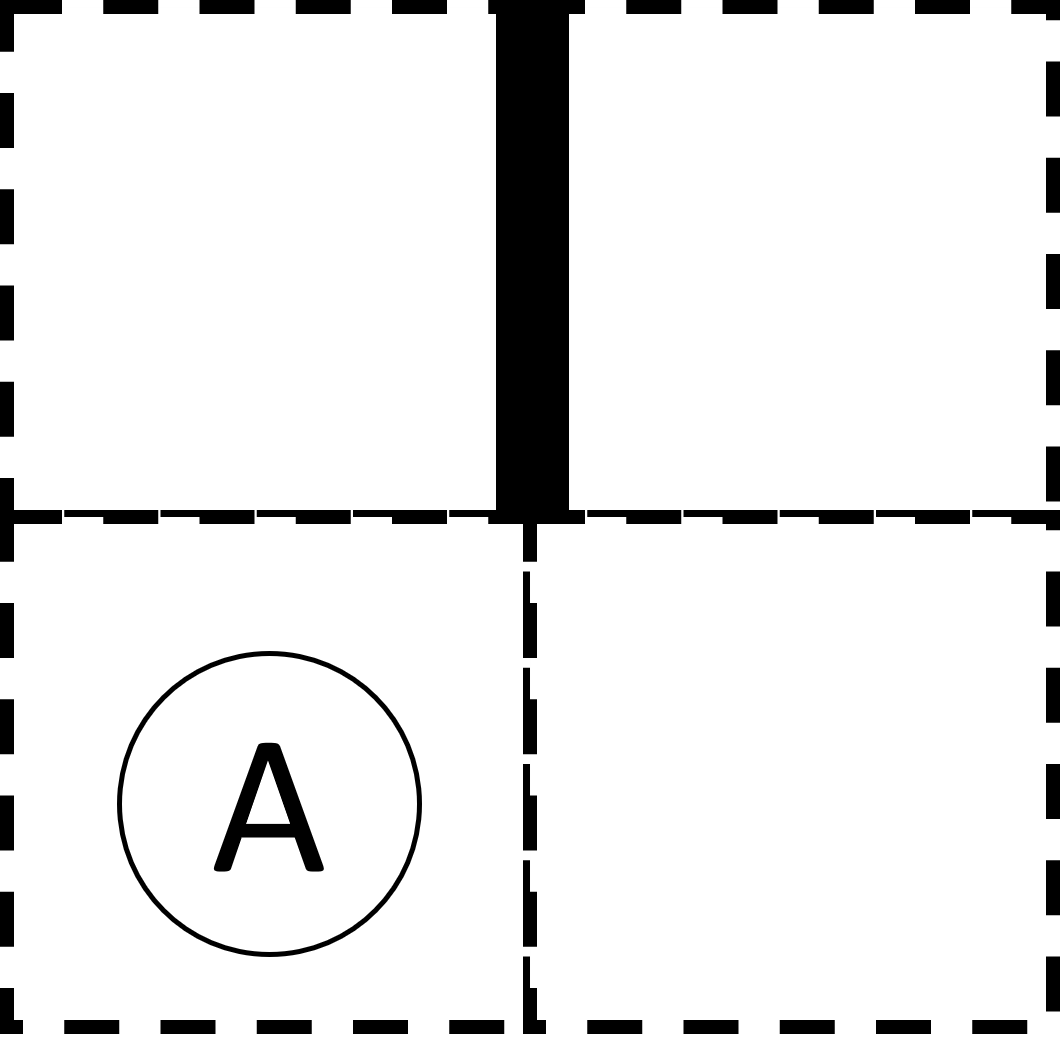}
        \caption{$w_{2}$}
    \end{subfigure}
    \begin{subfigure}[b]{0.45\linewidth}
        \centering
        \includegraphics[width=0.5\linewidth]{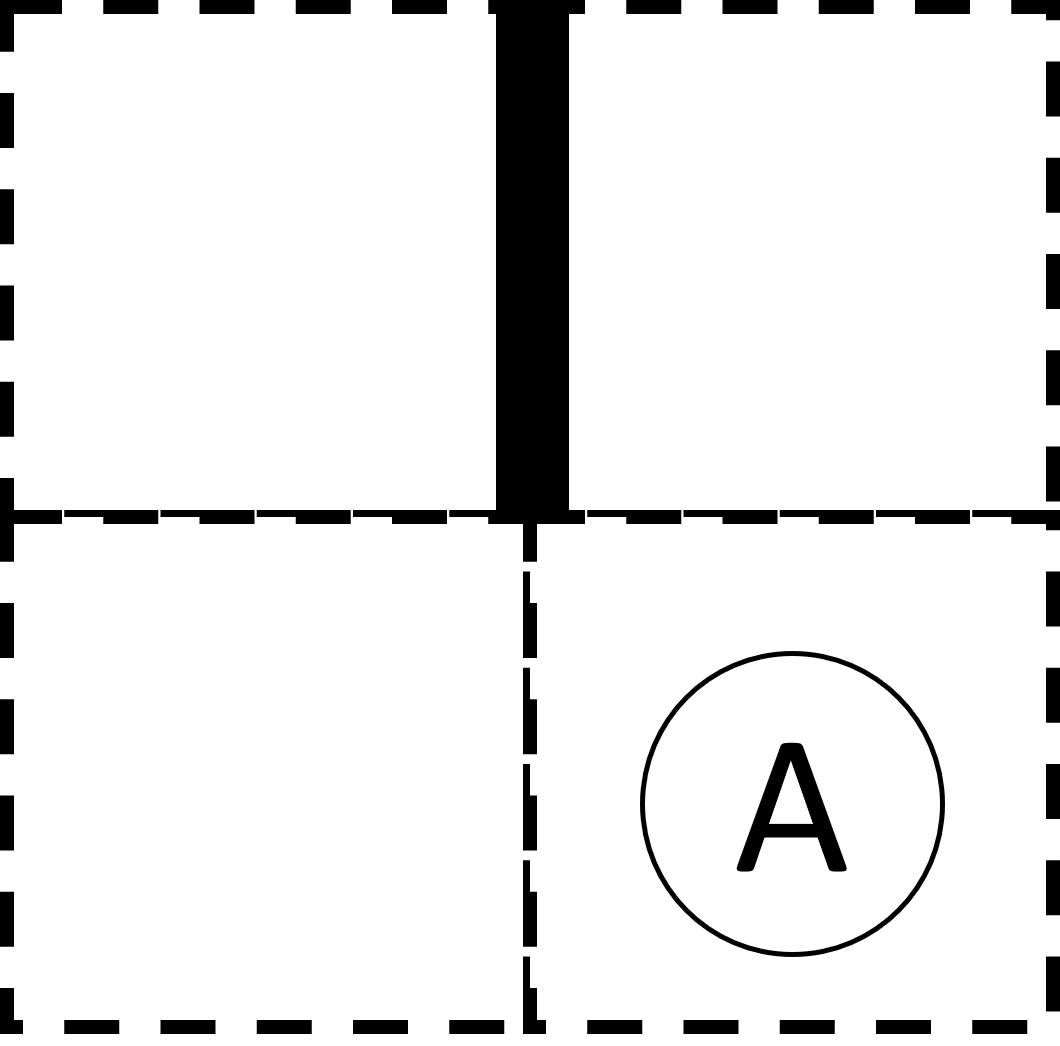}
        \caption{$w_{3}$}
    \end{subfigure}
  \caption{
  The world states of a cyclical $2\times 2$ grid world $\mathscr{W}_{wall}$ with a wall, where changes to the world are due to an agent moving either up, down, left, or right.
  The position of the agent in the world is represented by the position of the circled A.
  The treatment of the wall is explained when needed.
  }
  \label{fig:2x2-cyclical-grid-world-wall-states}
\end{figure}

This wall is said to \textit{restrict} the actions of the agent.
We say that actions restricted by the wall affect the world in the same way as the identity action; for example, if the agent is directly to the left of a wall and performs the `move to the right' action, then this action is treated like the identity action and so the state of the world does not change (see Table \ref{tab:2x2-gridworld-minimum-transitions-wall-identity} and Figure \ref{fig:2x2-cyclical-min-actions-wall-identity}).

\begin{table}[H]
    \centering
    \begin{tabular}{c|c c c c c}
                &  $1$      & $U$       & $D$       & $L$               & $R$\\
         \hline
        $w_{0}$ & $w_{0}$   & $w_{2}$   & $w_{2}$   & $w_{1}$           & \textbf{$w_{0}$}\\
        $w_{1}$ & $w_{1}$   & $w_{3}$   & $w_{3}$   & \textbf{$w_{1}$}  & $w_{0}$\\
        $w_{2}$ & $w_{2}$   & $w_{0}$   & $w_{0}$   & $w_{3}$           & $w_{3}$\\
        $w_{3}$ & $w_{3}$   & $w_{1}$   & $w_{1}$   & $w_{2}$           & $w_{2}$\\
    \end{tabular}
    \caption{
    Each entry in this table shows the outcome state of the agent performing the action given in the column label when in the world state given by the row label.
    }
    \label{tab:2x2-gridworld-minimum-transitions-wall-identity}
\end{table}

\begin{figure}
    \centering
    \includegraphics[width=0.5\linewidth]{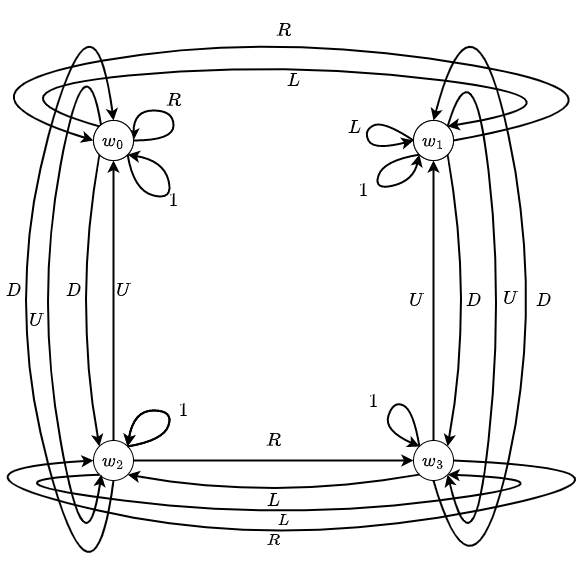}
    \caption{
    A transition diagram of the labelled transitions in Table \ref{tab:2x2-gridworld-minimum-transitions-wall-identity}.
    }
    \label{fig:2x2-cyclical-min-actions-wall-identity}
\end{figure}

\paragraph{Properties and structure of $A/\sim$}
The action Cayley table for this world with the identity treatment of the walls contains 26 elements.
As shown in Table \ref{tab:identity-walls-properties}, $A/\sim$ is a monoid.

\begin{table}[H]
    \centering
    \begin{tabular}{c|c}
        \textbf{Property}   & \textbf{Present?} \\
        \hline
        Totality            & Y\\
        Identity            & Y\\
        Inverse             & N\\
        Associative         & Y\\
        Commutative         & Y
    \end{tabular}
    \caption{Properties of the $A/\sim$ algebra.}
    \label{tab:identity-walls-properties}
\end{table}

Adding a single wall to the world has massively increased the complexity of the transition algebra of the world.
While the transition algebra of $\mathscr{W}_{c}$ has four elements, the transition algebra of $\mathscr{W}_{wall}$ with restricted actions treated as identity actions contains 26 elements.

The restrictiveness of the group inverse condition should be noted.
For $\mathscr{W}_{wall}$ with restricted actions treated as identity actions, every action is reversible from a particular state $w \in W$.
However, the action that takes $a$ back to its starting state is not necessarily the same any starting state $w \in W$.
For the inverse property to be present, the inverse for each element must be the same from any starting state (\textit{i.e.}, the inverse must be independent of the starting state).
$\mathscr{W}_{wall}$ with restricted actions treated as identity actions is proof that it is possible to have a world where all actions are reversible but for some of those actions to not have an inverse action.

\subsubsection{Example 2: Reversible action-inhomogenous world without walls}

Consider a world $\mathscr{W}_{block}$ with the world states in Figure \ref{fig:movable_block_world_states} and with movement along a single 1D cyclical axis with a movable block.
If the agent is in the location directly to the left of the block and moves into the block, the block moves one location in the direction of the agent's movement and the agent moves into the location previously occupied by the block (see Table \ref{tab:4x1-gridworld-minimum-transitions-moveable-block} and Figure \ref{fig:4x1-block-min-actions-wall}).

\begin{figure}[htbp]
  \centering
  \begin{subfigure}{0.48\textwidth}
    \centering
    \includegraphics[width=\textwidth]{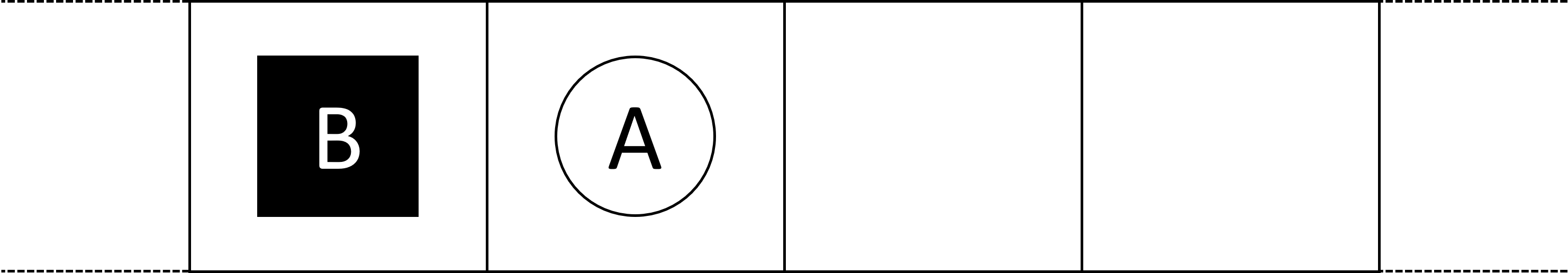}
    \caption{$w_{0}$}
  \end{subfigure}%
  \hfill
  \begin{subfigure}{0.48\textwidth}
    \centering
    \includegraphics[width=\textwidth]{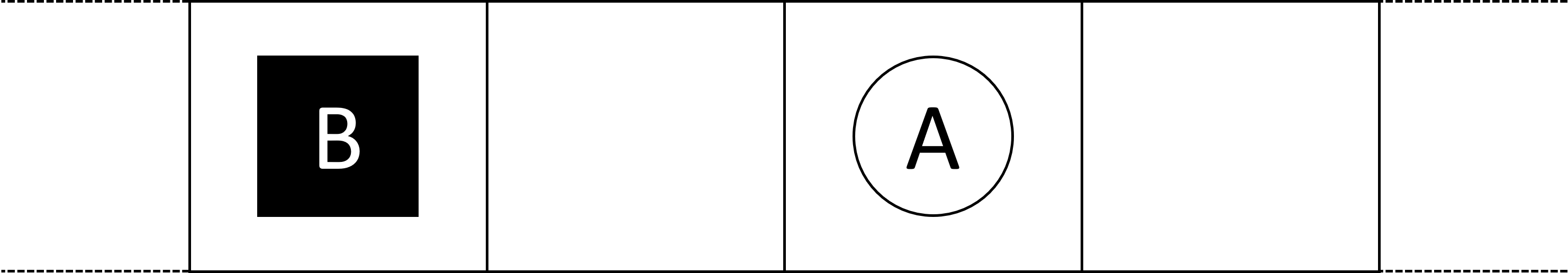}
    \caption{$w_{1}$}
  \end{subfigure}%
  \vspace{0.5cm}
  \begin{subfigure}{0.48\textwidth}
    \centering
    \includegraphics[width=\textwidth]{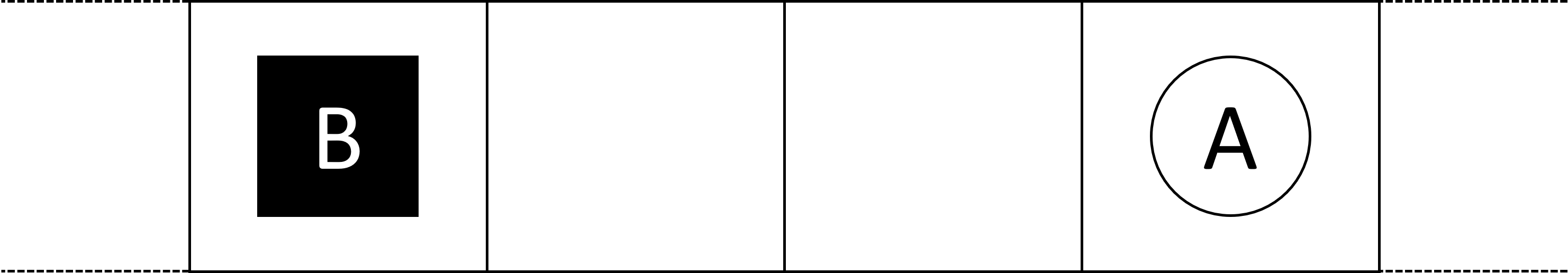}
    \caption{$w_{2}$}
  \end{subfigure}%
  \hfill
  \begin{subfigure}{0.48\textwidth}
    \centering
    \includegraphics[width=\textwidth]{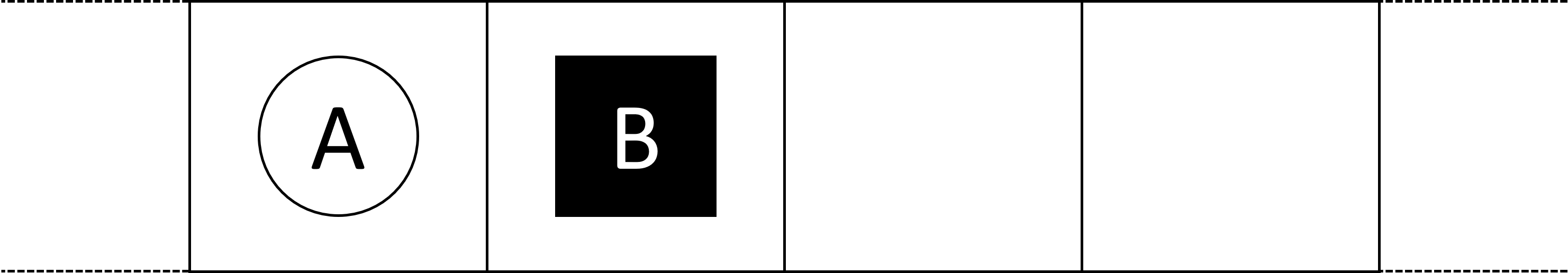}
    \caption{$w_{3}$}
  \end{subfigure}%
  \vspace{0.5cm}
  \begin{subfigure}{0.48\textwidth}
    \centering
    \includegraphics[width=\textwidth]{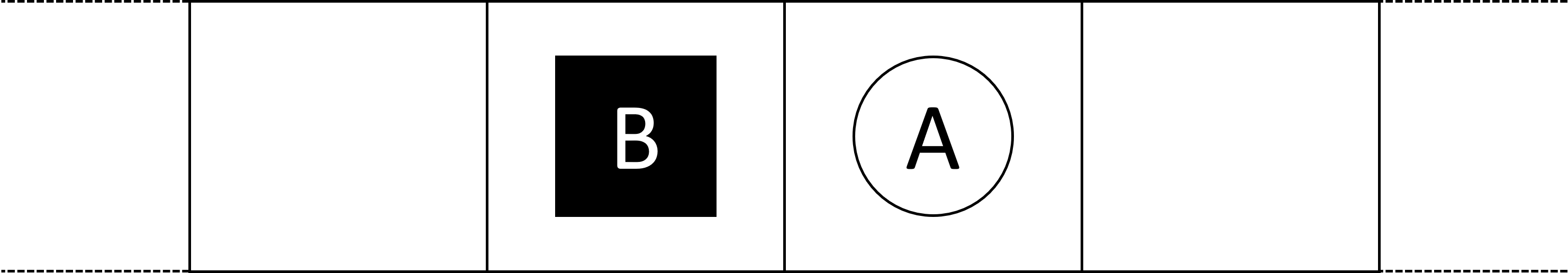}
    \caption{$w_{4}$}
  \end{subfigure}%
  \hfill
  \begin{subfigure}{0.48\textwidth}
    \centering
    \includegraphics[width=\textwidth]{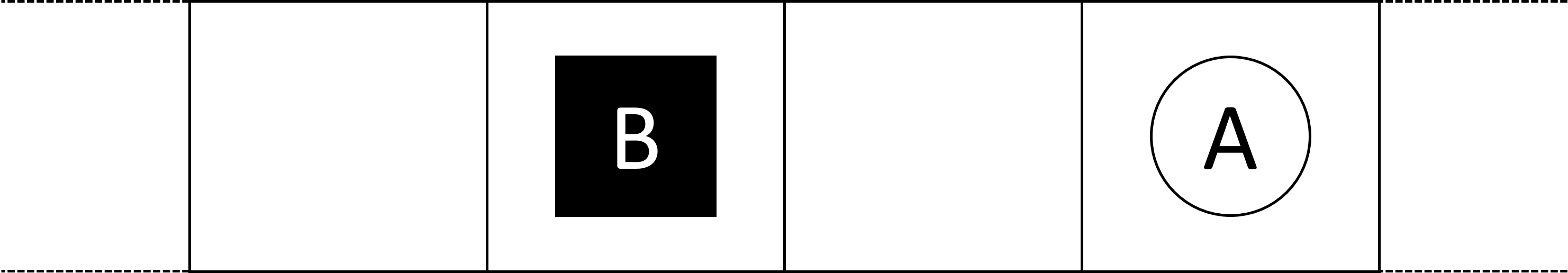}
    \caption{$w_{5}$}
  \end{subfigure}%
  \vspace{0.5cm}
  \begin{subfigure}{0.48\textwidth}
    \centering
    \includegraphics[width=\textwidth]{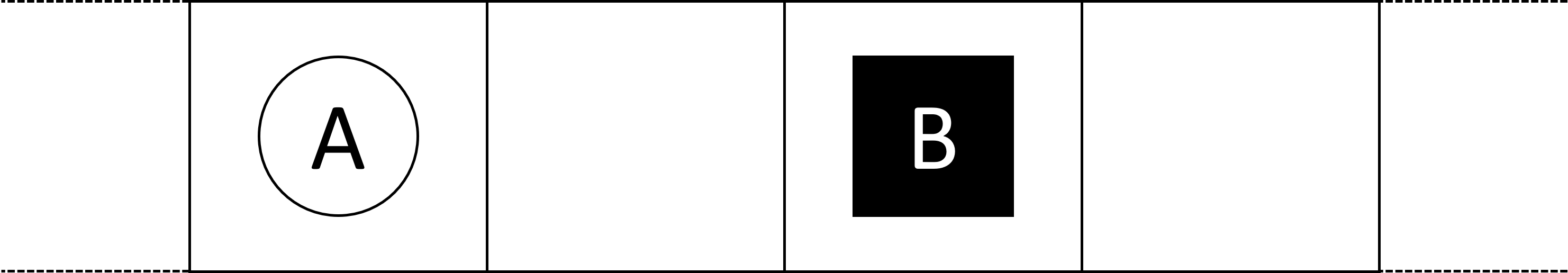}
    \caption{$w_{6}$}
  \end{subfigure}%
  \hfill
  \begin{subfigure}{0.48\textwidth}
    \centering
    \includegraphics[width=\textwidth]{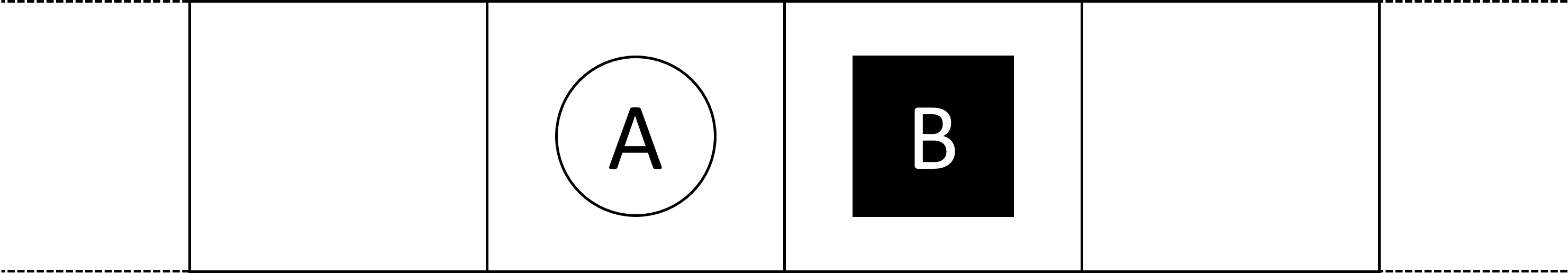}
    \caption{$w_{7}$}
  \end{subfigure}%
  \vspace{0.5cm}
  \begin{subfigure}{0.48\textwidth}
    \centering
    \includegraphics[width=\textwidth]{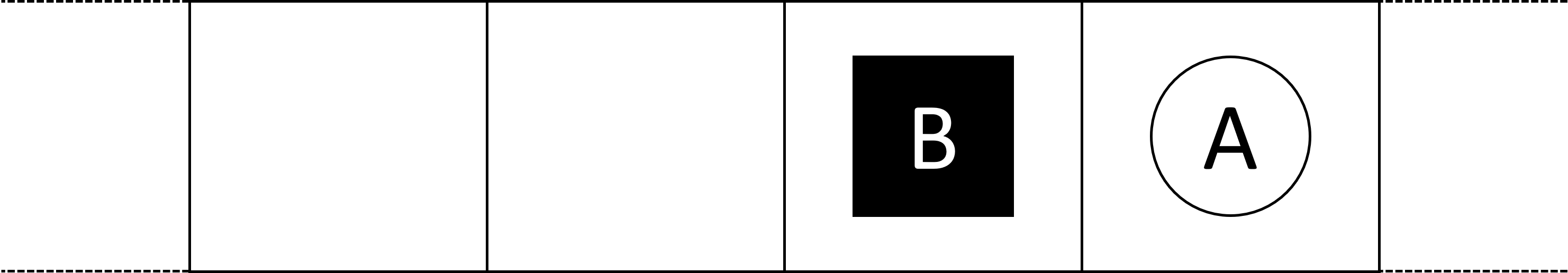}
    \caption{$w_{8}$}
  \end{subfigure}%
  \hfill
  \begin{subfigure}{0.48\textwidth}
    \centering
    \includegraphics[width=\textwidth]{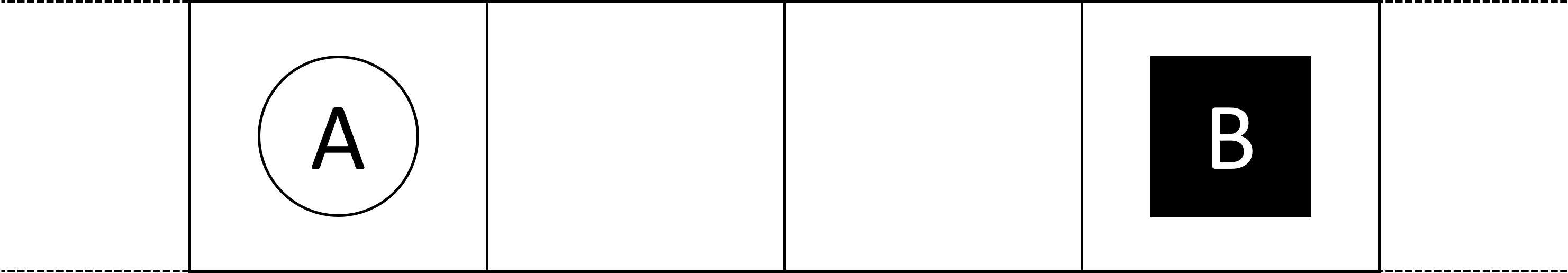}
    \caption{$w_{9}$}
  \end{subfigure}%
    \vspace{0.5cm}
  \begin{subfigure}{0.48\textwidth}
    \centering
    \includegraphics[width=\textwidth]{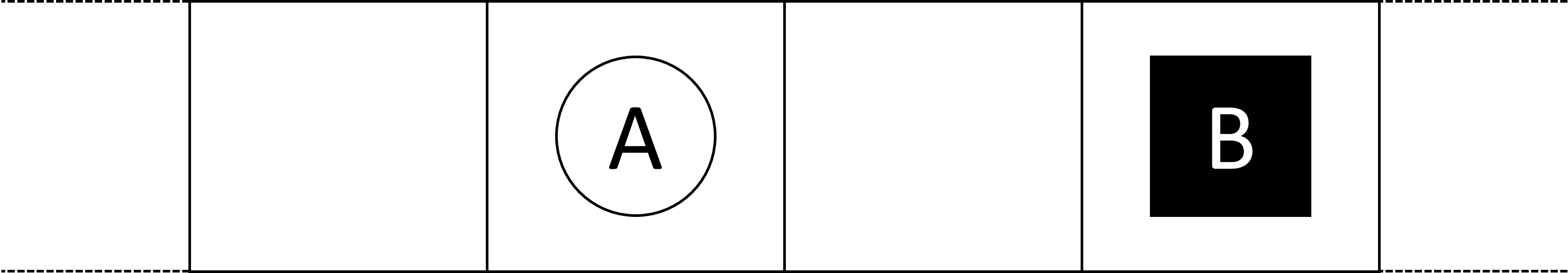}
    \caption{$w_{10}$}
  \end{subfigure}%
  \hfill
  \begin{subfigure}{0.48\textwidth}
    \centering
    \includegraphics[width=\textwidth]{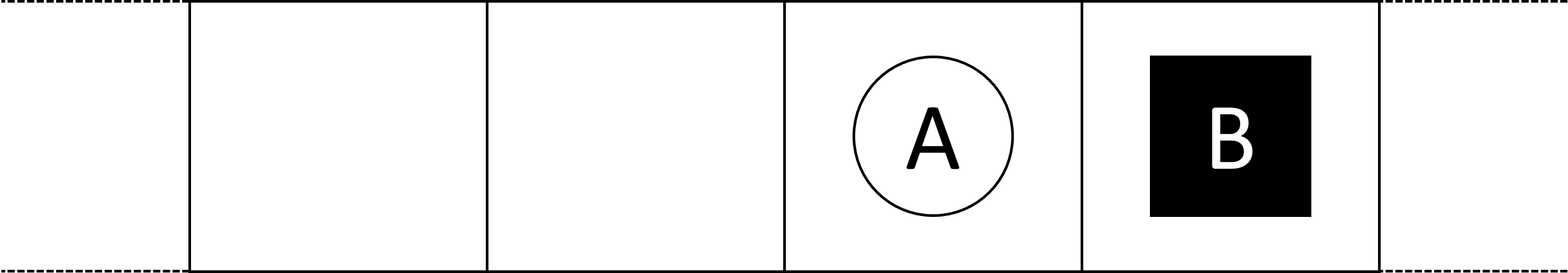}
    \caption{$w_{11}$}
  \end{subfigure}%
  \caption{World states of a world containing an agent and a movable block.}
  \label{fig:movable_block_world_states}
\end{figure}

\begin{table}[H]
    \centering
    \begin{tabular}{c|c c c c c}
                &  $1$      & $L$      & $R$\\
         \hline
        $w_{0}$ & $w_{0}$   & $w_{9}$   & $w_{1}$\\
        $w_{1}$ & $w_{1}$   & $w_{0}$   & $w_{2}$\\
        $w_{2}$ & $w_{2}$   & $w_{1}$   & $w_{3}$\\
        $w_{3}$ & $w_{3}$   & $w_{5}$   & $w_{7}$\\
        $w_{4}$ & $w_{4}$   & $w_{0}$   & $w_{5}$\\
        $w_{5}$ & $w_{5}$   & $w_{4}$   & $w_{3}$\\
        $w_{6}$ & $w_{6}$   & $w_{8}$   & $w_{7}$\\
        $w_{7}$ & $w_{7}$   & $w_{6}$   & $w_{11}$\\
        $w_{8}$ & $w_{8}$   & $w_{4}$   & $w_{6}$\\
        $w_{9}$ & $w_{9}$   & $w_{8}$   & $w_{10}$\\
        $w_{10}$ & $w_{10}$ & $w_{9}$   & $w_{11}$\\
        $w_{11}$ & $w_{11}$ & $w_{10}$  & $w_{2}$\\
    \end{tabular}
    \caption{
    Each entry in this table shows the outcome state of the agent performing the action given in the column label when in the world state given by the row label.
    }
    \label{tab:4x1-gridworld-minimum-transitions-moveable-block}
\end{table}

\begin{figure}[H]
    \centering
    \includegraphics[width=0.7\linewidth]{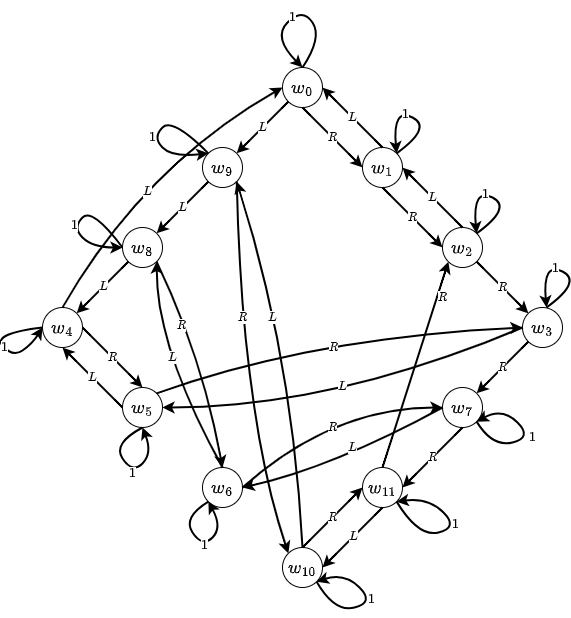}
    \caption{
    A transition diagram of the labelled transitions in Table \ref{tab:4x1-gridworld-minimum-transitions-moveable-block}.
    }
    \label{fig:4x1-block-min-actions-wall}
\end{figure}

The action Cayley table for $\mathscr{W}_{block}$ contains 17 elements.
As shown by Table \ref{tab:block-world-properties}, $A/\sim$ is a monoid.

\begin{table}[H]
    \centering
    \begin{tabular}{c|c}
        \textbf{Property}   & \textbf{Present?} \\
        \hline
        Totality            & Y\\
        Identity            & Y\\
        Inverse             & N\\
        Associative         & Y\\
        Commutative         & N
    \end{tabular}
    \caption{Properties of the $A/\sim$ algebra.}
    \label{tab:block-world-properties}
\end{table}

\begin{remark}
    Adding a wall or a moveable block breaks the symmetry of the original $2 \times 2$ cyclical world $\mathscr{W}_{c}$; this manifests as the action algebra of the world being much more complex.
\end{remark}

\subsubsection{Example 3: irreversible inhomogeneous actions}\label{sec:identity irreversible inhomogeneous actions}

This example explores the transition algebras of worlds that are action-inhomogeneous and contain irreversible actions.

Consider a world $\mathscr{W}_{consumable}$ with movement along a single 1D cyclical axis with a single consumable.
Let this world also contain an agent that can move left, right or consume the consumable if the agent is in the same place as the consumable (see Figure \ref{fig:consumable_world_states}).

\begin{figure}[H]
  \centering
  \begin{subfigure}{0.48\textwidth}
    \centering
    \includegraphics[width=\textwidth]{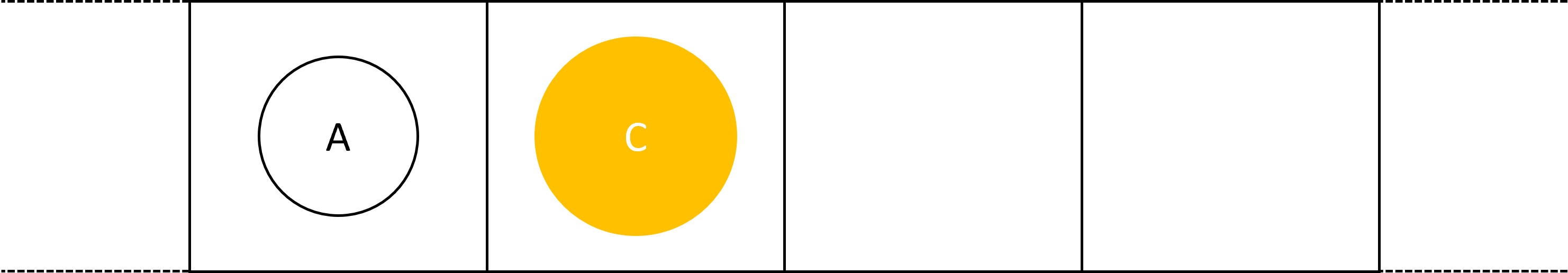}
    \caption{$w_{0}$}
    \label{fig:w0}
  \end{subfigure}%
  \hfill
  \begin{subfigure}{0.48\textwidth}
    \centering
    \includegraphics[width=\textwidth]{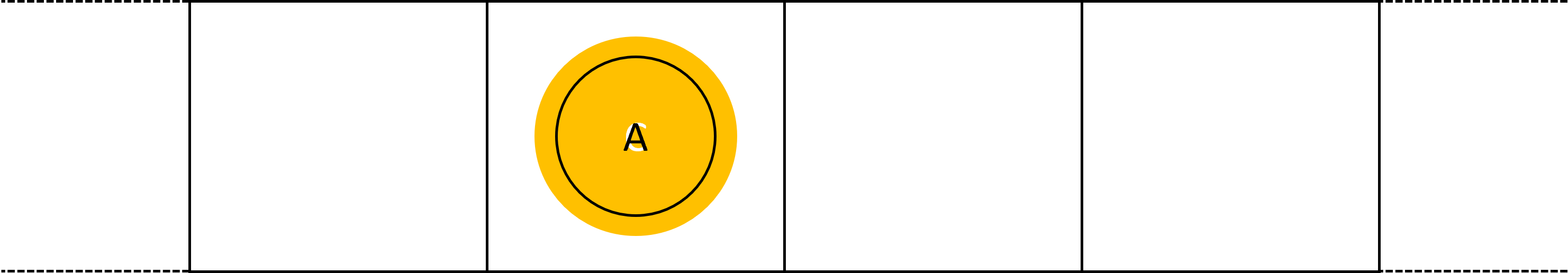}
    \caption{$w_{1}$}
    \label{fig:w1}
  \end{subfigure}%
  \vspace{0.5cm}
  \begin{subfigure}{0.48\textwidth}
    \centering
    \includegraphics[width=\textwidth]{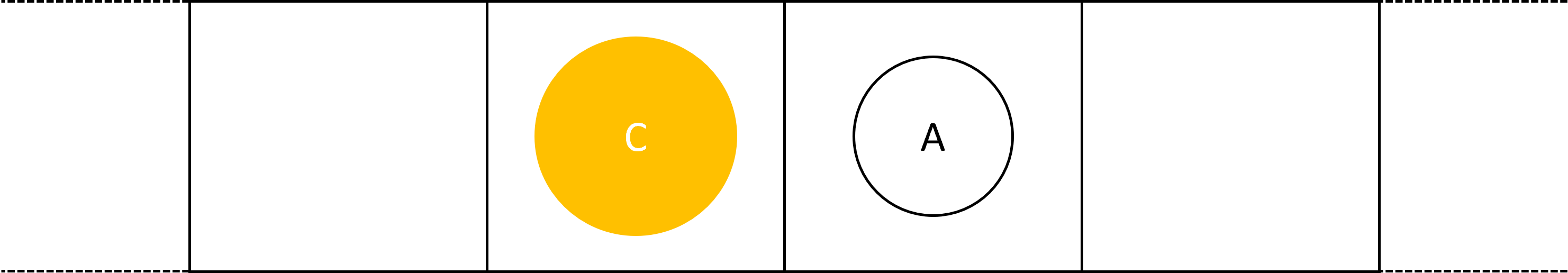}
    \caption{$w_{2}$}
    \label{fig:w2}
  \end{subfigure}%
  \hfill
  \begin{subfigure}{0.48\textwidth}
    \centering
    \includegraphics[width=\textwidth]{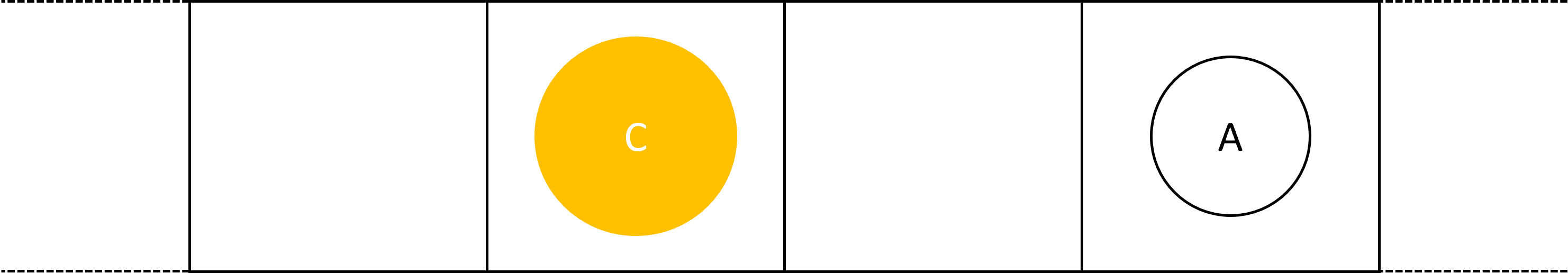}
    \caption{$w_{3}$}
    \label{fig:w3}
  \end{subfigure}%
  \vspace{0.5cm}
  \begin{subfigure}{0.48\textwidth}
    \centering
    \includegraphics[width=\textwidth]{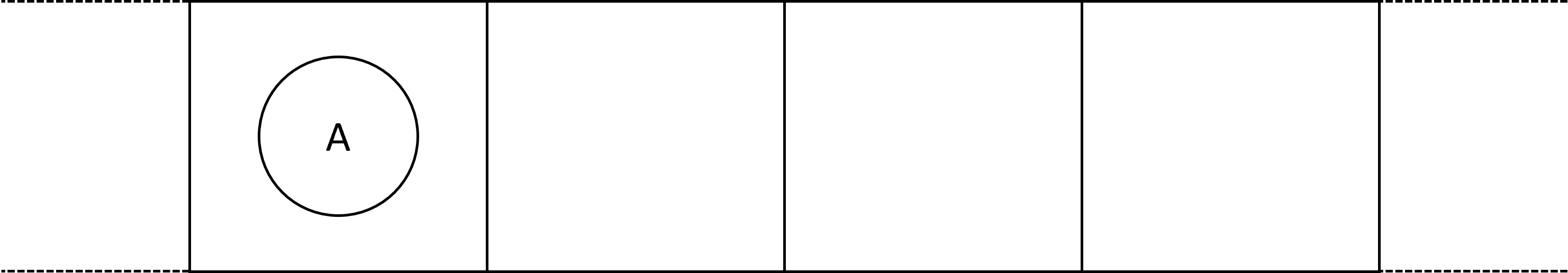}
    \caption{$w_{4}$}
    \label{fig:w4}
  \end{subfigure}%
  \hfill
  \begin{subfigure}{0.48\textwidth}
    \centering
    \includegraphics[width=\textwidth]{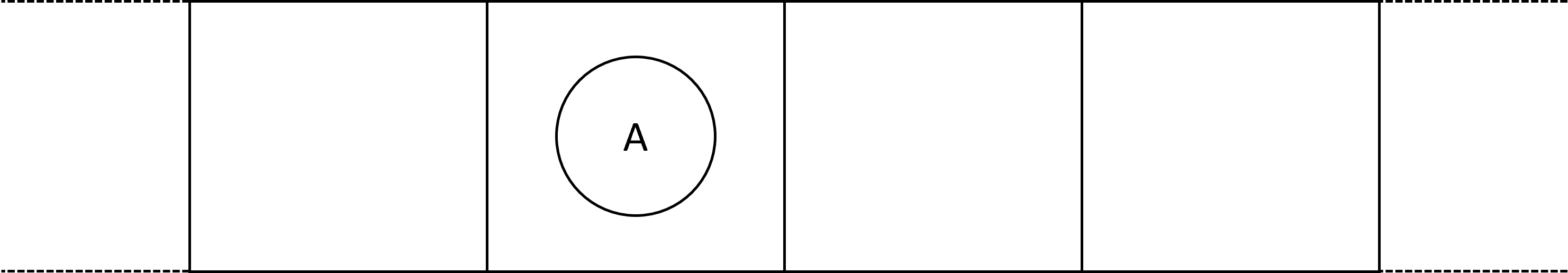}
    \caption{$w_{5}$}
    \label{fig:w5}
  \end{subfigure}%
  \vspace{0.5cm}
  \begin{subfigure}{0.48\textwidth}
    \centering
    \includegraphics[width=\textwidth]{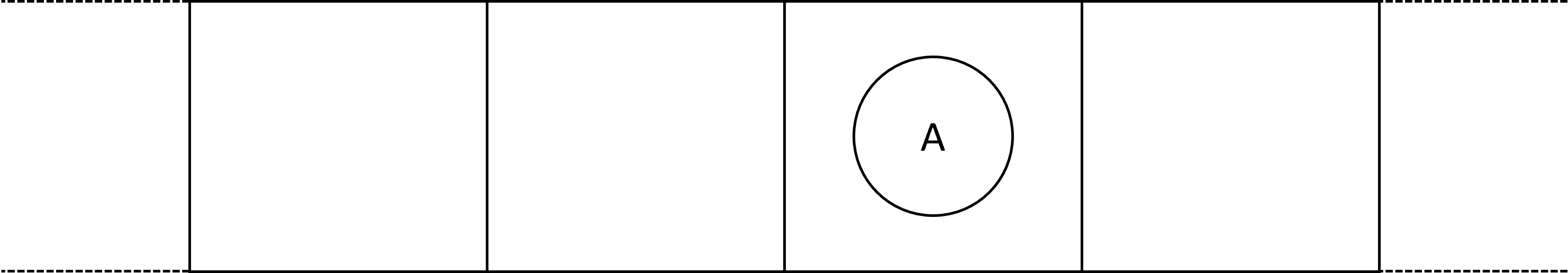}
    \caption{$w_{6}$}
    \label{fig:w6}
  \end{subfigure}%
  \hfill
  \begin{subfigure}{0.48\textwidth}
    \centering
    \includegraphics[width=\textwidth]{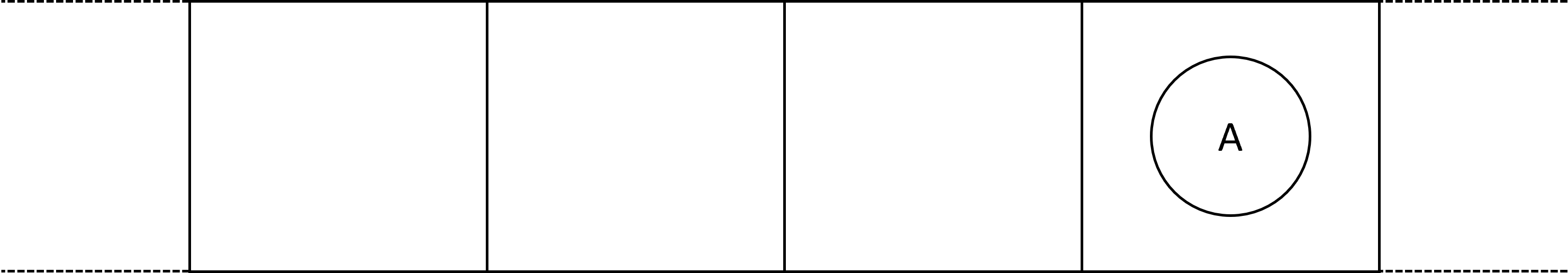}
    \caption{$w_{7}$}
    \label{fig:w7}
  \end{subfigure}%
  
  \caption{World states of a world containing an agent and a consumable.}
  \label{fig:consumable_world_states}
\end{figure}

There is not a consumable in every state, therefore if the agent performs the consume action in any state except $w_{1}$ then the effect will be the same as if the agent had performed the no-op action $1 \in A/\sim$ (see Figure \ref{fig:min-action-net-world-with-consumable-identity}).

\begin{figure}[H]
    \centering
    \includegraphics[width=0.7\linewidth]{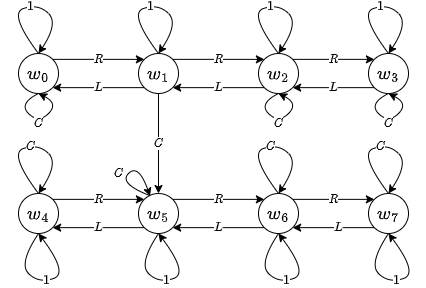}
    \caption{
   Minimum action network for world $\mathscr{W}_{consumable}$.
    }
    \label{fig:min-action-net-world-with-consumable-identity}
\end{figure}

The action Cayley table for world $\mathscr{W}_{consumable}$ with restricted actions treated as identity actions contains 64 elements.
As shown in Table \ref{tab:identity-consumable-properties}, $A/\sim$ is a monoid.

\begin{table}[H]
    \centering
    \begin{tabular}{c|c}
        \textbf{Property}   & \textbf{Present?} \\
        \hline
        Totality            & Y\\
        Identity            & Y\\
        Inverse             & N\\
        Associative         & Y\\
        Commutative         & N
    \end{tabular}
    \caption{Properties of the $A/\sim$ algebra.}
    \label{tab:identity-consumable-properties}
\end{table}

\begin{remark}
    Considering this example, we note that the irreversible action moves from one `reversible plane' to another.
    Perhaps we can treat an irreversible action as having a `reversible action affecting part' and a `world affecting part'; in this example, the reversible action affecting part would be the identity action since the reversible action network is unchanged by the irreversible action.
\end{remark}

\subsection{Worlds without inverse actions or unrestricted actions}\label{sec:Worlds without inverse actions or unrestricted actions}

In this section, we consider worlds that do not necessarily satisfy world condition \ref{wldcon:inverse-actions} or world condition \ref{wldcon:unrestricted-actions}.

\begin{proposition}\label{prp:all_worlds_give_small_category_action}
    Consider a world $\mathscr{W}$ with a set $W$ of world states and containing an agent with a set $A$ of actions.
    $*: (A/\sim) \times W \to W'$, where $W' \subseteq W$, is the action of a small category $A/\sim$ on $W$.
\end{proposition}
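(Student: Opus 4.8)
The plan is to package the (now only partially defined) action $*$ as a genuine small category whose object set is $W$, and then to verify the category axioms by reusing the monoid structure of $A/\sim$ already in hand. The reason this is the right target is that, without world condition \ref{wldcon:unrestricted-actions}, the map $*$ is only a \emph{partial} function on $(A/\sim)\times W$, so — unlike in Proposition \ref{prp:wc1_gives_monoid_action} — it cannot be a monoid action, since a monoid action requires a total map. A partial action of a monoid is, however, naturally organised as a small category by promoting the world states to objects. Concretely, I would define a category $\mathcal{C}$ with object set $W$, where a morphism from $w$ to $w'$ is a pair $([a]_\sim, w)$ with $[a]_\sim * w$ defined and equal to $w'$; the source of $([a]_\sim,w)$ is $w$ and its target is $[a]_\sim * w$. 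Keeping the label $[a]_\sim$ (rather than only the endpoints) is essential, since two distinct classes can send $w$ to the same $w'$ while composing differently with later actions.

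Next I would supply identities and composition and check the axioms. The identity on $w$ is $([1]_\sim, w)$, a legitimate morphism $w\to w$ because $[1]_\sim * w = w$. The composite of $([a]_\sim,w)\colon w\to w'$ followed by $([a']_\sim,w')\colon w'\to w''$ is declared defined exactly when $w' = [a]_\sim * w$ (the usual source/target matching of a category) and is then $([a'\circ a]_\sim, w)$; this is a morphism $w\to w''$ because, by the rule for applying an action sequence to a state, $[a'\circ a]_\sim * w = [a']_\sim *([a]_\sim * w) = [a']_\sim * w' = w''$, which is defined since $([a']_\sim,w')$ is a morphism. Well-definedness of the composite on classes is inherited from the well-definedness of $\circ$ on $A/\sim$ and of $*$; associativity follows from the associativity of $\circ$ (Proposition \ref{prp:Asim-associative}) together with the compatibility just noted; and the identity laws reduce to $a\circ 1 \sim a$ and $1\circ a \sim a$, i.e.\ to Proposition \ref{prp:Asim-identity}. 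Finally, $\mathcal{C}$ is small because its objects form the set $W$ and every morphism is a pair in $(A/\sim)\times W$, so the morphisms form a set; this realises $*$ as the action of the small category $A/\sim$ on $W$.

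The step I expect to be the main obstacle is reconciling the partiality of $*$ with the category axioms. The compatibility identity $[a'\circ a]_\sim * w = [a']_\sim *([a]_\sim * w)$ was established in Proposition \ref{prp:world-conditions-sufficient} under world condition \ref{wldcon:unrestricted-actions}, which we no longer assume, so I would re-derive it directly from the definition of how an action sequence acts on a world state, checking carefully that both sides are defined under exactly the same condition — namely that the intermediate state $[a]_\sim * w$ exists and $[a']_\sim$ is defined there. A second, purely definitional point must be settled first: since $A/\sim$ is a one-object structure (a monoid), the additional objects needed to absorb the partiality must come from $W$, so I would make explicit that the category $\mathcal{C}$ above \emph{is} the intended reading of ``the action of a small category $A/\sim$ on $W$'' (equivalently, $*$ is the assignment sending each morphism $([a]_\sim,w)$ to the transition $w\mapsto [a]_\sim * w$), after which the claim reduces to the routine axiom checks above.
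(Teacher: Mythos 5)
Your proposal is correct, but it takes a genuinely different and substantially more explicit route than the paper. The paper's own proof is two lines: it cites Propositions \ref{prp:Asim-associative} and \ref{prp:Asim-identity} for associativity and identity, declares that $A/\sim$ is therefore a small category, and concludes that $*$ is the action of a small category --- leaving the object set of that category, the source/target assignment that turns the partiality of $*$ into the usual composability discipline, and the verification that composition closes up entirely implicit (the later ``delooped category'' $\textbf{B}(A/\sim)$ with objects $\{b_1,\dots,b_m\}$ is the intended device, but it is never pinned down). You instead build the action (translation) category explicitly: objects $W$, morphisms the pairs $([a]_\sim,w)$ with $[a]_\sim * w$ defined, identities $([1]_\sim,w)$, and composition by concatenation of action sequences. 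This buys rigour where the paper is vague --- in particular it correctly handles the fact that a single class $[a]_\sim$ may be defined at some states and not others, so it must be split into one morphism per state at which it acts, something the paper's ``morphisms correspond to the elements of $A/\sim$'' picture glosses over. The price is a definitional mismatch with the statement as written: your category $\mathcal{C}$ is not literally $A/\sim$ but a category built from $A/\sim$ and $W$ jointly, whereas the paper wants $A/\sim$ itself (suitably delooped) to be the small category acting on $W$; you flag this honestly, and your $\mathcal{C}$ is the natural rigorous surrogate, but strictly speaking you prove a reformulation of the proposition rather than the paper's literal claim. Your identification of the compatibility identity $[a'\circ a]_\sim * w = [a']_\sim * ([a]_\sim * w)$ as the point needing re-derivation without world condition \ref{wldcon:unrestricted-actions} is well taken --- the paper's earlier well-definedness proofs for $\circ$ and $*$ on $A/\sim$ themselves invoke unrestricted actions, so your plan to rederive it from the path structure of $D_A$ closes a gap the paper leaves open rather than introducing one.
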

\begin{proof}
    (1) Associativity of $A/\sim$ is given by proposition \ref{prp:Asim-associative}.
    (2) Identity element of $A/\sim$ is given by proposition \ref{prp:Asim-identity}.
    Since $A/\sim$ satisfies properties (1) and (2), $A/\sim$ is a small category.
    Therefore $*$ is the action of a small category.
\end{proof}

\begin{remark}
    We can consider two equivalent perspectives of $*$:
    \begin{enumerate}
        \item $A/\sim$ is a small category and $*$ is a full action of $A/\sim$ on $W$.
        \item $A/\sim$ is a monoid and $*$ is a partial action of $A/\sim$ on $W$.
    \end{enumerate}
\end{remark}

\subsubsection{Example 1: reversible action-inhomogeneous world}\label{sec:masked reversible action-inhomogeneous world}

We once again consider the world $\mathscr{W}_{wall}$ (see Figure \ref{fig:2x2-cyclical-grid-world-wall-states} for world states); however, now instead of treating restricted actions like the identity action, we \textit{mask} the restricted actions.
Masking restricted actions involves not allowing the agent to perform the restricted actions - the restricted actions are hidden (masked) from the agent - and so, mathematically, we treat the restricted actions as \textit{undefined}; for example, if $\mathscr{W}_{wall}$ is in state $w_{0}$, then the agent cannot perform the $R$ action because $R * w_{0}$ is undefined (see Table \ref{tab:2x2-gridworld-minimum-transition-wall-masked} and Figure \ref{fig:2x2-cyclical-min-actions-wall-masked}.

\begin{table}[H]
    \centering
    \begin{tabular}{c|c c c c c}
                &  $1$      & $U$       & $D$       & $L$               & $R$\\
         \hline
        $w_{0}$ & $w_{0}$   & $w_{2}$   & $w_{2}$   & $w_{1}$           & undefined\\
        $w_{1}$ & $w_{1}$   & $w_{3}$   & $w_{3}$   & undefined         & $w_{0}$\\
        $w_{2}$ & $w_{2}$   & $w_{0}$   & $w_{0}$   & $w_{3}$           & $w_{3}$\\
        $w_{3}$ & $w_{3}$   & $w_{1}$   & $w_{1}$   & $w_{2}$           & $w_{2}$\\
    \end{tabular}
    \caption{Each entry in this table shows the outcome state of the agent performing the action given in the column label when in the world state given by the row label.}
    \label{tab:2x2-gridworld-minimum-transition-wall-masked}
\end{table}

\begin{figure}[H]
    \centering
    \includegraphics[width=0.5\linewidth]{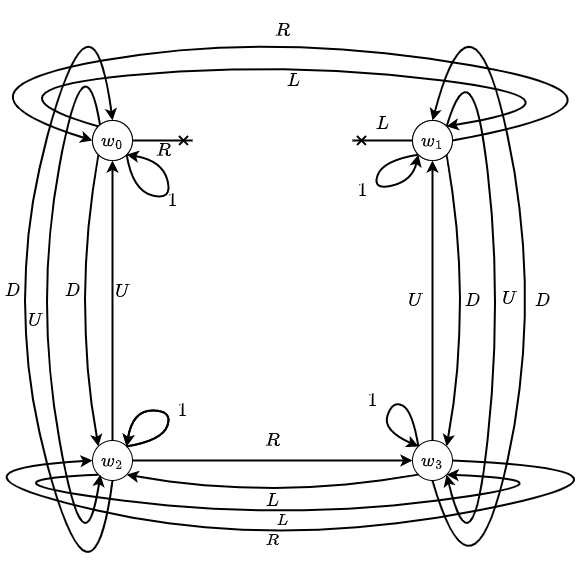}
    \caption{
    A diagram of the labelled transitions in Table \ref{tab:2x2-gridworld-minimum-transition-wall-masked}.
    }
    \label{fig:2x2-cyclical-min-actions-wall-masked}
\end{figure}

The action Cayley table for $\mathscr{W}_{wall}$ with the masked treatment of the walls contains 59 elements.
As shown in Table \ref{tab:masked-walls}, $A/\sim$ is a small category.

\begin{table}[H]
    \centering
    \begin{tabular}{c|c}
        \textbf{Property}   & \textbf{Present?} \\
        \hline
        Totality            & N\\
        Identity            & Y\\
        Inverse             & N\\
        Associative         & Y\\
        Commutative         & N
    \end{tabular}
    \caption{Properties of the $A/\sim$ algebra.}
    \label{tab:masked-walls}
\end{table}

\subsubsection{Example 2: irreversible action-inhomogeneous world}\label{sec:masked irreversible action-inhomogeneous world}

We will now apply the masking treatment of restricted actions to the world $\mathscr{W}_{consumable}$ (see Figure \ref{fig:min-action-net-world-with-consumable-masked} for world states); for example, if $\mathscr{W}_{consumable}$ is not in state $w_{1}$ then the consume action is undefined.

\begin{figure}[H]
    \centering
    \includegraphics[scale=0.5]{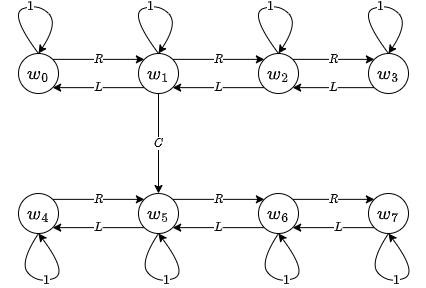}
    \caption{Minimum action network for a world containing an agent and a consumable.}
    \label{fig:min-action-net-world-with-consumable-masked}
\end{figure}

The action Cayley table for $\mathscr{W}_{consumable}$ with the masked treatment of the walls contains 20 elements.
As shown in Table \ref{tab:masked-consumable-properties}, $A/\sim$ is a small category.

\begin{table}[H]
    \centering
    \begin{tabular}{c|c}
        \textbf{Property}   & \textbf{Present?} \\
        \hline
        Totality            & N\\
        Identity            & Y\\
        Inverse             & N\\
        Associative         & Y\\
        Commutative         & N
    \end{tabular}
    \caption{Properties of the $A/\sim$ algebra.}
    \label{tab:masked-consumable-properties}
\end{table}

\begin{remark}
    For any world, once the agent has the action Cayley table for any initial world state it can produce the action Cayley table for any other initial world state by applying an action that transitions from the old initial world state to the new initial world state to every element of the action Cayley table.
\end{remark}

\begin{proposition}\label{prp:algebra_depends_on_action_treatment}
    The algebra $A/\sim$ of the actions of an agent depends on the treatment of the actions of the agent (\textit{e.g.}, masked treatment vs identity treatment) even if the world states remain the same.
\end{proposition}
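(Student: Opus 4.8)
The plan is to prove the statement by exhibiting a single witness world, held fixed at the level of its world states $W$, for which the two treatments of restricted actions produce algebras $A/\sim$ that are not isomorphic. Since the claim is existential in nature — it asserts only that the algebra \emph{can} depend on the treatment — one such example suffices. The natural candidate is $\mathscr{W}_{wall}$ (world states in Figure \ref{fig:2x2-cyclical-grid-world-wall-states}), which has already been analysed under both treatments in Sections \ref{sec:identity reversible action-inhomogeneous world} and \ref{sec:masked reversible action-inhomogeneous world}.

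First I would fix the world states together with the set of minimum action labels $\{1, U, D, L, R\}$, and emphasise that what changes between the two treatments is not $W$ but the set $\hat{D}_{A}$ of action transitions (equivalently, the partial function $*$): under the identity treatment a restricted action such as $R$ at $w_{0}$ is realised by the trivial transition $1_{w_{0}}$, whereas under the masking treatment that transition is simply absent, so $R * w_{0}$ is undefined. This makes precise the sense in which the treatment, rather than the world states, is the variable being changed.

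Next I would invoke the two already-computed structures. Under the identity treatment, world condition \ref{wldcon:unrestricted-actions} holds, so by Proposition \ref{prp:wc1_gives_monoid_action} the quotient $A/\sim$ is a monoid; Table \ref{tab:identity-walls-properties} records that it satisfies totality, and the surrounding text records that it has $26$ elements. Under the masking treatment, world condition \ref{wldcon:unrestricted-actions} fails (indeed $R * w_{0}$ is undefined), so totality fails and, by Proposition \ref{prp:all_worlds_give_small_category_action}, $A/\sim$ is only a small category that is not a monoid; Table \ref{tab:masked-walls} records the failure of totality, and the text records $59$ elements. The two algebras therefore differ on a structural invariant (being a monoid versus a proper small category) and even on cardinality ($26 \neq 59$), so they cannot be isomorphic. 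This establishes the proposition. If a second independent witness is desired, the world $\mathscr{W}_{consumable}$ exhibits the same dichotomy: a $64$-element monoid under the identity treatment (Table \ref{tab:identity-consumable-properties}) versus a $20$-element non-total small category under masking (Table \ref{tab:masked-consumable-properties}).

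There is no deep obstacle in the argument; the only point requiring care — and the step I would treat as the crux — is the framing. Because the proposition quantifies over the treatment while holding the world states fixed, I must be explicit that a treatment is encoded as a choice of action-transition data $\hat{D}_{A}$ layered on a common vertex set $W$, so that the two treatments are genuinely comparable as two instantiations of the same world states rather than as two unrelated worlds. Once this is pinned down, the non-isomorphism of the two quotients follows immediately from the already-tabulated invariants, and no further computation is needed.
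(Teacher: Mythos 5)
Your proposal is correct and takes essentially the same route as the paper, which likewise proves the proposition by citing the already-computed element counts for $\mathscr{W}_{wall}$ ($26$ vs.\ $59$) and $\mathscr{W}_{consumable}$ ($64$ vs.\ $20$) under the two treatments. Your additional observations — that the algebras also differ on the monoid-versus-proper-small-category invariant, and that a ``treatment'' should be formalised as a choice of $\hat{D}_{A}$ over a fixed $W$ — are welcome sharpenings but do not change the argument.
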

\begin{proof}
    $A/\sim$ for the identity treatment of $\mathscr{W}_{wall}$ contains 26 elements, while $A/\sim$ for the masked treatment of $\mathscr{W}_{wall}$ contains 59 elements.

    Additionally, $A/\sim$ for the identity treatment of $\mathscr{W}_{consumable}$ contains 64 elements, while $A/\sim$ for the masked treatment of $\mathscr{W}_{consumable}$ contains 20 elements.
\end{proof}

We have also demonstrated that neither the identity treatment nor the masked treatment produces simpler algebra in every world.
For $\mathscr{W}_{wall}$, the identity treatment contains fewer elements than the masked treatment (26 elements vs 59 elements), while for $\mathscr{W}_{consumable}$, the masked treatment contains fewer elements than the identity treatment (20 elements vs 64 elements).

\medskip
In this section, we have selected several standard reinforcement learning scenarios that go beyond the expressive power of the symmetries represented in SBDRL as groups and that can nevertheless be accounted for using our general framework. It was not the purpose of this section to identify which algebraic structures would correspond with such representations, say semi-groups or groupoids for instance, but rather to establish a proof of concept, through examples generated using Algorithms \ref{alg:Generate state Cayley table} and \ref{alg:Generate action Cayley table}, that SBDRL is not robust or generalisable enough and that our framework subsumes and extends it to bring about substantial representations. 

\section{Generalising SBDRL}\label{sec:Generalising SBDRL}

So far, we have introduced a formal framework that allows agents to learn richer representations than those of SBDRL. In order to do so, we have proved in Section \ref{sec:Reproducing SBDRL} that SBDRL is subsumed by our framework, that is, that the symmetric disentangled representations that SBDRL accounts for are also derivable using a generative algorithm of the corresponding Cayley tables within our approach. In Section \ref{sec:Beyond SBDRs}, we have extended our study to simple yet challenging reinforcement learning cases where the conditions for the formation of groups, intrinsic to SBDRL, are not satisfied and that fall nevertheless under the remit of our framework. In other words, we have proved that our framework is more powerful than SBDRL and that it allows the representation of world transitions that can be structured in any algebra, in principle. Although formal, this was nevertheless an exploratory exercise, and, to express it with mathematical rigour we recur to category theory in this section. In order to do so, after some preliminary definitions, we first generalise SBDRL’s equivariance condition categorically for both single and, importantly, multi-object categories, and then the notion of disentanglement for transformations that cannot be described by groups. That is, unlike in Section \ref{sec:Beyond SBDRs}, the purpose of Section \ref{sec:Generalising SBDRL} is not to present further proof cases of the expressive power of our framework nor to revisit the examples in the previous section categorically (it would be impractical to go through the many algebraic structures that might result from our framework within and beyond SBDRL) but rather to provide a solid, formal characterisation that, in turn, broadens the rationale and capabilities of the original framework. 

A fundamental result in category theory is the Yoneda Lemma, which posits that the properties of mathematical objects are completely determined by their relationships to other objects \citep{riehl2017category,barr1990category}. This result is similar to the shift in perspective in Artificial Intelligence representations from studying world states to gaining insight into their structures by considering how they are transformed by the dynamics of the environment they are immersed in, including the actions of agents. Due to the Yoneda Lemma, category theory already incorporates this approach of considering the transformation properties of objects built in, which arguably makes it the natural choice for their representation. More generally, category theory has been extensively used in computer science in the analysis of relational databases and functional programming \citep{Fong_Spivak_2019, Milewski-2018}. 
Let us track back the main argument: generally speaking, symmetries define invariance, that is, impunity to possible alterations. Interestingly, the fact that the parts that are related by means of an equivalence relation correspond to the family of operations transforming the parts into each other while leaving the whole invariant satisfies the conditions for constituting a group. Consequently, it has traditionally been assumed that group theory is the language of symmetries and indeed groups have been customarily used to exploit symmetries in mathematics and physics, from E. Galois’ studies of the structures that underlie the number and form of the solutions for equations of arbitrary degrees, to the formulation of the Standard Model that classifies all elementary particles and their interactions according to their flavour, charge and colour symmetries as the $SU(3)\bigotimes SU(2)\bigotimes U(1)$ group.

Now, a group is simply a category with a single object where every morphism is an isomorphism, that is, the object only relates to itself. Thus, although groups are certainly sufficient to formalise homogeneous structures, there are nonetheless plenty of transformations that exhibit what we clearly recognise as symmetries that are nontrivial (think of a bowling ball; it may not display the symmetries of a perfect sphere, such as a basketball, but it nonetheless poses symmetries that can be identified and exploited). To detect such symmetries, we need to “zoom out”, and this is precisely why adopting a categorical approach in representation learning makes sense: category theory does consider objects, but its focus is on the relationships (morphisms) between those objects, rather than the internal structure of the objects themselves. The axioms of a category do not require objects to have elements or any specific internal structure. Category theory thus becomes the ideal tool for the study of structures that show partial symmetries and symmetries that apply to multiple objects, and those that result in higher-dimensional, increasingly more abstract categories. In other words, the ontology of algebraic groups is suitable to express symmetric structures that can be represented at the object-level in a straightforward way, but does not scale to other types that benefit from viewing the problem from a relation-level. Using the framework of Section \ref{sec:Mathematical framework for an agent in an environment}, in Section \ref{sec:Beyond SBDRs} we have identified several symmetric transformations that cannot be expressed in groups, the structure favoured by SBDRL. In this section, we formalise the conditions that accommodate such structures as categories. Our claim is that the insights provided in this section might be taken as the starting point for a deeper exploration of category theory as the formalism in which to establish and derive symmetries beyond group representations in reinforcement learning and, more generally, Artificial Intelligence \citep{gavranović2024positioncategoricaldeeplearning, gavranović2024fundamentalcomponentsdeeplearning}.


\subsection{Preliminaries}

Technically, a category consists of objects connected by arrows, which represent structure-preserving maps between the objects. Importantly, we can also relate categories via functors, which are ways to transform a category into another category while preserving the relationships between the objects and arrows of the domain category. In turn, natural transforms are ways to replace one functor with another while preserving the structure of the first. That is, we can iteratively construct higher-order categories, where (\textit{n})-categories become objects of (\textit{n+1})-categories. It is worth emphasizing that (a) the fundamental concept is the concept of morphism (transitions in our case) since an object is in fact defined as its collection of identity morphisms; (b) in relating categories these do not need to be of the same class, rather the power of category theory becomes visible when, through a process of categorisation, that is, by forgetting the details of its objects, we learn properties of a category by relating it to other category whose structure is well-known (similarly, we can recover the details of the domain category by a process of decategorisation); (c) as a result, in practice, categories provide a rich ontology to represent the concept of similarity beyond strict equivalence. That is, category theory allows us to compare (and build) a great variety of structures.  

We will now introduce relevant category theory concepts. The reader is referred to \citep{Spivak2014-SPICTF} for a more detailed account of category theory and its uses.

\begin{definition}[Category]\label{def:category}
    A category $\mathcal{C}$ consists of a class of objects, denoted $\textbf{Ob}(\mathcal{C})$, and, for each pair $x$, $y$ of objects, a class of morphisms $\alpha : x \to y$, denoted $C(x,y)$, satisfying the following:

    \begin{itemize}
        \item \textbf{Composition law.} Given two morphisms $\alpha \in \mathcal{C}(x,y)$ and $\beta \in \mathcal{C}(y,z)$ there exists a morphism $\beta \circ \alpha \in \mathcal{C}(x,z)$ called the composition of $\alpha$ and $\beta$.
        
        \item \textbf{Existence of units.} Given an object $x$, there exists a morphism denoted by $1_{x} \in \mathcal{C}(x,x)$ such that for any morphism $\alpha \in \mathcal{C}(x,a)$, $\alpha \circ 1_{x} = \alpha$ and for any morphism $\beta \in \mathcal{C}(b,x)$, $1_{x} \circ \beta = \beta$.
        
        \item \textbf{Associativity.} Given three morphisms $\alpha \in C(x,y)$, $\beta \in C(y,z)$, $\gamma \in C(z,u)$, then the following associative law is satisfied: $\gamma \circ ( \beta \circ \alpha) = (\gamma \circ \beta) \circ \alpha$.
    \end{itemize}
\end{definition}

\begin{definition}[Isomorphism]\label{def:isomorphism}
    A morphism $\alpha : x \to y$ in a category \textbf{C} is an isomorphism if there exists another morphism $\beta : y \to x$ in \textbf{C} such that $\beta \circ \alpha = 1_{x}$ and $\alpha \circ \beta = 1_{y}$.
    This can be denoted by $x \overset{\alpha}{\cong} y$.
\end{definition}

\begin{definition}[Group]
    A group is a category that has a single object and in which every morphism is an isomorphism (\textit{i.e.,} every morphism has an inverse).
\end{definition}

\begin{definition}[Small category]
    A category $\mathcal{C}$ is a \textit{small category} if its collection of objects $\textbf{Ob}(\mathcal{C})$ is a set, and the collection of morphisms $Hom_{\mathcal{C}}(X, Y)$, where $X,Y \in \textbf{Ob}(\mathcal{C})$ is also a set.
\end{definition}

\begin{definition}[Hom-set]
    Given objects $x$ and $y$ in a small category $\mathcal{C}$, the hom-set $Hom_{\mathcal{C}}(x,y)$  is the collection of all morphisms from $x$ to $y$.
    A category is said to be \textit{small} if each of its hom-sets is a set instead of a proper class.
\end{definition}

\begin{definition}[(covariant) Functor]
    A \textit{functor} is a structure-preserving map between two categories.
    For two categories $\mathcal{A}$ and $\mathcal{B}$, a functor $F: \mathcal{A} \to \mathcal{B}$ from $\mathcal{A}$ to $\mathcal{B}$ assigns to each object in $\mathcal{A}$ an object in $\mathcal{B}$ and to each morphism in $\mathcal{A}$ a morphism in $\mathcal{B}$ such that the composition of morphisms and the identity morphisms are preserved.
    A functor transforms objects and morphism from one category to another in a way that preserves the structure of the original category.

    More precisely, a functor $F: \mathcal{A} \to \mathcal{B}$ consists of two maps:
    \begin{enumerate}
        \item A map that assigns an object $F(A)$ in $\mathcal{B}$ to each object $A$ in $\mathcal{A}$.

        \item A map that assigns a morphism $F(f)$ in $\mathcal{B}$ to each morphism $f$ in $\mathcal{A}$ such that:
        \begin{enumerate}
            \item $F$ respects composition for any two composable morphisms $f,g \in \mathcal{A}$, $F(fg) = F(f) F(g)$.

            \item $F$ preserves identities: for any object $A$ in $\mathcal{A}$, $F(id_{A}) = id_{F(A)}$.
        \end{enumerate}
    \end{enumerate}
\end{definition}
    
    \begin{definition}[Functor category]
        For categories $\mathcal{C}$ and $\mathcal{D}$, the functor category denoted $D^{\mathcal{C}}$ or $[\mathcal{C}, \mathcal{D}]$ is the category whose: (1) objects are functors $F: \mathcal{C} \to \mathcal{D}$, and (2) morphisms are natural transforms between these functors.
\end{definition}

\begin{definition}[Natural transform]
    For categories $\mathcal{C}$ and $\mathcal{D}$ and functors $F,G: \mathcal{C} \to \mathcal{D}$, a \textit{natural transform} $\alpha: F \Rightarrow G$ between $F$ and $G$ is an assignment to every object $x$ in $\mathcal{C}$ of a morphism $\alpha_{x}: F(x) \to G(x)$ in $\mathcal{D}$ such that for any morphism $f: x \to y$ in $\mathcal{C}$, the following diagram commutes:

\[\begin{tikzcd}
	{F(x)} && {F(y)} \\
	\\
	{G(x)} && {G(y)}
	\arrow["{F(f)}", from=1-1, to=1-3]
	\arrow["{\alpha_{x}}"', from=1-1, to=3-1]
	\arrow["{\alpha_{y}}", from=1-3, to=3-3]
	\arrow["{G(f)}"', from=3-1, to=3-3]
\end{tikzcd}\]
 \end{definition}

\begin{definition}[Delooped category]
    Given an algebraic structure $A$, we can construct the \textit{delooped category} $\textbf{B}A$ whose morphisms correspond to the elements of $A$ with the relevant composition: $A \xrightarrow{deloop} \textbf{B}A$.
    $\textbf{B}$ id called the \textit{base} of the category and contains the objects.
    The number of objects in $\textbf{B}$ is the number of objects necessary for the morphisms of $\textbf{B}A$ to correspond to the elements of $A$.
\end{definition}

\begin{definition}[Monoid]\label{def:monoid}
    A monoid is a category with a single object.
\end{definition}

\begin{definition}[Categorical product]
    The \textit{categorical product} of two categories $C_{1}$ and $C_{2}$ is the category $C_{1} \times C_{2}$.
    The objects of $C_{1} \times C_{2}$ are the pairs of objects $(B_{1}, B_{2})$, where $B_{1}$ is an object in $C_{1}$ and $B_{2}$ is an object in $C_{2}$.
    The morphisms in $C_{1} \times C_{2}$ are the pairs of morphisms $(f_{1}, f_{2})$, where $f_{1}$ is a morphism in $C_{1}$ and $f_{2}$ is a morphism in $C_{2}$.
    The composition of morphisms is defined component-wise.
\end{definition}

\begin{definition}[Sub-functors of sub-categories]
    We can define a functor $F: C_{1} \times C_{2} \to C$ as follows:
    \begin{enumerate}
        \item For each object $(B_{1}, B_{2})$ in $C_{1} \times C_{2}$, $F(B_{1}, B_{2})$ is the object in $C$ that corresponds to the pair $(B_{1}, B_{2})$.
    
        \item For each morphism $(f_{1}, f_{2}): (B_{1}, B_{2}) \to (B'_{1}, B'_{2})$ in $G_{1} \times G_{2}$, $F(f_{1}, f_{2})$ is the morphism in $G$ that corresponds to the pair $(f_{1}, f_{2})$.
    \end{enumerate}
    
    We can now define sub-functors $F_{1}$ of $G_{1}$ and $F_{2}$ of $G_{2}$ on an object $B$ of $G$ as follows:
    \begin{enumerate}
        \item $F_{1}(g_{1}, B) = F(g_{1}, 1_{G_{2}})(B)$ for all $g_{1} \in G_{1}$.
        \item $F_{2}(g_{2}, B) = F(1_{G_{1}}, g_{2})(B)$ for all $g_{1} \in G_{1}$.
    \end{enumerate}
    
    We can decompose $F$ into the sub-functors $F_{1}$ and $F_{2}$ as $F = F_{1} \times F_{2}$, if there is a decomposition $B = B_{1} \times B_{2}$ of $B$ into two sub-objects $B_{1}$ and $B_{2}$ such that:
    \begin{enumerate}
        \item For all $g_{1} \in G_{1}$ and $B_{2} \in G_{2}$, we have $F_{1}(g_{1}, B_{1} \times B_{2}) = F_{1}(g_{1}, B_{1} \times B_{2}$.
    
        \item For all $g_{2} \in G_{2}$ and $B_{1} \in G_{1}$, we have $F_{2}(g_{2}, B_{1} \times B_{2}) = B_{1} \times F_{2}(g_{2}, B_{2})$.
    \end{enumerate}
\end{definition}

\subsection{The equivariance condition}

In this section, we use category theory to generalise \cite{Higgins2018}'s equivariance condition.
In doing so we show that the equivariance condition can apply to worlds where the actions of an agent cannot be fully described by groups.

\subsubsection{Group equivariance in category theory}\label{sec:Group equivariance in category theory}

We will now convert \cite{Higgins2018}'s group equivariance condition into the language of category theory.

A left group action $G \times S \to S$ is a homomorphism from a group $G$ to the group of bijections of a set $S$ that $G$ is acting upon.
Since $S$ is itself an object in the category $\textbf{Set}$ of sets, and bijections from $S$ to itself are the invertible morphisms in $hom_{\textbf{Set}}(S, S)$, the left group action $G \times S \to S$ is an object of the category $\textbf{Set}^{G}$ of covariant functors from $G$ to $\textbf{Set}$.

Since the objects of the category $\textbf{Set}^{G}$ are the functors from $G$ to $\textbf{Set}$\footnote{The objects of the category $\textbf{Set}^{G}$ are the maps from the morphisms in $G$ to the morphisms in $\textbf{Set}$.}, then the morphisms of $\textbf{Set}^{G}$ are natural transforms between these functors.
For the left action $A_1: G \to \textbf{Set}$ that maps the single object $b$ of $G$ to a set $S_1$ and the left action $A_2: G \to \textbf{Set}$ that maps the single object $b$ of $G$ to a set $S_2$, the natural transform $\eta : A_1 \to A_2$ has a single component $\eta_{b} : S_1 \to S_2$ because there is a single object $b$ in $G$.
Every morphism, which is a group element $g \in G$, must satisfy the naturality condition $\eta_{b}(g \cdot_{S_1} s) = g \cdot_{S_2} \eta_{b}(s)$ for all $s \in S_1$, where $\cdot_{S_1}$ denotes the action of $G$ on set $S_1$ and $\cdot_{S_2}$ denotes the action of $G$ on set $S_2$.

Now consider an agent with a set $A$ of actions and a set $Z$ of representation states in a world $\mathscr{W}_{0}$ that has a set $W$ of world states and that satisfies world conditions \ref{wldcon:unrestricted-actions} and \ref{wldcon:inverse-actions}.
For $\mathscr{W}_{0}$, $A/\sim$ is a group.
Since the set of world states $W$ and the set of representation states $Z$ are both acted on by the same group $A/\sim$, there are two group actions $*_{W}: (A/\sim) \times W \to W$ and $*_{Z}: (A/\sim) \times Z \to Z$, and both group actions give functors from $A/\sim$ to objects ($W$ and $Z$) in the category $\textbf{Set}$.
A structure-preserving map between these functors is a natural transform with the single component $\eta_{b}: W \to Z$ that satisfies $\eta_{b}(a *_{W} w) = a *_{Z} \eta_{b}(w)$ for all $w \in W$ and for all $a \in A/\sim$, where $*_{W}$ denotes the action of $A/\sim$ on set $W$ and $*_{Z}$ denotes the action of $A/\sim$ on set $Z$.
This component $\eta_{b}$ is \cite{Higgins2018}'s equivariant condition ($g \cdot_{Z} f(w) = f(g \cdot_{W} w)$) in the language of category theory.

\subsubsection{Equivariance for single-object categories}

We will now take the category theory argument we used to derive the group equivariance condition for worlds where $*_{W}: (A/\sim) \times W \to W$ is a group action (reproducing \cite{Higgins2018}'s equivariant condition) and generalise our argument to worlds where $*_{W}: (A/\sim) \times W \to W$ is the (full) action of any algebraic structure $A/\sim$ that can be delooped to form a single-object category $\textbf{B}(A/\sim)$\footnote{Formally, in section \ref{sec:Group equivariance in category theory} we follow the argument in this section and deloop the group $A/\sim$ to form the associated category $\textbf{B}(A/\sim)$.}.
Any single-object category is a monoid (definition \ref{def:monoid}), therefore an equivariance condition for single-object categories will hold for any world where $A/\sim$ is a monoid.
The derivation of this condition is trivially the same argument as given for the group action case since we did not require that the morphisms in the single-object category with morphisms giving $A/\sim$ be isomorphisms.

\paragraph{1. Setup}
Now consider an agent with a set $A$ of actions and a set $Z$ of representation states in a world $\mathscr{W}_{1}$ that has a set $W$ of world states and where $A/\sim$ is a monoid with elements $a_{1}, a_{2}, ..., a_{n}$.

\paragraph{2. Category of actions}
Let $\textbf{B}(A/\sim)$ be the delooped category of $A/\sim$ and let $\textbf{B} = \{b\}$ be the set of objects of $\textbf{B}(A/\sim)$.
The set of morphisms of $\textbf{B}(A/\sim)$ is $A/\sim$.

\paragraph{3. Functors for actions on sets}
The action $*_{W}: (A/\sim) \times W \to W$ of $A/\sim$ on the set $W$ gives a functor $\rho: \textbf{B}(A/\sim) \to W$, where $\rho$ encodes the properties of the algebraic structure of $(A/\sim)$.
Similarly, the action $*_{Z}: (A/\sim) \times Z \to Z$ of $A/\sim$ on the set $Z$ gives a functor $\tau: \textbf{B}(A/\sim) \to Z$, where $\tau$ encodes the properties of the algebraic structure of $(A/\sim)$.

\paragraph{4. Structure-preserving morphism (natural transform)}
We want the property that $A/\sim$ acts on $W$ and $Z$ in the same way, so we want to preserve the structure between the functors $\rho$ and $\tau$.
The objects of the functor category $\textbf{Set}^{\textbf{B}(A/\sim)}$ are the functors, including $\rho$ and $\tau$, from $\textbf{B}(A/\sim)$ to $\textbf{Set}$; the morphisms (structure-preserving maps) between the objects of $\textbf{Set}^{\textbf{B}(A/\sim)}$ are natural transforms.
The structure-preserving map between $\rho$ and $\tau$ is the natural transform $\eta: W \to Z$ with the single component $\eta_{b}: W \to Z$ that satisfies $\eta_{b}(a *_{W} w) = a *_{Z} \eta_{b}(w)$ for all $w \in W$ and for all $a \in A/\sim$, where $*_{W}$ denotes the action of $A/\sim$ on set $W$ and $*_{Z}$ denotes the action of $A/\sim$ on set $Z$.

\paragraph{5. Generalised equivariance condition}
The generalised equivariance condition for the single-object case is:
\begin{center}
    $\eta_{b}(a *_{W} w) = a *_{Z} \eta_{b}(w)$ for all $w \in W$ and for all $a \in A/\sim$.
\end{center}
In other words, the diagram
\[\begin{tikzcd}
	w && {a *_{W} w} \\
	\\
	{\eta_{b}(w)} && {\eta_{b}(a *_{W} w) = a *_{Z} \eta_{b}(w)}
	\arrow["{a *_{W}}", from=1-1, to=1-3]
	\arrow["{a *_{Z}}", from=3-1, to=3-3]
	\arrow["{\eta_{b}}"', dashed, from=1-3, to=3-3]
	\arrow["{\eta_{b}}"', dashed, from=1-1, to=3-1]
\end{tikzcd}\]
commutes.

\medskip
We have now generalised \cite{Higgins2018}'s group equivariance condition to worlds where $*_{W}: (A/\sim) \times W \to W$ is a (full) monoid action.
We have shown that the equivariance condition depends only on the number of objects in the delooped category of $A/\sim$, and so the equivariance condition for monoid action $*$ is structurally the same as for group action $*$.
Our derived equivariance condition is valid in some worlds where agents can perform irreversible actions since monoids can have elements with no inverse; the inability of \cite{Higgins2018}'s original formalism to deal with irreversible actions has been explicitly stated \cite[page 4]{caselles2019symmetry}.
In fact, from proposition \ref{prp:wc1_gives_monoid_action}, this equivariance condition for the single-object category case is valid for any world that satisfies world condition \ref{wldcon:unrestricted-actions}.

\subsubsection{Equivariance for multi-object categories}

We will now generalise our argument to derive equivariance conditions for worlds where $*_{W}: (A/\sim) \times W \to W$ is the action of any algebraic structure $A/\sim$ that can be delooped to form any small category $\textbf{B}(A/\sim)$.
Let us adapt our argument for single-object categories from the previous section step-by-step to the multi-object category case:

\paragraph{1. Setup}
Consider an agent with a set $A$ of actions and a set $Z$ of representation states in a world $\mathscr{W}_{2}$ that has a set $W$ of world states and where $A/\sim$ is a small category with elements $a_{1}, a_{2}, ..., a_{n}$.

\paragraph{2. Category of actions}
Let $\textbf{B}(A/\sim)$ be the delooped category of $A/\sim$ and let $\textbf{B} = \{b_{1}, b_{2}, ..., b_{m}\}$ be the set of objects of $\textbf{B}(A/\sim)$.
For the multi-object category case, the set of morphisms $f: b_{i} \to b_{j}$ of $\textbf{B}(A/\sim)$ is not the elements $a_{1}, a_{2}, ..., a_{n}$ of $A/\sim$ - the elements from the algebraic structure $A/\sim$ do not appear directly as components of morphisms; instead they inform how the objects and morphisms within the category interact.
The morphisms $f: b_{i} \to b_{j}$ of $\textbf{B}(A/\sim)$ can be thought of as arrows connecting object $b_{i}$ to object $b_{j}$.
Each morphism is then labelled by an element of $A/\sim$; this labelling indicates how the element from $A/\sim$ maps $b_{i}$ to $b_{j}$ within the categorical framework\footnote{In the single-object category case ($\textbf{B} = b$), all morphisms are between the same object (they are \textit{endomorphisms}) and so there is a one-to-one correspondence between these endomorphisms $f: b \to b$ and the elements of $A/\sim$; this means we can treat the elements of $A/\sim$ as the morphisms of $\textbf{B}(A/\sim)$.}.
This is analogous to the treatment of transitions (analogous to morphisms) and actions (analogous to elements of $A/\sim$) given in Section \ref{sec:Agent actions as labelled transitions}.

\paragraph{3. Functors for actions on sets}
As before, the action $*_{W}: (A/\sim) \times W \to W$ of $A/\sim$ on the set $W$ gives a functor $\rho: \textbf{B}(A/\sim) \to W$, where $\rho$ encodes the properties of the algebraic structure of $(A/\sim)$.
Similarly, the action $*_{Z}: (A/\sim) \times Z \to Z$ of $A/\sim$ on the set $Z$ gives a functor $\tau: \textbf{B}(A/\sim) \to Z$, where $\tau$ encodes the properties of the algebraic structure of $(A/\sim)$.
The functors $\rho$ and $\tau$ now have components, $\rho(b_{i})$ and $\tau(b_{i})$, for each object $b_{i} \in \textbf{B}$ as well as components, $\rho(f)$ and $\tau(f)$, for each morphism $f \in \textbf{B}(A/\sim)$.

\paragraph{4. Structure preserving morphisms (natural transform)}
We again want the property that $A/\sim$ acts on $W$ and $Z$ in the same way, so we want to preserve the structure between the functors $\rho$ and $\tau$.
The objects of the functor category $\textbf{Set}^{\textbf{B}(A/\sim)}$ are the functors, including $\rho$ and $\tau$, from $\textbf{B}(A/\sim)$ to $\textbf{Set}$; the morphisms between the objects of $\textbf{Set}^{\textbf{B}(A/\sim)}$ are natural transforms.
So we again want to preserve the structure between the functors $\rho$ and $\tau$ using a natural transform $\eta: W \to Z$.
However, since there are multiple objects in $\textbf{B}(A/\sim)$, our natural transform $\eta$ has a component for each object in $\textbf{B}$.
For each object $b_{i} \in \textbf{B}$, there is a component $\eta_{b_{i}}: W \to Z$ such that it satisfies $\eta_{b_{i}}(a *_{W} w) = a *_{Z} \eta_{b_{i}}(w)$ for all $w \in W$ and for all $a \in A/\sim$, where $*_{W}$ denotes the action of $A/\sim$ on set $W$ and $*_{Z}$ denotes the action of $A/\sim$ on set $Z$.

\begin{remark}
   There is a different $\eta_{b_{i}}$ for each object $b_{i} \in \textbf{B}$.
\end{remark}

\paragraph{5. Generalised equivariance condition}
 The generalised equivariance condition for the multi-object category case is a collection of equivariance conditions, one for each object $b_{i} \in \textbf{B}$, with the corresponding natural transformation components $\eta_{b_i}$ satisfying the condition:
 \begin{center}
      $\eta_{b_i}(a *_{W} w) = a *_{Z} \eta_{b_i}(w)$ for all $w \in W$ and for all $a \in A/\sim$.
 \end{center}

In other words, the diagram 
\[\begin{tikzcd}
	w && {a *_{W} w} \\
	\\
	{\eta_{b_{i}}(w)} && {\eta_{b_{i}}(a *_{W} w) = a *_{Z} \eta_{b_{i}}(w)}
	\arrow["{\eta_{b_{i}}}"', dashed, from=1-1, to=3-1]
	\arrow["{a *_{W}}", from=1-1, to=1-3]
	\arrow["{a *_{Z}}", from=3-1, to=3-3]
	\arrow["{\eta_{b_{i}}}"', dashed, from=1-3, to=3-3]
\end{tikzcd}\]
 commutes.

This generalised equivariance condition ensures that the action of $A/\sim$ on $W$ and $Z$ is preserved consistently across all objects of $\textbf{B}(A/\sim)$.

\begin{remark}
    For each object $b_i$, we have an associated natural transformation component $\eta_{b_i}: W \to Z$.
    This component represents how the action of $A/\sim$ on the set $W$ relates to the set $Z$ for the specific object $b_i$.
    In other words, $\eta_{b_i}$ encodes how the algebraic structure $A/\sim$ interacts with that particular object $b_i$.
\end{remark}

We have generalised \cite{Higgins2018}'s group equivariance condition to worlds where $*_{W}: (A/\sim) \times W \to W$ is a (full) small category action or, equivalently, a partial monoid action.
Our derived equivariance condition is valid in some worlds where agents have actions that are not defined by some world states.

\subsection{Disentangling}

We will now generalise the definition of disentanglement given by \cite{Higgins2018} to give a definition that works for worlds with transformations that cannot be described by groups.
We prove that the equivalence condition can be disentangled.

\subsubsection{Disentangling of group actions in Category Theory terms}

In Category Theory, the group action $\cdot: G \times X \to X$, where $G$ is a group and $X$ is a set, gives a functor $\alpha: \textbf{B}G \times \textbf{Set} \to \textbf{Set}$, where $\textbf{B}G$ is the (single-object) delooped category of $G$ and $\textbf{Set}$ the category of sets.
$\alpha$ satifies:
\begin{enumerate}
    \item $\alpha_{1} = \text{id}_{X}$, where $1$ is the identity element of $G$.
    \item $\alpha_{g_{1}} \circ \alpha_{g_{2}} = \alpha_{g_{1}g_{2}}$ for all $g_{1}, g_{2} \in G$.
\end{enumerate}
The set $X$ is an object in the category $\textbf{Set}$.

If $G$ decomposes into the direct product of subgroups $G = G_{1} \times ... \times G_{n}$, then $\textbf{B}G$ decomposes into a categorical product of the relevant delooped categories $\textbf{B}_{i}G_{i}$ as $\textbf{B}G = \textbf{B}_{1}G_{1} \times ... \times \textbf{B}_{n}G_{n}$.
The object of $\textbf{B}_{1}G_{1} \times ... \times \textbf{B}_{n}G_{n}$ is the $n$-tuple of objects $(b_{1},...,b_{n})$ where each $b_{i}$ is the (single) object in $\textbf{B}_{i}$.
The morphisms of $\textbf{B}_{1}G_{1} \times ... \times \textbf{B}_{n}G_{n}$ are the $n$-tuples of morphisms $(f_{1}, ..., f_{n})$ where $f_{i}$ is a morphism $f_{i}: \textbf{B}_{i}G_{i} \to \textbf{B}_{i}G_{i}$.
The composition of morphisms is defined component-wise.

We say the functor $\alpha: G \times X \to X$ is \textit{disentangled} with respect to a decomposition $G = G_{1} \times ... G_{n}$ of $G$ if:
\begin{enumerate}
    \item There exists a decomposition $X_{1} \times ... X_{n}$ and an isomorphism $X \cong X_{1} \times ... X_{n}$.

    \item $\alpha$ can be decomposed into sub-functors $\alpha = \alpha_{1} \times \alpha_{2} \times ... \times \alpha_{n}$, where each sub-functor $\alpha_{i}: (\textbf{B}_{i}G_{i}) \times X_{i} \to X_{i}$ corresponds to the action of the subgroup $G_{i}$ on the subspace $X_{i}$ that is invariant under all other subgroups: $\alpha(g, x) = \alpha(g_{1} \times ... \times g_{n}, x_{1} \times ... \times x_{n}) = \alpha_{1}(g_{1}, x_{1}) \times ... \times \alpha_{n}(g_{n}, x_{n})$.
\end{enumerate}

If $G$ has additional structure such as a vector space or a topological space, then we can consider a subcategory  consisting of objects with that structure, and require the functors $\alpha$ and $\alpha_{i}$ to preserve that structure.

We will now extend the definition of disentangling given by \cite{Higgins2018} to worlds with transition algebras that are described by multi-object categories.
First, we convert \cite{Higgins2018}'s definition of disentangling into category theory terms.
We then extend this to any single-object category, and finally to any multi-object category.

The definition of a disentangled functor given previously extends to the case of any algebra, including those expressed as multi-object categories, by letting $G$ be the algebra $A/\sim$ and letting $\textbf{B}$ contain one or more objects.
These changes do not change the definition of a disentangled functor given previously.

\subsubsection{Disentangling and the equivariance condition}

We now want to see how disentangling affects the equivariance condition. Specifically, we want to see if the natural transform $\eta$ disentangles.

Let $\rho: \textbf{B}(A/\sim) \to W$ and $\tau: \textbf{B}(A/\sim) \to Z$ be disentangled functors of the actions $*_{W}: (A/\sim) \times W \to W$ and $*_{Z}: (A/\sim) \times Z \to Z$ respectively.

By definition, there are decompositions $W = \prod_{i=1}^{n} W_{i}$ and $Z = \prod_{i=1}^{n} Z_{i}$ where each $W_{i}$ (respectively $Z_{i}$) is fixed by the sub-functor $\rho_{j\neq i}$ (respectively $\tau_{j\neq i}$) and is only affected by the functor $\rho_{i}$ (respectively $\tau_{i}$).
Now let $\eta: W \to Z$ be a natural transform between $\rho$ and $\tau$ (\textit{i.e.}, for each $a, w \in (A/\sim) \times W$, we have $\eta(\rho(a, w)) = \tau(a, \eta(w))$).
We want to show that $\eta$ is itself disentangled with respect to the decomposition of $\textbf{B}(A/\sim)$.

Let $w = (w_{1},..., w_{n}) \in W$ be a point in the decomposition of $W$, let $a = (a_{1},...,a_{n}) \in A/\sim$ be a point of decomposition of $\textbf{B}(A/\sim)$, and let $a_{i} \in (A/\sim)_{i}$ be an element in the $i$th factor of $A/\sim$.
Say we have a decomposition of $\eta$ such that $\eta = (\eta_{1}, ..., \eta_{n})$ where $\eta_{i}$ is the $i$th component of $\eta$.
Therefore, we have $\eta(w) = (\eta_{1}(w_{1}), ..., \eta_{n}(w_{n}))$ where $\eta_{i}(w) \in Z_{i}$.
Consider

\begin{align*}
    \eta(\rho(a_{i}, w)) &= \eta((\rho_{1}(a_{i}, w_{1}),..., \rho_{i}(a_{i}, w_{i}), ..., \rho_{n}(a_{i}, w_{n}))) \\
    &= (\eta_{1}(\rho_{1}(a_{i}, w_{1})), ..., \eta_{i}(\rho_{i}(a_{i}, w_{i})), ... \eta_{n}(\rho_{n}(a_{i}, w_{n})))
\end{align*}

Since $W_{i}$ is fixed by the action of $(A/\sim)_{j \neq i}$, we have $\rho_{j}(a_{i}, w_{j}) = w_{j}$ for $i \neq j$, and so:
\begin{align}
    \eta(\rho(a_{i}, w)) = (\eta_{1}(w_{1}), ..., \eta(\rho_{i}(a_{i}, w_{i})), ..., \eta_{n}(w_{n}))
    \label{eqn:disentangling-equivalence-W}
\end{align}

Now consider
\begin{align*}
    \tau(a_{i}, \eta(w)) &= (\tau_{1}(a_{i}, \eta_{1}(w_{1})),..., \tau_{i}(a_{i}, \eta_{i}(w_{i})), ..., \tau_{n}(a_{i}, \eta_{n}(w_{n})))
\end{align*}

Since $Z_{i}$ is fixed by the action of $(A/\sim)_{j \neq i}$, we have $\tau_{j}(a_{i}, \eta(w_{j})) = \eta(w_{j})$ for $i \neq j$, and so:
\begin{align}
    \tau(a_{i}, \eta(w)) = (\eta_{1}(w_{1}),..., \tau_{i}(a_{i}, \eta_{i}(w_{i})), ..., \eta_{n}(w_{n}))
    \label{eqn:disentangling-equivalence-Z}
\end{align}

We can see from combining equations \ref{eqn:disentangling-equivalence-W} and \ref{eqn:disentangling-equivalence-Z} that if $\eta_{i}(\rho_{i}(a_{i}, w_{i})) = \tau_{i}(a_{i}, \eta_{i}(w_{i}))$ (\textit{i.e.}, the relevant diagrams commute for each component $\eta_{i}$ of $\eta$), then $\eta_{i}$ is disentangled with respect to the action of $(A/\sim)_{i}$ on $W_{i}$ and $Z_{i}$, and therefore $\eta$ is itself disentangled with respect to the decomposition of $A/\sim$.

We have shown that $\eta$ can be decomposed into sub-natural transforms $\eta_{1}, ..., \eta_{n}$, where each $\eta_{i}$ is a natural transform between sub-functions $\rho_{i}$ and $\tau_{i}$.
Each $\eta_{i}: W_{i} \to Z_{i}$ preserves the structure of the corresponding subcategory of $W$ and $Z$; therefore, if $W_{i}$ and $Z_{i}$ have additional structure, then the sub-natural transform $\eta_{i}$ must preserve that structure.
This opens up the possibility of a decomposition of $A/\sim$ and therefore $W$, $Z$, $\rho$, $\tau$, and $\eta$ such that some additional structure is confined to a single set of components $(A/\sim)_{i}$, $W_{i}$, $Z_{i}$, $\rho_{i}$, $\tau_{i}$, and $\eta_{i}$.

\paragraph{Components of $\eta$ when $\textbf{B}(A/\sim)$ is a multi-object category}

If $\textbf{B}(A/\sim)$ is a multi-object category, then the components of the sub-natural transform $\eta_{i}$ between functors $\rho_{i}$ and $\tau_{i}$ are ${\eta_{i}}_{b_{j}}: \rho_{i}(b_{j}) \to \tau_{i}(b_{j})$ for each object $b_{j} \in \textbf{B}$.

\medskip
 In this section, we first generalised \cite{Higgins2018}'s equivariance condition using category theory; to do so, we converted our symmetry-based representation argument into category theory - this gave us insight into the relationship between the algebraic form and the categorical form.
Category theory naturally generalises the group-equivariance condition given by \citep{Higgins2018} to any algebraic structure that has a categorical interpretation.
We then showed that a form of the equivariance condition exists for a world with transformations that form \textit{any} algebra.
We also demonstrated that the equivariance condition is, in fact, a fundamental feature of category theory: the natural transform.

Next, we converted \cite{Higgins2018}'s definition of disentangling into category theory terms; category theory then provided a natural generalisation of the definition of disentangling to worlds with transformations that form any algebra.
Finally, we explored the interplay between the generalised equivariance condition and the generalised definition of disentangling.
We concluded that disentangled sub-algebras can each have their own individual equivariance sub-conditions, and thus, the learning of these sub-algebras, as well as their applications, can be treated independently.
This result has important implications for learning algorithms; for example, since each disentangled subspace has its own individual equivariance condition, the learning of each subspace is also independent, and so different learning algorithms could be used on each disentangled subspace.

\section{Discussion $\&$ conclusion}\label{sec:discussion and conclusion}

The question of what features of the world should be present in a ‘good’ representation that improves the performance of an artificial agent for a variety of tasks is key in representation learning.  \citep{Higgins2018} proposes that the symmetries of the world are important structures that should be present in an agent’s representation of that world. They formalise their proposal using group theory as SBDRL, which is made up of a disentangling condition that defines transformations as commutative subgroups and a group equivariant condition.

In this paper, we have taken that programme one step further and posit that the relationships of transformations of the world due to the actions of the agent should be included in the agent’s representation of the world and not just the actions that form group symmetries. We then used this framework to derive and identify limitations in SBDRL. This approach reports two benefits: (1) it shows that the framework we put forward encompasses SBDRL, and (2) it identifies worlds where SBDRL cannot describe the relationships between the transformations of a world due to the actions of an agent. We use algorithmic methods, newly designed by us for this work, to extract the (not necessarily group) algebras of the actions of an agent in example worlds with transformations that cannot be fully described by SBDRL. We decided to use worlds that exhibit features commonly found in reinforcement learning scenarios because representation learning methods have been shown to improve the learning efficiency, robustness, and generalisation in such contexts. Finally, we use category theory to generalise core ideas of SBDRs, specifically the equivariance condition and the disentangling definition, to a much larger class of worlds than previously done, with more complex action algebras, including those with irreversible actions. We also propose that category theory appears to be a natural choice for the study of transformations of a world because it focuses on the transformation properties of objects per se, and the perspective that the properties of objects are completely determined by their relationship to other objects is a key result of category theory (the Yoneda Lemma).

The framework we have set out and its results have much room for expansion in future work, including the following: (1) How would we deal with transformations of the world that are not due to the actions of an agent? (2) How would partial observability affect the agent’s representations? (3) What effect would the use of continuous actions have? (4) What algebraic structures would be given by different equivalence relations? (5) Under what conditions can we disentangle reversible and irreversible actions? (6) Could our category theory generalisation of the SBDRL equivariance condition also be used to describe other uses of equivariance conditions in Artificial Intelligence, such as unifying the different equivariance conditions given by \citep{Bronstein2021} through natural transforms? (7) How can our framework be used to develop better representation learning algorithms?

Despite such valid follow-up questions, the main contributions of our proposal stand clear: for agents to interact with their environment efficiently, it is paramount that they learn “good” representations, representations that take advantage of the symmetries of the transformations brought in by their own actions to reduce the task of data processing. For instance, if we learn that a square shows rotation and reflection symmetries, we can, in principle, operate directly in the world under such assumptions, rather than exploring the outcomes of every possible action. Whereas SBDRL and similar approaches have paved the way in formalising such transformations in the case of homogeneous groups, they fall short in representing more flexible symmetric structures. That is, while the study of symmetries is becoming more prominent in representation learning, in this work we have sought to take some key results of mathematical frameworks based on symmetries and generalise them to encompass all transformations of the world due to the actions of an agent: we have presented a formal framework that provides AI developers with the right tools to generate rich representations that show symmetries as groups (as SBDRL does) and beyond. The range of applications of AI models built with this underlying framework and mechanism may greatly extend the state of the art –for instance, reinforcement learning algorithms can directly incorporate knowledge of unknown until now symmetries in so-called world models, minimizing convergence speed; in natural language processing and computer vision tasks, for both LLMs and multimodal foundation models that rely on training vast amounts of data, information and its structured representation in various types of symmetries may be incredibly useful; even in relatively simple classification tasks executed on convolutional layers one could expect an improvement in performance if such convolutions take into account the symmetries formalised in our framework; and the same can be hypothesised about generative AI models such as GANs and transformers that depend on encoding and decoding embedded representations to and from latent spaces. We hope the work presented here will stimulate further work in this direction.

In addition, we also believe that a general framework for exploring the algebra of the transformations of worlds containing an agent, as proposed in this paper, has the potential to be used as a tool in the field of explainable AI since it may enable us to predict which algebraic structures should appear in the agent’s representation of the world at the end of the learning process.

\section*{Declaration of Generative AI and AI-assisted technologies in the writing process}

During the preparation of this work, the authors used ChatGPT (GPT-3.5) in order to improve the language and readability of their work. After using this tool/service, the author(s) reviewed and edited the content as needed and take(s) full responsibility for the content of the publication.

\bibliographystyle{elsarticle-num} 
\bibliography{references}
\section*{Appendix}\label{sec:Appendix}
\begin{algorithm}[H]
\caption{AddElementToStateCayleyTable: Fill state Cayley table row and column for element $a$.}\label{alg:AddElementToStateCayleyTable}
\begin{algorithmic}[1]
    \Require $state\_cayley\_table$, $w$: initial world state, $a$.
    \State Add new row and new column labelled by $a$ to $state\_cayley\_table$.
     \For{$column\_label$ in $state\_cayley\_table$}
        \State $state\_cayley\_table[a][column\_label] \gets column\_label * (a * w)$.
    \EndFor

    \For{$row\_label$ in $state\_cayley\_table$}
        \State $state\_cayley\_table[row\_label][a] \gets a * (row\_label * w)$.
    \EndFor
\end{algorithmic}
\end{algorithm}

\begin{algorithm}[H]
\caption{SearchForNewCandidates: Search for new candidate elements in state Cayley table.}\label{alg:SearchForNewCandidates}
\begin{algorithmic}[1]
    \Require $state\_cayley\_table$, $w$: initial world state, $candidate\_cayley\_table\_elements$.
    \For{$row\_label$ in $state\_cayley\_table$}
        \For{$column\_label$ in $state\_cayley\_table$}
            \State $a_{C} \gets column\_label \circ row\_label$.
            \State $equivalents\_found \gets$ SearchForEquivalents($state\_cayley\_table$, $w$, $a_{C}$). \Comment{See Algorithm \ref{alg:SearchForEquivalents}.}
            \If{$len(equivalents\_found) \neq 0$}
                \State Add $a_{C}$ to relevant equivalence class.
            \Else
                \State Add $a_{C}$ to $candidate\_cayley\_table\_elements$.
            \EndIf
        \EndFor
    \EndFor
    \State \textbf{return} $candidate\_cayley\_table\_elements$.
\end{algorithmic}
\end{algorithm}

\begin{algorithm}[H]
\caption{SearchForEquivalents: Search for elements in Cayley table that are equivalent to $a$.}\label{alg:SearchForEquivalents}
\begin{algorithmic}[1]
    \Require $state\_cayley\_table$, $w$: initial world state, $a$.
    \State $equivalents\_found \gets$ empty list.

    \State $a\_row \gets$ empty list.\Comment{Generate state Cayley row for $a$.}
    \For{$column\_label$ in $state\_cayley\_table$}
        \If{$column\_label ==  a$}
            \State Continue to the next iteration of the for loop.
        \EndIf
        \State Append $column\_label * (a * w)$ to $a\_row$. 
    \EndFor
    
    \State $a\_column \gets$ empty list.\Comment{Generate state Cayley column for $a$.}
    \For{$row\_label$ in $state\_cayley\_table$}
        \If{$row\_label ==  a$}
            \State Continue to the next iteration of the for loop.
        \EndIf
        \State Append $a * (row\_label * w)$ to $a\_column$. 
    \EndFor

    \For{$row\_label$ in $state\_cayley\_table$}
        \If{$(a\_row\text{, }a\_column)$ == $(row\_label\text{ row, } row\_label\text{ column})$}
            \State Append $row\_label$ to $equivalents\_found$.
        \EndIf
    \EndFor
    \State \textbf{return} $equivalents\_found$.
\end{algorithmic}
\end{algorithm}

\begin{algorithm}[H]
\caption{SearchForBrokenEquivalenceClasses: Find equivalence classes that are broken by $a_{C}$.}\label{alg:SearchForBrokenEquivalenceClasses}
\begin{algorithmic}[1]
    \Require $state\_cayley\_table$, $w$: initial world state, $a_{C}$.
    \For{$row\_label$ in $state\_cayley\_table$}
        \State $ec\_label\_outcome \gets row\_label * (a_{C} * w)$.
        \For{$ec\_element$ in equivalence class labelled by $row\_label$}
            \If{$ec\_label\_outcome \neq ec\_element * (a_{C} * w)$}
                \State Create a new equivalence class labelled by $ec\_element$.
                \State Remove $ec\_element$ from the equivalence class labelled by $row\_label$.
            \EndIf
        \EndFor
    \EndFor
    \State \textbf{return:} New equivalence classes.
\end{algorithmic}
\end{algorithm}

\end{document}